\newtheorem{theorem}{Theorem}
\newtheorem{lemma}[theorem]{Lemma}
\newtheorem{proposition}[theorem]{Proposition}
\newtheorem{corollary}[theorem]{Corollary}
\theoremstyle{definition}
\newtheorem{definition}{Definition}
\newtheorem{assumption}{Assumption}
\theoremstyle{definition}
\newtheorem{remark}{Remark}
\numberwithin{equation}{section}
\newcommand{\nbb}{\mathbb{N}}
\newcommand{\bw}{\mathbf{w}}
\newcommand{\ibb}{\mathbb{I}}
\newcommand{\xcal}{\mathcal{X}}
\newcommand{\ran}{\mathrm{Range}}
\newcommand{\prox}{\mathrm{Prox}}
\newcommand{\stab}{\mathrm{stab}}
\newcommand{\zcal}{\mathcal{Z}}
\newcommand{\ycal}{\mathcal{Y}}
\newcommand{\ebb}{\mathbb{E}}
\newcommand{\rbb}{\mathbb{R}}
\newcommand{\red}{} 
\title{Fine-Grained Analysis of Stability and Generalization for Stochastic Gradient Descent\footnote{To appear in ICML 2020}}
\author{%
  Yunwen Lei$^{1}$\quad Yiming Ying$^{2}$\thanks{Corresponding author}\\[1.2pt]
  $^1$Department of Computer Science, University of Kaiserslautern,\\ 67653 Kaiserslautern, Germany\\[1.2pt]
  $^2$Department of Mathematics and Statistics, State University of New York\\ at Albany,
  12222 Albany, USA\\[1.2pt]
  \texttt{ylei@rhrk.uni-kl.de} \quad \texttt{yying@albany.edu}
}
\begin{document}

\maketitle
\linespread{1.2}

\begin{abstract}
Recently there are a considerable amount of work devoted to the study of the algorithmic stability and  generalization for stochastic gradient descent (SGD). However, the existing stability analysis requires to impose restrictive assumptions on the boundedness of gradients, smoothness and convexity of loss functions. In this paper, we provide a fine-grained analysis of stability and generalization for SGD by substantially relaxing these assumptions. Firstly, we establish stability and generalization for SGD by removing the existing bounded gradient assumptions. The key idea is the introduction of a new stability measure called {\em on-average model stability}, for which we develop novel bounds controlled by the risks of SGD iterates. This yields  generalization bounds depending on the behavior of the best model, and leads to the {\em first-ever-known fast bounds} in the low-noise setting using stability approach. Secondly, the smoothness assumption is relaxed by considering loss functions with H\"older continuous (sub)gradients for which we show that optimal bounds are still achieved by balancing computation and stability. {\red To our best knowledge, this gives the {\em first-ever-known} stability and generalization bounds for SGD with {\em non-smooth} loss functions (e.g., hinge loss).}
Finally, we study learning problems with (strongly) convex objectives but non-convex loss functions.
\end{abstract}
\section{Introduction}
%

Stochastic gradient descent (SGD) has become the workhorse behind many machine learning problems. As an iterative algorithm, SGD updates the model sequentially upon receiving a new datum with a cheap per-iteration cost, making it amenable for big data analysis.  There is a plethora of theoretical work on its convergence analysis as an optimization algorithm~\citep[e.g.][]{duchi2011adaptive,zhang2004solving,rakhlin2012making,nemirovski2009robust,lacoste2012simpler,shamir2013stochastic}.

Concurrently, there are a considerable amount of work  with focus on  its generalization analysis  ~\citep{rosasco2015learning,hardt2016train,lin2016generalization,dieuleveut2016nonparametric,ying2016online}.  For instance,  using the tool of integral operator the work \citep{rosasco2015learning,lin2017optimal,ying2008online,dieuleveut2016nonparametric} studied the excess generalization error of SGD with the least squares loss, i.e.  the difference between the true risk of SGD iterates and the best possible risk. An advantage of this approach is its ability to capture the regularity of regression functions and the capacity of hypothesis spaces.  The results were further extended in \citet{lin2016generalization,lei2018stochastic}  based on tools of  empirical processes which are able to deal with general convex functions even without a smoothness assumption. The idea is to bound the complexity of SGD iterates in a controllable manner, and apply concentration inequalities in empirical processes to control the uniform deviation between population risks and empirical risks over a ball to which the SGD iterates belong.

Recently, in the seminal work \cite{hardt2016train} the authors studied the generalization bounds of SGD via algorithmic stability \cite{bousquet2002stability,elisseeff2005stability} for convex, strongly convex and non-convex problems. This motivates several appealing work on some weaker stability measures of SGD that still suffice for guaranteeing generalization~\citep{kuzborskij2018data,zhou2018generalization,charles2018stability}. An  advantage of this stability approach is that it considers only the particular model produced by the algorithm, and can imply generalization bounds independent of the dimensionality.

However, the existing stability analysis of SGD is established under the strong assumptions on the loss function such as the boundedness of the gradient and strong smoothness.  Such assumptions are very restrictive which are not satisfied in many standard contexts.
{\red For example, the bounded gradient assumption does not hold for the simple least-squares regression, where the model parameter belongs to an unbounded domain.
The strong smoothness assumption does not hold for the popular support vector machine.}
Furthermore, the analysis in the strongly convex case requires strong convexity of each loss function which is not true for many problems such as  the important problem of least squares regression.

In this paper, we provide a fine-grained analysis of stability and generalization for SGD. {\red Our new results remove the bounded gradient assumption for differentiable loss functions and remove the strong smoothness assumptions for Lipschitz continuous loss functions, and therefore broaden the impact of the algorithmic stability approach for generalization analysis of SGD}.  In summary, our main contributions are listed as follows.

 \noindent $\bullet$~ Firstly, we study stability and generalization for SGD by removing the existing bounded gradient assumptions. The key is an introduction of a novel stability measure called on-average model stability, whose connection to generalization is established by using the smoothness of loss functions able to capture the low risks of output models for better generalization. An advantage of on-average model stability is that the corresponding bounds involve a weighted sum of empirical risks instead of the uniform Lipschitz constant. The weighted sum of empirical risks can be bounded via tools in analyzing optimization errors, which implies a key message that optimization is beneficial to generalization.
 Furthermore, our stability analysis allows us to develop generalization bounds depending on the risk of the best model. In particular, we have established fast generalization bounds $O(1/n)$ for the setting of low noises, where $n$ is the sample size. To our best knowledge, this is the first fast generalization bound of SGD based on stability approach in a low-noise setting.

\noindent $\bullet$~  Secondly, we consider loss functions with their (sub)gradients satisfying the H\"older continuity which is a much weaker condition than the strong smoothness in the literature. Although stability decreases by weakening the smoothness assumption, optimal generalization bounds can be surprisingly achieved by balancing computation and stability.
{\red In particular, we show that optimal generalization bounds can be achieved for the hinge loss by running SGD with $O(n^2)$ iterations}.
Fast learning rates are further derived in the low-noise case.

\noindent $\bullet$~ Thirdly, we study learning problems with (strongly) convex objectives but non-convex individual loss functions. The nonconvexity of loss functions makes the corresponding gradient update no longer non-expansive, and therefore the arguments in \citet{hardt2016train} do not apply. We bypass this obstacle by developing a novel quadratic inequality of the stability using only the convexity of the objective, which shows that this relaxation affects neither generalization nor computation. 


The paper is structured as follows. We discuss the related work in Section \ref{sec:work} and formulate the problem in Section \ref{sec:formulation}. The  stability and generalization for learning with convex loss functions is presented in Section \ref{sec:no-bounded-gradient}.
In Sections  \ref{sec:convex} and \ref{sec:strong}, we consider problems with relaxed convexity and relaxed strong convexity, respectively.   We conclude the paper in  Section \ref{sec:conclusion}.

\section{Related Work\label{sec:work}}
In this section, we discuss related work on algorithmic stability, stability of stochastic optimization algorithms and generalization error of SGD.

\smallskip
\textbf{Algorithmic Stability}. The study of  stability can be dated back to \citet{rogers1978finite}. A modern framework of quantifying generalization via stability was established in the  paper~\citep{bousquet2002stability}, where a concept of uniform stability was introduced and studied for empirical risk minimization (ERM) in the strongly convex setting.
This framework was then extended to study randomized learning algorithms~\citep{elisseeff2005stability}, transfer learning~\citep{kuzborskij2018data} and privacy-preserving learning~\citep{dwork2018privacy}, etc. The interplay between various notions of stability, learnability and consistency was further studied \citep{shalev2010learnability,rakhlin2005stability}.
The power of stability analysis is especially reflected by its ability in deriving optimal generalization bounds in expectation~\citep{shalev2010learnability}.
Very recently, almost optimal high-probability generalization bounds were established via the stability approach~\citep{feldman2018generalization,feldman2019high,bousquet2019sharper}. In addition to the notion of uniform stability mentioned above, various other notions of   stability were recently introduced, including uniform argument stability~\citep{liu2017algorithmic} and hypothesis set stability~\citep{foster2019hypothesis}.

\smallskip
\textbf{Stability of Stochastic Optimization Algorithms}. In the seminal paper~\citep{hardt2016train}, the co-coercivity of gradients was used to study the uniform stability of SGD in convex, strongly convex and non-convex  problems.
The uniform stability was relaxed to a weaker notion of on-average stability~\citep{shalev2010learnability}, for which the corresponding bounds of SGD can capture the impact of the risk at the initial point~\citep{kuzborskij2018data} and the variance of stochastic gradients~\citep{zhou2018generalization}. For non-convex learning problems satisfying either a gradient dominance or a quadratic growth condition, pointwise-hypothesis stabilities were studied for a class of learning algorithms that converge to global optima~\citep{charles2018stability}, which relaxes and extends the uniform stability of ERM under strongly convex objectives~\citep{bousquet2002stability}.
A fundamental stability and convergence trade-off of iterative optimization algorithms was recently established, where it was shown that a faster converging algorithm can not be too stable, and vice versa~\citep{chen2018stability}. This together with some uniform stability bounds for several first-order algorithms established there, immediately implies new convergence lower bounds for the corresponding algorithms. Algorithmic stability was also established for stochastic gradient Langevin dynamics with non-convex objectives~\citep{mou2018generalization,li2019generalization} and SGD implemented in a stagewise manner~\citep{yuan2019stagewise}.

\smallskip
\textbf{Generalization Analysis of SGD}.
A framework to study the generalization performance of large-scale stochastic optimization algorithms was established in \citet{bousquet2008tradeoffs}, where three factors influencing generalization behavior were identified as optimization errors, estimation errors and approximation errors. Uniform stability was used to establish generalization bounds $O(1/\sqrt{n})$ in expectation for SGD for convex and strongly  smooth cases~\citep{hardt2016train}. For convex and nonsmooth learning problems, generalization bounds $O(n^{-\frac{1}{3}})$ were established based on the uniform convergence principle~\citep{lin2016generalization}. An interesting observation is that an implicit regularization can be achieved without an explicit regularizer by tuning either the number of passes or the step sizes~\citep{rosasco2015learning,lin2016generalization}. For the specific least squares loss, optimal excess  generalization error bounds (up to a logarithmic factor) were established for SGD based on the integral operator approach~\citep{lin2017optimal,pillaud2018statistical}.
The above mentioned generalization results are in the form of expectation.  High-probability bounds were  established based on either an uniform-convergence approach~\citep{lei2018stochastic} or an algorithmic stability approach~\citep{feldman2019high}.
A novel combination of PAC-Bayes and algorithmic stability was used to study the generalization behavior of SGD, a promising property of which is its applications to all posterior distributions of algorithms' random hyperparameters~\citep{london2017pac}. 
\section{Problem Formulation\label{sec:formulation}}
Let $S=\{z_1,\ldots,z_n\}$ be a set of training examples independently drawn from a probability measure $\rho$ defined over a sample space $\zcal=\xcal\times\ycal$, where $\xcal\subseteq\rbb^d$ is an input space and $\ycal\subseteq\rbb$ is an output space. Our aim is to learn a prediction function parameterized by $\bw\in\Omega\subseteq\rbb^d$ to approximate the relationship between an input variable $x$ and an output variable $y$. We quantify the loss of a model $\bw$ on an example $z=(x,y)$ by $f(\bw;z)$. The corresponding  empirical and population risks are respectively given by
\[
F_S(\bw)=\frac{1}{n}\sum_{i=1}^{n}f(\bw;z_i)\quad\text{and}\quad F(\bw)=\ebb_z[f(\bw;z)].
\]
Here we use $\ebb_z[\cdot]$ to denote the expectation with respect to (w.r.t.) $z$.
In this paper, we consider stochastic learning algorithms $A$, and denote by $A(S)$ the model produced by running $A$ over the training examples $S$. 

We are interested in studying the {\em excess generalization error} $F(A(S))-F(\bw^*)$, where $\bw^*\in\arg\min_{\bw\in\Omega}F(\bw)$ is the one with the best prediction performance over $\Omega$. It can be decomposed as
\begin{align}
\ebb_{S,A}\big[F(A(S))\!-&\!F(\bw^*)\big]\!=\!\ebb_{S,A}\big[F(A(S))\!-\!F_S(A(S))\big]\notag\\
&+\ebb_{S,A}\big[F_S(A(S))-F_S(\bw^*)\big].\label{decomposition}
\end{align}
The first term is called the estimation error due to the approximation of the unknown probability measure $\rho$ based on sampling. The second term is called the optimization error induced by running an optimization algorithm to minimize the empirical objective, which can be addressed by tools in optimization theory. A popular approach to control estimation errors is to consider the stability of the algorithm, for which a widely used
stability measure is the uniform stability~\citep{elisseeff2005stability,hardt2016train}.
\begin{definition}[Uniform Stability\label{def:unif-stab}]
  A stochastic algorithm $A$ is $\epsilon$-uniformly stable if for all training datasets $S,\widetilde{S}\in\zcal^n$ that differ by at most one example, we have
  \begin{equation}\label{unif-stab}
  \sup_z\ebb_A\big[f(A(S);z)-f(A(\widetilde{S});z)\big]\leq\epsilon.
  \end{equation}
\end{definition}

The celebrated relationship between generalization and uniform stability is established in the following lemma~\citep{shalev2010learnability,hardt2016train}.
\begin{lemma}[Generalization via uniform stability\label{lem:gen-stab}]
  Let $A$ be $\epsilon$-uniformly stable. Then
  \[
  \big|\ebb_{S,A}\big[F_S(A(S))-F(A(S))\big]\big|\leq\epsilon.
  \]
\end{lemma}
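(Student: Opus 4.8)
The plan is to prove this via a symmetrization (ghost-sample) argument that exploits the i.i.d.\ structure of $S$ together with Definition~\ref{def:unif-stab}. First I would introduce an independent ``ghost'' sample $S'=\{z_1',\ldots,z_n'\}$ drawn from $\rho^n$, independently of both $S$ and the internal randomness of $A$, and for each $i$ let $S^{(i)}$ denote the dataset obtained from $S$ by replacing $z_i$ with $z_i'$. The starting observation is that, by linearity and the definition of $F_S$, we have $\ebb_{S,A}[F_S(A(S))]=\frac{1}{n}\sum_{i=1}^n\ebb_{S,A}[f(A(S);z_i)]$, while $\ebb_{S,A}[F(A(S))]=\ebb_{S,A}\ebb_{z'}[f(A(S);z')]$ since a fresh test point $z'$ is independent of $S$.

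The key step is a relabeling identity: because $z_i$ and $z_i'$ are i.i.d.\ and independent of the random bits of $A$, interchanging their roles as integration variables leaves the joint law invariant and does not affect $\ebb_A$. Renaming $z_i\leftrightarrow z_i'$ thus turns the pair $(S,z_i')$ into $(S^{(i)},z_i)$, giving $\ebb_{S,z_i'}\ebb_A[f(A(S);z_i')]=\ebb_{S,z_i'}\ebb_A[f(A(S^{(i)});z_i)]$. Averaging over $i$, and using that each $z_i'$ has the law of a fresh test point, this yields $\ebb_{S,A}[F(A(S))]=\frac{1}{n}\sum_{i=1}^n\ebb_{S,z_i'}\ebb_A[f(A(S^{(i)});z_i)]$. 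Subtracting the expression for $\ebb_{S,A}[F_S(A(S))]$ then produces
\[
\ebb_{S,A}\big[F(A(S))-F_S(A(S))\big]=\frac{1}{n}\sum_{i=1}^n\ebb_{S,z_i'}\ebb_A\big[f(A(S^{(i)});z_i)-f(A(S);z_i)\big].
\]

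It then remains only to bound each summand. Since $S$ and $S^{(i)}$ differ in exactly one example, Definition~\ref{def:unif-stab} (with the test point $z_i$ absorbed into the supremum over $z$) bounds the inner expectation by $\epsilon$, and averaging over $i$ preserves this bound; the matching lower bound $-\epsilon$, and hence the absolute value, follows by applying the same reasoning with the roles of $S$ and $S^{(i)}$ interchanged, which is legitimate since they are arbitrary neighboring datasets. I expect the only genuine obstacle to be making the relabeling identity fully rigorous: one must verify that $z_i'$ is drawn independently of both the remaining coordinates and the randomness of $A$, so that the change of variables $z_i\leftrightarrow z_i'$ truly maps the joint distribution to itself while leaving $\ebb_A$ untouched. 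Everything after that is linearity of expectation and a direct invocation of the stability assumption.
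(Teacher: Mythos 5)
Your proof is correct and follows essentially the same route the paper relies on: Lemma~\ref{lem:gen-stab} is stated there without proof (it is cited from \citet{shalev2010learnability,hardt2016train}), but the ghost-sample relabeling identity you derive is precisely the decomposition \eqref{gen-model-stab-1} used in the paper's proof of Theorem~\ref{thm:gen-model-stab}, after which one bounds each summand via the stability definition exactly as you do. Your handling of the two-sided bound (swapping the roles of $S$ and $S^{(i)}$) and of the independence needed for the change of variables is also correct, so there is no gap.
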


Throughout the paper, we restrict our interest to  a specific algorithm called projected stochastic gradient descent.
It is worth mentioning that our main results in Section \ref{sec:no-bounded-gradient} hold also when $\Omega=\rbb^d$, i.e., no projections.
\begin{definition}[Projected Stochastic Gradient Descent]
Let $\Omega\subseteq\rbb^d$ and $\Pi_{\Omega}$ denote the projection on $\Omega$.
Let $\bw_1=0\in\rbb^d$ be an initial point and $\{\eta_t\}_t$ be a sequence of positive step sizes. Projected SGD updates models by
\begin{equation}\label{SGD}
  \bw_{t+1}=\Pi_{\Omega}\big(\bw_t-\eta_t\partial f(\bw_t;z_{i_t})\big),
\end{equation}
where $\partial f(\bw_t,z_{i_t})$ denotes a subgradient of $f$ w.r.t. the first argument and $i_t$ is independently drawn from the uniform distribution over $\{1,\ldots,n\}$.
\end{definition}

Note if $f$ is differentiable, then $\partial f$ denotes the gradient of $f$ w.r.t. the first argument.
We say a function $g:\rbb^d\mapsto\rbb$ is $\sigma$-strongly convex if
\begin{equation}\label{strong-convex}
g(\bw)\geq g(\tilde{\bw})+\langle\bw-\tilde{\bw},\partial g(\tilde{\bw})\rangle+\frac{\sigma}{2}\|\bw-\tilde{\bw}\|_2^2
\end{equation}
for all $\bw,\tilde{\bw}\in\rbb^d$,
where $\langle\cdot,\cdot\rangle$ denotes the inner product and $\|\bw\|_2$ denotes the $\ell_2$ norm of $\bw=(w_1,\ldots,w_d)$, i.e., $\|\bw\|_2=\big(\sum_{j=1}^{d}w_j^2\big)^{\frac{1}{2}}$.
If \eqref{strong-convex} holds with $\sigma=0$, then we say $g$ is convex.
We denote $B\asymp \widetilde{B}$ if there are absolute constants $c_1$ and $c_2$ such that $c_1B\leq \widetilde{B}\leq c_2B$.



\section{\red Stability with Convexity\label{sec:no-bounded-gradient}}
An essential assumption to establish the uniform stability of SGD is the uniform Lipschitz continuity (boundedness of gradients) of loss functions as follows~\citep{hardt2016train,zhou2018generalization,bousquet2002stability,kuzborskij2018data,charles2018stability}.
\begin{assumption}\label{ass:lipschitz}
  We assume $\|\partial f(\bw;z)\|_2\leq G$ for all $\bw\in\Omega$ and $z\in\zcal$. 
\end{assumption}
Unfortunately, the Lipschitz constant $G$ can be very large or even infinite for some learning problems. Consider the simple least squares loss $f(\bw;z)=\frac{1}{2}(\langle\bw,x\rangle-y)^2$ with the gradient $\partial f(\bw;z)=(\langle\bw,x\rangle-y)x$. In this case the $G$-Lipschitzness of $f$ requires to set $G=\sup_{\bw\in\Omega}\sup_{z\in\zcal}\|(\langle\bw,x\rangle-y)x\|_2$, which is  infinite if $\Omega$ is unbounded.
As another example, the Lipschitz constant of deep neural networks can be prohibitively large.
In this case, existing stability bounds fail to yield meaningful generalization bounds.  Furthermore, another critical assumption in the literature is the  $L$-smoothness on $f$, i.e. for any $z$ and $\bw, \tilde{\bw}\in\rbb^d$
  \begin{equation}\label{Lip-condition}
    \big\|\partial f(\bw,z)-\partial f(\tilde{\bw},z)\big\|_2\leq L\|\bw-\tilde{\bw}\|_2.
  \end{equation}

In this section, we will remove the boundedness assumption on the  gradients for differentiable loss functions, and  establish  stability and generalization only under the  assumption where  loss functions have H\"older continuous (sub)gradients--a condition much weaker than the strong smoothness~\citep{nesterov2015universal,lei2018convergence,ying2017unregularized}.
{\red Note that the loss functions can be \emph{non-differentiable} if $\alpha=0$.}
\begin{definition}\label{def:holder}
  Let $L>0,\alpha\in[0,1]$. We say $\partial f$ is $(\alpha,L)$-H\"older continuous if for all $\bw,\tilde{\bw}\in\rbb^d$ and $z\in \mathcal{Z}$,
  \begin{equation}\label{holder-condition}
    \big\|\partial f(\bw,z)-\partial f(\tilde{\bw},z)\big\|_2\leq L\|\bw-\tilde{\bw}\|_2^\alpha.
  \end{equation}\end{definition}
If \eqref{holder-condition} holds with $\alpha=1$, then $f$ is smooth as defined by \eqref{Lip-condition}.
{\red If \eqref{holder-condition} holds with $\alpha=0$, then this amounts to saying that $f$ is Lipschitz continuous as considered in Assumption \ref{ass:lipschitz}.}
Examples of loss functions satisfying Definition \ref{def:holder} include the $q$-norm hinge loss $f(\bw;z)=\big(\max(0,1\!-\!y\langle\bw,x\rangle)\big)^q$ for classification and the $q$-th power absolute distance loss $f(\bw;z)=|y\!-\!\langle\bw,x\rangle|^q$ for regression~\citep{steinwart2008support}, whose (sub)gradients are $(q\!-\!1,C)$-H\"older continuous for some $C>0$ if $q\in[1,2]$. {\red If $q=1$, we get the hinge loss and absolute distance loss with wide applications in machine learning and statistics.}

\subsection{On-average model stability}
{\red The key to remove the bounded gradient assumption and the strong smoothness assumption} is the introduction of a novel stability measure which we refer to as the on-average model stability. We use the term ``on-average model stability" to differentiate it from on-average stability in  \citet{kearns1999algorithmic,shalev2010learnability} as  we measure stability on \emph{model parameters} $\bw$ instead of \emph{function values}. Intuitively, on-average model stability measures the on-average sensitivity of models by traversing the perturbation of each single coordinate.
\begin{definition}[On-average Model Stability\label{def:aver-stab}]
  Let $S=\{z_1,\ldots,z_n\}$ and $\widetilde{S}=\{\tilde{z}_1,\ldots,\tilde{z}_n\}$ be drawn independently from $\rho$. For any $i=1,\ldots,n$, define
  $S^{(i)}=\{z_1,\ldots,z_{i-1},\tilde{z}_i,z_{i+1},\ldots,z_n\}$ as the set formed from $S$ by replacing the $i$-th element with $\tilde{z}_i$.
  We say a randomized algorithm $A$ is $\ell_1$ on-average model $\epsilon$-stable if
  \[
  \ebb_{S,\widetilde{S},A}\Big[\frac{1}{n}\sum_{i=1}^{n}\|A(S)-A(S^{(i)})\|_2\Big]\leq\epsilon,
  \]
  and $\ell_2$ on-average model $\epsilon$-stable if
  \[
  \ebb_{S,\widetilde{S},A}\Big[\frac{1}{n}\sum_{i=1}^{n}\|A(S)-A(S^{(i)})\|_2^2\Big]\leq\epsilon^2.
  \]
\end{definition}

%

In the following theorem, we build the connection between generalization in expectation and the on-average model stabilities to be proved in Appendix \ref{sec:proof-gen-stab}. Although the generalization by $\ell_1$ on-average model stability requires Assumption \ref{ass:lipschitz}, it is removed for $\ell_2$ on-average model stability. We introduce a free parameter $\gamma$ to tune according to the property of problems.
Note we require a convexity assumption in Part (c) by considering non-smooth loss functions.
{\red Let $c_{\alpha,1}=(1+1/\alpha)^{\frac{\alpha}{1+\alpha}}L^{\frac{1}{1+\alpha}}$ if $\alpha>0$ and $c_{\alpha,1}=\sup_z\|\partial f(0;z)\|_2+L$ if $\alpha=0$.}
\begin{theorem}[Generalization via Model Stability\label{thm:gen-model-stab}]
Let $S,\widetilde{S}$ and $S^{(i)}$ be constructed as Definition \ref{def:aver-stab}. Let $\gamma>0$.
\begin{enumerate}[(a)]
  \item Let $A$ be $\ell_1$ on-average model $\epsilon$-stable and Assumption \ref{ass:lipschitz} hold. Then
    \[
  \big|\ebb_{S,A}\big[F_S(A(S))-F(A(S))\big]\big|\leq G\epsilon.
  \]
  \item If for any $z$, the function $\bw\mapsto f(\bw;z)$ is nonnegative and $L$-smooth, then
  \[
    \ebb_{S,A}\big[F(A(S))-F_S(A(S))\big]  \leq \frac{L}{\gamma}\ebb_{S,A}\big[F_S(A(S))\big]
    +\frac{L+\gamma}{2n}\sum_{i=1}^{n}\ebb_{S,\widetilde{S},A}\big[\|A(S^{(i)})-A(S)\|_2^2\big].
  \]
  \item If for any $z$, the function $\bw\mapsto f(\bw;z)$ is nonnegative, convex and $\bw\mapsto \partial f(\bw;z)$ is $(\alpha,L)$-H\"older continuous with $\alpha\in[0,1]$, then
    \[
    \ebb_{S,A}\big[F(A(S))-F_S(A(S))\big]\leq\frac{c^2_{\alpha,1}}{2\gamma}\ebb_{S,A}\Big[F^{\frac{2\alpha}{1+\alpha}}(A(S))\Big]
    +\frac{\gamma}{2n}\sum_{i=1}^{n}\ebb_{S,\widetilde{S},A}\big[\|A(S^{(i)})-A(S)\|_2^2\big].
  \]
\end{enumerate}
\end{theorem}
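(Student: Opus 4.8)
The plan is to derive all three parts from a single symmetrization identity and then distinguish the cases by which analytic property of $f$ is invoked. The starting point is a renaming argument: since $A(S)$ depends only on $S$ and is independent of $\widetilde{S}$, and since $z_i,\tilde{z}_i$ are i.i.d., the joint law of $(S,\widetilde{S})$ is invariant under swapping the names $z_i\leftrightarrow\tilde{z}_i$, which sends $S\mapsto S^{(i)}$ and $z_i\mapsto\tilde{z}_i$. This gives $\ebb_{S,A}[F(A(S))]=\frac1n\sum_{i=1}^n\ebb_{S,\widetilde{S},A}[f(A(S);\tilde{z}_i)]$ (using independence of $\tilde{z}_i$ from $A(S)$) and, after the swap, $\ebb_{S,A}[F_S(A(S))]=\frac1n\sum_{i=1}^n\ebb_{S,\widetilde{S},A}[f(A(S^{(i)});\tilde{z}_i)]$. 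Subtracting yields the master identity
\[
\ebb_{S,A}[F(A(S))-F_S(A(S))]=\frac1n\sum_{i=1}^n\ebb_{S,\widetilde{S},A}\big[f(A(S);\tilde{z}_i)-f(A(S^{(i)});\tilde{z}_i)\big],
\]
so everything reduces to controlling the per-coordinate difference $f(A(S);\tilde{z}_i)-f(A(S^{(i)});\tilde{z}_i)$. For part (a), Assumption~\ref{ass:lipschitz} gives the Lipschitz bound $f(A(S);\tilde{z}_i)-f(A(S^{(i)});\tilde{z}_i)\leq G\|A(S)-A(S^{(i)})\|_2$; summing and invoking $\ell_1$ on-average model $\epsilon$-stability produces $G\epsilon$, and the two-sided bound follows since $S$ and $S^{(i)}$ play symmetric roles.

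For parts (b) and (c) the central ingredient is a self-bounding property for nonnegative functions, which I would isolate as a preliminary lemma: a nonnegative $L$-smooth $g$ satisfies $\|\partial g(\bw)\|_2\leq\sqrt{2Lg(\bw)}$, and more generally a nonnegative function with $(\alpha,L)$-H\"older continuous (sub)gradient satisfies $\|\partial g(\bw)\|_2\leq c_{\alpha,1}g(\bw)^{\frac{\alpha}{1+\alpha}}$. This lemma is obtained by descending from $\bw$ along $-\partial g(\bw)$, expanding via the H\"older inequality $g(\bw-t\partial g(\bw))\leq g(\bw)-t\|\partial g(\bw)\|_2^2+\frac{L}{1+\alpha}t^{1+\alpha}\|\partial g(\bw)\|_2^{1+\alpha}$, and optimizing the step size $t$ subject to $g\geq 0$; the optimization reproduces exactly the constant $c_{\alpha,1}$ stated before the theorem.

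In part (b) I expand by smoothness, $f(A(S);\tilde{z}_i)-f(A(S^{(i)});\tilde{z}_i)\leq\langle\partial f(A(S^{(i)});\tilde{z}_i),A(S)-A(S^{(i)})\rangle+\frac{L}{2}\|A(S)-A(S^{(i)})\|_2^2$, bound the inner product by Cauchy--Schwarz together with $\|\partial f\|_2\leq\sqrt{2Lf}$, and apply Young's inequality $ab\leq\frac{a^2}{2\gamma}+\frac{\gamma b^2}{2}$ to split off an $f$-term with coefficient $L/\gamma$ and a squared-model-difference term with coefficient $(L+\gamma)/2$. Rewriting the $f$-term via the swap identity $\ebb[f(A(S^{(i)});\tilde{z}_i)]=\ebb[f(A(S);z_i)]$ turns it into $\ebb_{S,A}[F_S(A(S))]$, giving the claim.

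Part (c) is the most delicate and is where I expect the real work. Here smoothness is unavailable, so I replace the smooth expansion by convexity, $f(A(S);\tilde{z}_i)-f(A(S^{(i)});\tilde{z}_i)\leq\langle\partial f(A(S);\tilde{z}_i),A(S)-A(S^{(i)})\rangle$, deliberately evaluating the subgradient at $A(S)$ so that the self-bounding lemma yields $\|\partial f(A(S);\tilde{z}_i)\|_2\leq c_{\alpha,1}f(A(S);\tilde{z}_i)^{\frac{\alpha}{1+\alpha}}$; Cauchy--Schwarz and Young then leave a term proportional to $f(A(S);\tilde{z}_i)^{\frac{2\alpha}{1+\alpha}}$. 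The crux is passing from $\ebb[f(A(S);\tilde{z}_i)^{\frac{2\alpha}{1+\alpha}}]$ to $\ebb[F^{\frac{2\alpha}{1+\alpha}}(A(S))]$: since $\tilde{z}_i$ is independent of $A(S)$ and $\frac{2\alpha}{1+\alpha}\leq 1$ for $\alpha\in[0,1]$, the map $t\mapsto t^{\frac{2\alpha}{1+\alpha}}$ is concave, so Jensen over $\tilde{z}_i$ gives $\ebb_{\tilde{z}_i}[f(A(S);\tilde{z}_i)^{\frac{2\alpha}{1+\alpha}}]\leq F(A(S))^{\frac{2\alpha}{1+\alpha}}$. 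The two main obstacles are therefore establishing the H\"older self-bounding lemma with the precise constant $c_{\alpha,1}$, and arranging the estimate so the subgradient is taken at $A(S)$ rather than $A(S^{(i)})$, since it is $A(S)$ that is independent of $\tilde{z}_i$ and hence compatible with the Jensen step.
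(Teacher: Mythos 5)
Your proposal is correct and follows essentially the same route as the paper: the same symmetrization identity (yours is the mirror image, pairing $\tilde z_i$ with $A(S)$ where the paper pairs $z_i$ with $A(S^{(i)})$), the same self-bounding lemma, the same Young's-inequality split, and the same Jensen step exploiting independence and concavity of $t\mapsto t^{\frac{2\alpha}{1+\alpha}}$. The point you flag as the crux of part (c)---evaluating the subgradient at the iterate that is independent of the test point---is exactly the mechanism the paper uses, just with the roles of $S$ and $S^{(i)}$ interchanged.
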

\begin{remark}
We explain here the benefit of $\ell_2$ on-average model stability.
If $A$ is $\ell_2$ on-average model $\epsilon$-stable, then we take $\gamma=\sqrt{2L\ebb\big[F_S(A(S))\big]}/\epsilon$ in Part (b) and derive
\[
\ebb\big[F(A(S))-F_S(A(S))\big]\leq L\epsilon^2/2+\sqrt{2L\ebb\big[F_S(A(S))\big]}\epsilon.
\]
In particular, if the output model has a small empirical risk in the sense of $\ebb\big[F_S(A(S))\big]=O(1/n)$, we derive $\ebb\big[F(A(S))-F_S(A(S))\big]=O(\epsilon^2+\epsilon/\sqrt{n})$. That is, our relationship between the generalization and $\ell_2$ on-average stability allows us to exploit small risk of output model to get a generalization bound with an improved dependency on the stability measure $\epsilon$. As a comparison, the discussions based on uniform stability (Lemma \ref{lem:gen-stab}) and the $\ell_1$ on-average model stability (Part (a)) only show $\ebb\big[F(A(S))-F_S(A(S))\big]=O(\epsilon)$, which fail to exploit the low-noise condition.
We can also take $\gamma=c_{\alpha,1}\big(\ebb[F(A(S))]\big)^{\frac{\alpha}{1+\alpha}}/\epsilon$ in part (c) to derive
\[
\ebb\big[F(A(S))-F_S(A(S))\big]=O\Big(\epsilon\big(\ebb[F(A(S))]\big)^{\frac{\alpha}{1+\alpha}}\Big).
\]
The above equation can be written as an inequality of $\ebb[F(A(S))-F_S(A(S))]$ (using the sub-additivity of $t\mapsto t^{\frac{\alpha}{1+\alpha}}$), from which we derive
\[
\ebb\big[F(A(S))-F_S(A(S))\big]\!=\!O\Big(\epsilon^{1+\alpha}\!+\!\epsilon\big(\ebb[F_S(A(S))]\big)^{\frac{\alpha}{1+\alpha}}\Big).
\]
If $\ebb[F_S(A(S))]$ is small, this also implies an improved dependency of the generalization bound on $\epsilon$.
\end{remark}

\subsection{Strongly smooth case}
To justify the effectiveness of the on-average model stability, we first consider its application to learning with smooth loss functions.
We first study stability and then generalization.

\textbf{Stability bounds}.
The following theorem to be proved in Appendix \ref{sec:proof-lip-stab} establishes on-average model stability bounds in the smooth setting.
A key difference from the existing stability bounds is that the uniform Lipschitz constant $G$ is replaced by empirical risks. Since we are minimizing empirical risks by SGD, it is expected that these risks would be significantly smaller than the uniform Lipschitz constant.
Actually we will control the weighted sum of empirical risks by tools in analyzing optimization errors.
In the optimistic case with $F(\bw^*)=0$, we expect $\ebb_{S,\widetilde{S},A}[F_S(\bw_t)]=O(1/t)$, and in this case the discussion based on on-average model stability would imply significantly better generalization bounds.
The idea of introducing a parameter $p$ in \eqref{on-average-l2} is to make $(1+p/n)^t\leq e$ by setting $p=n/t$, where $e$ is the base of the nature logarithm.
\begin{theorem}[Stability bounds]\label{thm:on-average}
  Assume for all $z\in\zcal$, the map $\bw\mapsto f(\bw;z)$ is nonnegative, convex and $L$-smooth.
  Let $S,\widetilde{S}$ and $S^{(i)}$ be constructed as Definition \ref{def:aver-stab}. Let $\{\bw_t\}$ and $\{\bw_t^{(i)}\}$ be produced by \eqref{SGD} with $\eta_t\leq2/L$ based on $S$ and $S^{(i)}$, respectively. Then for any $p>0$ we have
  \begin{equation}\label{on-average}
  \ebb_{S,\widetilde{S},A}\Big[\frac{1}{n}\sum_{i=1}^{n}\|\bw_{t+1}-\bw_{t+1}^{(i)}\|_2\Big]\leq \frac{2\sqrt{2L}}{n}\sum_{j=1}^{t}\eta_j\ebb_{S,A}\Big[\sqrt{F_S(\bw_j)}\Big].
  \end{equation}
  and
  \begin{equation}\label{on-average-l2}
    \ebb_{S,\widetilde{S},A}\Big[\frac{1}{n}\sum_{i=1}^{n}\|\bw_{t+1}-\bw_{t+1}^{(i)}\|_2^2\Big]
    \leq  \frac{8(1+p^{-1})L}{n}\sum_{j=1}^{t}(1+p/n)^{t-j}\eta_j^2\ebb_{S,A}\big[F_S(\bw_j)\big].
  \end{equation}
\end{theorem}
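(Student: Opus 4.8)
The plan is to set up a one-step recursion for the two coupled SGD trajectories $\{\bw_t\}$ and $\{\bw_t^{(i)}\}$, produced by \eqref{SGD} on $S$ and on $S^{(i)}$ while sharing the same random index $i_t$ at each step, and then to unroll it. Two structural facts about nonnegative convex $L$-smooth losses drive the argument. First, by co-coercivity of the gradient, for $\eta_t\le 2/L$ the map $\bw\mapsto\bw-\eta_t\partial f(\bw;z)$ is non-expansive; since $\Pi_{\Omega}$ is non-expansive as well, the whole gradient-projection step is non-expansive. Second, the self-bounding property $\|\partial f(\bw;z)\|_2^2\le 2Lf(\bw;z)$, which follows from nonnegativity together with $L$-smoothness, lets me trade gradient norms for function values. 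This replacement is exactly what substitutes the uniform Lipschitz constant $G$ by empirical risks.

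First I would condition on the iterates at step $t$ and on $i_t$, and split into two cases. When $i_t\ne i$, both trajectories apply the same update map, so non-expansiveness gives $\|\bw_{t+1}-\bw_{t+1}^{(i)}\|_2\le\|\bw_t-\bw_t^{(i)}\|_2$. When $i_t=i$, the two updates use different examples ($z_i$ versus $\tilde{z}_i$), and the triangle inequality after projection yields $\|\bw_{t+1}-\bw_{t+1}^{(i)}\|_2\le\|\bw_t-\bw_t^{(i)}\|_2+\eta_t\|\partial f(\bw_t;z_i)\|_2+\eta_t\|\partial f(\bw_t^{(i)};\tilde{z}_i)\|_2$; invoking the self-bounding property replaces each gradient norm by $\sqrt{2Lf(\cdot)}$. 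Taking the expectation over $i_t$, which equals $i$ with probability $1/n$, produces a recursion whose inhomogeneous term involves $\frac{1}{n}\big(\sqrt{f(\bw_t;z_i)}+\sqrt{f(\bw_t^{(i)};\tilde{z}_i)}\big)$.

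For \eqref{on-average} I would average over $i$, use concavity of $\sqrt{\cdot}$ to get $\frac{1}{n}\sum_i\sqrt{f(\bw_t;z_i)}\le\sqrt{F_S(\bw_t)}$, and invoke a symmetry argument—relabeling $z_i\leftrightarrow\tilde{z}_i$ exchanges $S$ with $S^{(i)}$—to conclude $\ebb[f(\bw_t^{(i)};\tilde{z}_i)]=\ebb[f(\bw_t;z_i)]$, so the two risk terms coincide in expectation. This gives the additive recursion $u_{t+1}\le u_t+\frac{2\sqrt{2L}\,\eta_t}{n}\ebb[\sqrt{F_S(\bw_t)}]$, which telescopes from $u_1=0$. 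For the $\ell_2$ bound \eqref{on-average-l2} I square the one-step estimate and apply Young's inequality $(a+b)^2\le(1+p)a^2+(1+p^{-1})b^2$ with the free parameter $p$; after averaging over $i_t$ the coefficient of $\|\bw_t-\bw_t^{(i)}\|_2^2$ becomes exactly $1+p/n$, while the squared gradient term, bounded via $(x+y)^2\le 2x^2+2y^2$ and the self-bounding property, contributes the factor $8(1+p^{-1})L\eta_t^2/n$ once the same symmetry identity is used. Unrolling the geometric recursion $v_{t+1}\le(1+p/n)v_t+\frac{8(1+p^{-1})L\eta_t^2}{n}\ebb[F_S(\bw_t)]$ from $v_1=0$ yields the stated $(1+p/n)^{t-j}$ weights.

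The main obstacle I anticipate is bookkeeping the three nested sources of randomness—the sample pair $(S,\widetilde{S})$, the SGD indices, and the averaging over the perturbation coordinate $i$—so that the symmetry identity $\ebb[f(\bw_t^{(i)};\tilde{z}_i)]=\ebb[f(\bw_t;z_i)]$ is applied legitimately, and choosing the Young parameter as $p$ (rather than $p/n$) so that the per-step expansion factor lands exactly on $1+p/n$. Getting either step wrong would corrupt the constants and the crucial $(1+p/n)^t\le e$ scaling (obtained by setting $p=n/t$) that makes the resulting generalization bound useful.
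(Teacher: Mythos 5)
Your proposal is correct and follows essentially the same route as the paper's proof: the case split on $i_t=i$ versus $i_t\neq i$, non-expansiveness of the gradient update for $\eta_t\le 2/L$, the self-bounding property $\|\partial f(\bw;z)\|_2^2\le 2Lf(\bw;z)$, the symmetry identity $\ebb[f(\bw_t^{(i)};\tilde z_i)]=\ebb[f(\bw_t;z_i)]$, and the $(1+p)a^2+(1+p^{-1})b^2$ splitting that yields the averaged expansion factor $1+p/n$ are exactly the ingredients of Lemma \ref{lem:on-average} and its unrolling. The only (immaterial) difference is that you average over the perturbed coordinate $i$ before telescoping, whereas the paper telescopes the per-$i$ recursion first and applies Jensen's inequality at the end.
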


\begin{remark}\label{rem:kuzborskij}
\citet{kuzborskij2018data} developed an interesting on-average stability bound $O(\frac{\tilde{\sigma}}{n}\sum_{j=1}^{t}\eta_j)$ under the bounded variance assumption $\ebb_{S,z}\big[\|\partial f(\bw_t;z)-\partial F(\bw_t;z)\|_2^2\big]\leq\tilde{\sigma}^2$ for all $t$. Although this  bound successfully replaces the uniform Lipschitz constant by the milder uniform variance constant $\tilde{\sigma}$, the corresponding generalization analysis still requires a bounded gradient assumption.
A nice property of the stability bound in \citet{kuzborskij2018data} is that it depends on the quality of the initialization, i.e., the stability increases if we start with a good model. Our stability bound also enjoys this property. As we can see from Theorem \ref{thm:on-average}, the stability increases if we find good models with small optimization errors in the optimization process. This illustrates a key message that optimization is beneficial to improve the generalization.
\end{remark}

\begin{remark}
  The stability bounds in Theorem \ref{thm:on-average} can be extended to the non-convex case. Specifically, let assumptions of Theorem \ref{thm:on-average}, except the convexity of $\bw\mapsto f(\bw;z)$, hold. Then for any $p>0$ one gets (see Proposition~\ref{thm:on-average-non-convex})
  \begin{multline*}
    \ebb_{S,\widetilde{S},A}\Big[\frac{1}{n}\sum_{i=1}^{n}\|\bw_{t+1}-\bw_{t+1}^{(i)}\|_2^2\Big]
    \leq
    (1+p/n)(1+\eta_tL)^2
    \ebb_{S,\widetilde{S},A}\Big[\frac{1}{n}\sum_{i=1}^{n}\|\bw_{t}-\bw_{t}^{(i)}\|_2^2\Big]\\
    + \frac{8(1+p^{-1})L\eta_t^2}{n}\ebb_{S,A}\big[F_S(\bw_t)\big].
  \end{multline*}
  This result improves the recurrence relationship in \citet{hardt2016train} for uniform stability by replacing the uniform Lipschitz constant with empirical risks.
\end{remark}

\textbf{Generalization bounds}.
We now establish generalization bounds based on $\ell_2$ on-average model stability. This approach not only removes a bounded gradient assumption, but also allows us to fully exploit the smoothness of loss functions to derive bounds depending on the behavior of the best model $\bw^*$.
As we will see in Corollary \ref{cor:gen-lipschitz}, Theorem \ref{thm:gen-lipschitz} interpolates between $O(1/\sqrt{n})$ bound in the ``pessimistic'' case ($F(\bw^*)>0$) and the $O(1/n)$ bound in the ``low-noise'' case ($F(\bw^*)=0$)~\citep{srebro2010smoothness,DBLP:journals/corr/abs-2002-09769},
which is becoming more and more interesting in the deep learning era with possibly more parameters than training examples.
To our best knowledge, this is the first optimistic bound for SGD based on a stability approach.
Eq. \eqref{cor-gen-lipschitz-b} still holds if $F(\bw^*)=O(1/n)$.
The proofs are given in Appendix~\ref{sec:proof-lip-gen}.
\begin{theorem}[Generalization bounds]\label{thm:gen-lipschitz}
  Assume for all $z\in\zcal$, the function $\bw\mapsto f(\bw;z)$ is nonnegative, convex and $L$-smooth.
  Let $\{\bw_t\}$ be produced by \eqref{SGD} with nonincreasing step sizes satisfying $\eta_t\leq1/(2L)$. If $\gamma\geq1$, then
\[
\ebb_{S,A}[F(\bw_T^{(1)})]-F(\bw^*)=O\Big(\Big(\frac{1}{\gamma}+\frac{\sum_{t=1}^{T}\eta_t^2}{\sum_{t=1}^{T}\eta_t}\Big)F(\bw^*)
+\frac{1}{\sum_{t=1}^{T}\eta_t}+
\frac{\gamma(1+T/n)}{n}\big(1+\sum_{t=1}^{T}\eta_t^2F(\bw^*)\big)\Big),
\]
where $\bw_T^{(1)}=\big(\sum_{t=1}^{T}\eta_t\bw_t\big)/\sum_{t=1}^{T}\eta_t$.
\end{theorem}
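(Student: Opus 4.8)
The plan is to split the excess risk through the decomposition \eqref{decomposition}, using that $\bw^*$ does not depend on $S$ so that $\ebb_S[F_S(\bw^*)]=F(\bw^*)$. This reduces the task to bounding two pieces for the averaged output $A(S)=\bw_T^{(1)}$: the \emph{estimation error} $\ebb_{S,A}[F(\bw_T^{(1)})-F_S(\bw_T^{(1)})]$, which I would control with Theorem~\ref{thm:gen-model-stab}(b), and the \emph{optimization error} $\ebb_{S,A}[F_S(\bw_T^{(1)})-F_S(\bw^*)]$, which I would control by a direct SGD analysis. The free parameter $\gamma$ entering Theorem~\ref{thm:gen-model-stab}(b) is exactly the $\gamma$ appearing in the statement.

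For the optimization error, I would run the standard one-step recursion. Since the projection $\Pi_\Omega$ is nonexpansive and $\bw^*\in\Omega$, expanding $\|\bw_{t+1}-\bw^*\|_2^2$ gives, after taking the expectation over the index $i_t$ (so that $\ebb_{i_t}\partial f(\bw_t;z_{i_t})=\partial F_S(\bw_t)$), the bound $\ebb_{i_t}\|\bw_{t+1}-\bw^*\|_2^2\le\|\bw_t-\bw^*\|_2^2-2\eta_t\langle\bw_t-\bw^*,\partial F_S(\bw_t)\rangle+\eta_t^2\ebb_{i_t}\|\partial f(\bw_t;z_{i_t})\|_2^2$. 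Here I would invoke the \emph{self-bounding} property of nonnegative $L$-smooth functions, $\|\partial f(\bw;z)\|_2^2\le 2Lf(\bw;z)$, to replace the last term by $2L\eta_t^2 F_S(\bw_t)$, and convexity to lower bound the inner product by $F_S(\bw_t)-F_S(\bw^*)$. Splitting $F_S(\bw_t)=(F_S(\bw_t)-F_S(\bw^*))+F_S(\bw^*)$ and using $\eta_t\le 1/(2L)$ to guarantee $2\eta_t(1-L\eta_t)\ge\eta_t$, I would telescope over $t=1,\dots,T$ (with $\bw_1=0$) and take the full expectation to obtain $\sum_{t=1}^T\eta_t\,\ebb[F_S(\bw_t)-F_S(\bw^*)]\le\|\bw^*\|_2^2+2LF(\bw^*)\sum_{t=1}^T\eta_t^2$. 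Dividing by $\sum_t\eta_t$ and applying Jensen's inequality to the convex $F_S$ at the average $\bw_T^{(1)}$ then yields the optimization error $O\big((1+F(\bw^*)\sum_t\eta_t^2)/\sum_t\eta_t\big)$.

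For the estimation error, I would apply Theorem~\ref{thm:gen-model-stab}(b) to $A(S)=\bw_T^{(1)}$, which produces the two terms $\frac{L}{\gamma}\ebb[F_S(\bw_T^{(1)})]$ and $\frac{L+\gamma}{2n}\sum_i\ebb\|(\bw_T^{(1)})^{(i)}-\bw_T^{(1)}\|_2^2$. The first is handled by the optimization bound above together with $\ebb_S[F_S(\bw^*)]=F(\bw^*)$, and since $\gamma\ge1$ its contribution is absorbed into $\frac1\gamma F(\bw^*)+\frac{1}{\sum_t\eta_t}+\frac{\sum_t\eta_t^2}{\sum_t\eta_t}F(\bw^*)$. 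For the stability term I would first use Jensen's inequality for the squared norm (since $\bw_T^{(1)}$ is a convex combination of the $\bw_t$) to pass the average outside, $\|\bw_T^{(1)}-(\bw_T^{(1)})^{(i)}\|_2^2\le(\sum_t\eta_t)^{-1}\sum_t\eta_t\|\bw_t-\bw_t^{(i)}\|_2^2$, and then invoke the $\ell_2$ bound \eqref{on-average-l2} with the choice $p=n/T$ so that $(1+p/n)^{t}\le e$ and $1+p^{-1}=1+T/n$. This reduces the stability term to $O\big(\frac{1+T/n}{n}\cdot\frac{\sum_t\eta_t\sum_{j<t}\eta_j^2\ebb[F_S(\bw_j)]}{\sum_t\eta_t}\big)$, which I would simplify by the crude bound $\sum_{j<t}\le\sum_{j\le T}$ to $O\big(\frac{1+T/n}{n}\sum_j\eta_j^2\ebb[F_S(\bw_j)]\big)$.

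The main obstacle, and the step where the central message ``optimization helps generalization'' is realized, is controlling the weighted empirical-risk sum $\sum_j\eta_j^2\ebb[F_S(\bw_j)]$. I would bound it by reusing the telescoping inequality from the optimization step: writing $\ebb[F_S(\bw_j)]=\ebb[F_S(\bw_j)-F_S(\bw^*)]+F(\bw^*)$ and using $\eta_j\le1/(2L)$ to replace $\eta_j^2$ by $\eta_j/(2L)$ in the excess-risk part, I get $\sum_j\eta_j^2\ebb[F_S(\bw_j)]=O(1+F(\bw^*)\sum_j\eta_j^2)$. Substituting this into the stability term gives the final summand $O\big(\frac{\gamma(1+T/n)}{n}(1+F(\bw^*)\sum_t\eta_t^2)\big)$ after multiplying by $\frac{L+\gamma}{2}=O(\gamma)$ (here $\gamma\ge1$ is used). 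Collecting the optimization error, the $\frac{L}{\gamma}$ term, and this stability term reproduces the claimed bound. The only delicate points are the bookkeeping of the $p=n/T$ choice to keep the geometric factor bounded, and ensuring the $\eta_j^2\to\eta_j$ conversion is legitimate, both of which hinge on the step-size restriction $\eta_t\le1/(2L)$.
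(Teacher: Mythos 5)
Your architecture is essentially the paper's: the same decomposition \eqref{decomposition}, Theorem~\ref{thm:gen-model-stab}(b) for the estimation error, the $\ell_2$ stability bound \eqref{on-average-l2} with $p=n/T$ to keep $(1+p/n)^t\leq e$, the self-bounding/telescoping argument for the optimization error (this is Lemma~\ref{lem:computation-ave}(c)), and the control of the weighted empirical-risk sum to realize the ``optimization helps generalization'' message. The only structural difference is cosmetic: you apply Theorem~\ref{thm:gen-model-stab}(b) to the averaged output $\bw_T^{(1)}$ and use Jensen on $\|\cdot\|_2^2$ to pass to the per-iterate stabilities, whereas the paper applies it to each iterate $\bw_{t+1}$, multiplies the resulting inequalities by $\eta_{t+1}$, sums, and invokes Jensen on $F$ only at the end; both are valid and yield the same bound.

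There is, however, one step that does not go through as written: your bound on $\sum_j\eta_j^2\ebb[F_S(\bw_j)]$. You split off $F(\bw^*)\sum_j\eta_j^2$ and then ``replace $\eta_j^2$ by $\eta_j/(2L)$ in the excess-risk part.'' That termwise substitution requires $\ebb[F_S(\bw_j)]-F(\bw^*)\geq 0$ for each $j$, which is not guaranteed: $\bw^*$ minimizes $F$, not $F_S$, so individual excess empirical risks can be negative, and the inequality $\sum_j\eta_j^2 a_j\leq\frac{1}{2L}\sum_j\eta_j a_j$ can fail when the $a_j$ have mixed signs (the telescoped bound controls only the signed sum $\sum_j\eta_j a_j$, not its positive part). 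The paper sidesteps this by establishing \eqref{computation-ave-2} directly: multiply the one-step inequality $\eta_t f(\bw_t;z_{i_t})\leq\|\bw_t-\bw\|_2^2-\|\bw_{t+1}-\bw\|_2^2+2\eta_t f(\bw;z_{i_t})$ by $\eta_t$ \emph{before} summing, and telescope the weighted distances using the monotonicity of $\eta_t$; this yields $\sum_j\eta_j^2\ebb[F_S(\bw_j)]\leq\eta_1\|\bw^*\|_2^2+2F(\bw^*)\sum_j\eta_j^2$ with no sign issue. Since you already have the one-step recursion in hand, this is a local repair rather than a change of strategy, but as stated your argument for this sum is incomplete.
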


\begin{corollary}\label{cor:gen-lipschitz}
  Assume for all $z\in\zcal$, the function $\bw\mapsto f(\bw;z)$ is nonnegative, convex and $L$-smooth.
  \begin{enumerate}[(a)]
    \item Let $\{\bw_t\}$ be produced by \eqref{SGD} with $\eta_t=c/\sqrt{T}\leq1/(2L)$ for a constant $c>0$.
  If $T\asymp n$, then
  \begin{equation}\label{cor-gen-lipschitz-a}
  \ebb_{S,A}[F(\bw_T^{(1)})]-F(\bw^*)=O\Big(\frac{F(\bw^*)+1}{\sqrt{n}}\Big).
  \end{equation}
    \item Let $\{\bw_t\}$ be produced by \eqref{SGD} with $\eta_t=\eta_1\leq1/(2L)$. If $F(\bw^*)=0$
    and $T\asymp n$, then
    \begin{equation}\label{cor-gen-lipschitz-b}
  \ebb_{S,A}[F(\bw_T^{(1)})]-F(\bw^*)=O(1/n).
  \end{equation}
  \end{enumerate}
\end{corollary}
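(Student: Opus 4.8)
The plan is to obtain both parts directly from Theorem~\ref{thm:gen-lipschitz} by substituting the prescribed step-size sequences, evaluating the two sums $\sum_{t=1}^{T}\eta_t$ and $\sum_{t=1}^{T}\eta_t^2$, and then choosing the free parameter $\gamma$ to balance the competing terms in the bound. The whole argument is a specialization of the general estimate, so the work is in bookkeeping and in picking $\gamma$.

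For Part (a), with $\eta_t=c/\sqrt{T}$ I would first record $\sum_{t=1}^{T}\eta_t=c\sqrt{T}$ and $\sum_{t=1}^{T}\eta_t^2=c^2$, so that the ratio $\big(\sum_t\eta_t^2\big)/\big(\sum_t\eta_t\big)=c/\sqrt{T}$ and $1/\sum_t\eta_t=1/(c\sqrt{T})$. Since $T\asymp n$, the factor $1+T/n$ is $O(1)$ and $\sqrt{T}\asymp\sqrt{n}$. The bound of Theorem~\ref{thm:gen-lipschitz} then reads
\[
\ebb_{S,A}[F(\bw_T^{(1)})]-F(\bw^*)=O\Big(\big(\tfrac{1}{\gamma}+\tfrac{1}{\sqrt{n}}\big)F(\bw^*)+\tfrac{1}{\sqrt{n}}+\tfrac{\gamma}{n}\big(1+F(\bw^*)\big)\Big).
\]
The only surviving tension is between the term $F(\bw^*)/\gamma$, which prefers large $\gamma$, and the stability term $\gamma(1+F(\bw^*))/n$, which prefers small $\gamma$. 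Choosing $\gamma=\sqrt{n}$ (admissible since $\sqrt{n}\geq1$) renders both of these $O\big((1+F(\bw^*))/\sqrt{n}\big)$, and collapsing the remaining terms yields \eqref{cor-gen-lipschitz-a}.

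For Part (b) the computation simplifies sharply because $F(\bw^*)=0$ annihilates every term carrying a factor of $F(\bw^*)$, leaving only
\[
\ebb_{S,A}[F(\bw_T^{(1)})]-F(\bw^*)=O\Big(\tfrac{1}{\sum_{t=1}^{T}\eta_t}+\tfrac{\gamma(1+T/n)}{n}\Big).
\]
With the constant step size $\eta_t=\eta_1$ one has $\sum_{t=1}^{T}\eta_t=T\eta_1\asymp n$ (using $T\asymp n$), so the first term is $O(1/n)$; and $1+T/n=O(1)$ makes the second term $O(\gamma/n)$. Taking the smallest admissible value $\gamma=1$ then gives the $O(1/n)$ rate of \eqref{cor-gen-lipschitz-b}.

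I expect the only genuinely non-mechanical step to be the choice of $\gamma$ in Part (a): one must recognize that setting $\gamma\asymp\sqrt{n}$ is precisely what equalizes the $1/\gamma$ penalty against the stability growth $\gamma/n$, thereby producing the clean $O\big((F(\bw^*)+1)/\sqrt{n}\big)$ interpolation between the noisy and low-noise regimes. Everything else is routine substitution together with the observation that $T\asymp n$ keeps the factor $1+T/n$ bounded.
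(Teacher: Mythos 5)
Your proposal is correct and follows essentially the same route as the paper: compute $\sum_{t}\eta_t=c\sqrt{T}$ and $\sum_t\eta_t^2=c^2$, substitute into Theorem~\ref{thm:gen-lipschitz}, and take $\gamma=\sqrt{n}$ for Part (a) and $\gamma=1$ for Part (b). The bookkeeping and the choice of $\gamma$ both match the paper's proof exactly.
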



\begin{remark}
  Based on the stability bound in \citet{hardt2016train}, we can show $\ebb_{S,A}\big[F(\bw_T^{(1)})\big]-F(\bw^*)$ decays as
  \begin{equation}\label{gen-hardt}
    \frac{2G^2\sum_{t=1}^{T}\eta_t}{n}+O\Big(\frac{\sum_{t=1}^{T}\eta_t^2F(\bw^*)+1}{\sum_{t=1}^{T}\eta_t}\Big),
  \end{equation}
  from which one can derive the $O(1/\sqrt{n})$ bound at best even if $F(\bw^*)=0$. The improvement of our bounds over \eqref{gen-hardt} is due to the consideration of on-average model stability bounds involving empirical risks (we use the same optimization error bounds in these two approaches).
  Based on the on-average stability bound in \citet{kuzborskij2018data}, one can derive a generalization bound similar to \eqref{gen-hardt} with $G^2$ replaced by $G\tilde{\sigma}$ ($\tilde{\sigma}$ is the uniform variance constant in Remark \ref{rem:kuzborskij}), which also could not yield a fast bound $O(1/n)$ if $F(\bw^*)=0$.
\end{remark}

\begin{remark}
  We compare here our results with some fast bounds for SGD.
  Some fast convergence rates of SGD were recently derived for SGD under low noise conditions~\citep{ma2018power,bassily2018exponential,srebro2010smoothness} or growth conditions relating stochastic gradients to full gradients~\citep{vaswani2019fast}. The discussions there mainly focused on optimization errors, which are measured w.r.t. the iteration number $t$. As a comparison, our fast rates measured by $n$ are developed for generalization errors of SGD (Part (b) of Corollary \ref{cor:gen-lipschitz}), for which we need to trade-off optimization errors and estimation errors by stopping at an appropriate iteration number. Fast generalization bounds are also established for the specific least squares based on an integral operator approach~\citep{lin2017optimal,dieuleveut2017harder,mucke2019beating,pillaud2018statistical}. However, these discussions heavily depend on the structure of the square loss and require capacity assumptions in terms of the decay rate of eigenvalues for the associated integral operator. As a comparison, we consider general loss functions and do not impose a capacity assumption.
\end{remark}

\subsection{Non-smooth case}

As a further application, we apply our on-average model stability to learning with non-smooth loss functions, which have not been studied in the literature.

\medskip

\textbf{Stability bounds}.
The following theorem to be proved in Appendix \ref{sec:proof-holder-stab} establishes stability bounds. As compared to \eqref{on-average-l2}, the stability bound below involves an additional term $O(\sum_{j=1}^{t}\eta_j^{\frac{2}{1-\alpha}})$, which is the cost we pay by relaxing the smoothness condition to a H\"older continuity of (sub)gradients. It is worth mentioning that our stability bounds apply to non-differentiable loss functions including the popular hinge loss.
\begin{theorem}[Stability bounds]\label{thm:on-average-holder-maintext}
  Assume for all $z\in\zcal$, the map $\bw\mapsto f(\bw;z)$ is nonnegative, convex and $\partial f(\bw;z)$ is $(\alpha,L)$-H\"older continuous with $\alpha\in[0,1)$.
  Let $S,\widetilde{S}$ and $S^{(i)}$ be constructed in Definition \ref{def:aver-stab}. Let $\{\bw_t\}$ and $\{\bw_t^{(i)}\}$ be  produced by \eqref{SGD} based on $S$ and $S^{(i)}$, respectively. Then
  \[
      \ebb_{S,\widetilde{S},A}\Big[\frac{1}{n}\sum_{i=1}^{n}\|\bw_{t+1}-\bw_{t+1}^{(i)}\|_2^2\Big]
      =O\Big(\sum_{j=1}^{t}\eta_j^{\frac{2}{1-\alpha}}\Big)+
      O\Big(n^{-1}(1+t/n)\sum_{j=1}^{t}\eta_j^2\ebb_{S,A}\Big[F_S^{\frac{2\alpha}{1+\alpha}}(\bw_j)\Big]\Big).
  \]
\end{theorem}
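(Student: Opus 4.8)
The plan is to couple the trajectories $\{\bw_t\}$ and $\{\bw_t^{(i)}\}$ by driving them with the \emph{same} sequence of sampled indices $i_t$, and to control the deviation $\delta_t^{(i)}:=\|\bw_t-\bw_t^{(i)}\|_2$ through a one-step recursion. Since $\Pi_\Omega$ is nonexpansive, I would bound $\delta_{t+1}^{(i)}$ by the deviation of the two gradient steps and split according to whether $i_t\neq i$ or $i_t=i$. When $i_t\neq i$ both iterates see the same loss $f(\cdot;z_{i_t})$, whereas when $i_t=i$ they see $f(\cdot;z_i)$ and $f(\cdot;\tilde z_i)$; averaging over the uniform draw of $i_t$ weights these events by $(n-1)/n$ and $1/n$.

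The crux is the case $i_t\neq i$. Expanding $\|(\bw_t-\bw_t^{(i)})-\eta_t(\partial f(\bw_t;z_{i_t})-\partial f(\bw_t^{(i)};z_{i_t}))\|_2^2$ produces the cross term $-2\eta_t\langle \bw_t-\bw_t^{(i)},\,\partial f(\bw_t;z_{i_t})-\partial f(\bw_t^{(i)};z_{i_t})\rangle$ plus $\eta_t^2$ times the squared gradient difference. Merely discarding the cross term by monotonicity and invoking H\"older continuity would leave the term $\eta_t^2L^2\big(\delta_t^{(i)}\big)^{2\alpha}$, whose Young splitting forces a multiplicative factor on $\big(\delta_t^{(i)}\big)^2$ and hence an undesirable $t^{\alpha/(1-\alpha)}$ blow-up after unrolling. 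Instead I would \emph{retain} the cross term and use the generalized co-coercivity of convex functions with $(\alpha,L)$-H\"older continuous gradients, $\langle \partial g(\bw)-\partial g(\tilde\bw),\,\bw-\tilde\bw\rangle\geq c_\alpha\|\partial g(\bw)-\partial g(\tilde\bw)\|_2^{(1+\alpha)/\alpha}$ for a constant $c_\alpha$ depending on $\alpha,L$. Writing $G=\|\partial f(\bw_t;z_{i_t})-\partial f(\bw_t^{(i)};z_{i_t})\|_2$, the one-step change is at most $\big(\delta_t^{(i)}\big)^2+\max_{g\geq0}\big(\eta_t^2g^2-2c_\alpha\eta_t g^{(1+\alpha)/\alpha}\big)$. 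Because $(1+\alpha)/\alpha>2$ for $\alpha<1$, this maximum is finite for \emph{every} step size and scales as $\eta_t^{2/(1-\alpha)}$; this is exactly where the first summand $\sum_j\eta_j^{2/(1-\alpha)}$ is born, and crucially it carries no factor on $\big(\delta_t^{(i)}\big)^2$. For $\alpha=0$ I would instead bound $G\leq L$ directly, recovering the additive term $\eta_t^2L^2=\eta_t^{2/(1-0)}L^2$.

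For the case $i_t=i$ I would use nonexpansiveness and the triangle inequality to get $\delta_{t+1}^{(i)}\leq \delta_t^{(i)}+\eta_t\big(\|\partial f(\bw_t;z_i)\|_2+\|\partial f(\bw_t^{(i)};\tilde z_i)\|_2\big)$, square it with the elementary split $(a+b)^2\leq(1+p)a^2+(1+p^{-1})b^2$, and bound the gradient norms by the self-bounding property $\|\partial f(\bw;z)\|_2\leq c_{\alpha,1}f(\bw;z)^{\alpha/(1+\alpha)}$ implied by nonnegativity and H\"older continuity. Averaging $\tfrac1n\sum_i$ and applying Jensen's inequality to the concave map $t\mapsto t^{2\alpha/(1+\alpha)}$ converts $\tfrac1n\sum_i f(\bw_t;z_i)^{2\alpha/(1+\alpha)}$ into $F_S^{2\alpha/(1+\alpha)}(\bw_t)$, while the symmetry between $S$ and $S^{(i)}$ (exchanging $z_i\leftrightarrow\tilde z_i$) shows the $\tilde z_i$ term has the same expectation. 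This produces the second summand and, through $(1+p^{-1})$, the factor $(1+t/n)$.

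Finally I would assemble the two cases into the recursion
\[
u_{t+1}\leq (1+p/n)\,u_t+C_\alpha\,\eta_t^{\frac{2}{1-\alpha}}+\frac{c\,(1+p^{-1})}{n}\,\eta_t^2\,\ebb_{S,A}\big[F_S^{\frac{2\alpha}{1+\alpha}}(\bw_t)\big],
\]
where $u_t=\ebb_{S,\widetilde{S},A}\big[\tfrac1n\sum_i\|\bw_t-\bw_t^{(i)}\|_2^2\big]$ and the factor $(1+p/n)$ collects the $(n-1)/n$ and $(1+p)/n$ weights of the two cases. Unrolling from $u_1=0$ and choosing $p=n/t$ so that $(1+p/n)^{t-j}\leq e$ and $1+p^{-1}=1+t/n$ then yields the two claimed terms. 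The main obstacle is the sharp one-step estimate in the first case: establishing the generalized co-coercivity inequality from H\"older continuity and convexity and carrying out the maximization that pins down the exponent $2/(1-\alpha)$, since this is precisely what prevents the naive $\big(\delta_t^{(i)}\big)^{2\alpha}$ bound from compounding into a spurious polynomial-in-$t$ factor. A secondary point to verify is that the argument degenerates correctly at the endpoints, recovering the nonexpansive gradient step (together with the $\eta_t\leq 2/L$ restriction) as $\alpha\to1$ and the bounded-gradient-difference bound at $\alpha=0$.
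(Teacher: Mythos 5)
Your proposal is correct and follows essentially the same route as the paper: the paper proves a general bound (Theorem \ref{thm:on-average-holder}) via exactly your case split, with your ``retain the cross term and use generalized co-coercivity'' step appearing as Lemma \ref{lem:holder-expansive} (built on Lemma \ref{lem:coercivity}), the self-bounding property plus $(a+b)^2\leq(1+p)a^2+(1+p^{-1})b^2$ for the $i_t=i$ case, Jensen's inequality for $x\mapsto x^{2\alpha/(1+\alpha)}$, and finally $p=n/t$. The only cosmetic difference is that you obtain the $\eta_t^{2/(1-\alpha)}$ increment by explicitly maximizing $\eta_t^2g^2-2c_\alpha\eta_t g^{(1+\alpha)/\alpha}$ over $g\geq0$, whereas the paper reaches the same bound via Young's inequality.
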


\textbf{Generalization bounds}.
We now present generalization bounds for learning by loss functions with H\"older continuous (sub)gradients, which are specific instantiations of a general result (Proposition \ref{prop:gen-holder-up}) stated and proved in Appendix \ref{sec:proof-holder-gen}.
\begin{theorem}[Generalization bounds]\label{thm:error-holder-cor-a}
  Assume for all $z\in\zcal$, the function $\bw\mapsto f(\bw;z)$ is nonnegative, convex, and $\partial f(\bw;z)$ is $(\alpha,L)$-H\"older continuous with $\alpha\in[0,1)$. Let $\{\bw_t\}_t$ be given by \eqref{SGD} with $\eta_t=cT^{-\theta},\theta\in[0,1],c>0$.
  \begin{enumerate}[(a)]
    \item If $\alpha\geq1/2$, we can take $\theta=1/2$ and $T\asymp n$ to derive $\ebb_{S,A}[F(\bw_T^{(1)})]-F(\bw^*)=O(n^{-\frac{1}{2}})$.
    \item If $\alpha<1/2$, we can take $T\asymp n^{\frac{2-\alpha}{1+\alpha}}$ and $\theta=\frac{3-3\alpha}{2(2-\alpha)}$ to derive $\ebb_{S,A}[F(\bw_T^{(1)})]-F(\bw^*)=O(n^{-\frac{1}{2}})$.
    \item If $F(\bw^*)=0$, we take $T\asymp n^{\frac{2}{1+\alpha}}$ and $\theta=\frac{3-\alpha^2-2\alpha}{4}$ to derive
    $\ebb_{S,A}[F(\bw_T^{(1)})]-F(\bw^*)=O(n^{-\frac{1+\alpha}{2}})$.
  \end{enumerate}
\end{theorem}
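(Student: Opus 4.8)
The plan is to obtain all three rates as specializations of a single master bound (the general Proposition~\ref{prop:gen-holder-up}) that trades the on-average model stability against the SGD optimization error, and then to optimize the free parameters $\gamma,\theta$ and the horizon $T$. I would begin from the decomposition~\eqref{decomposition} applied to $A(S)=\bw_T^{(1)}$,
\[
\ebb_{S,A}[F(\bw_T^{(1)})]-F(\bw^*)=\ebb_{S,A}\big[F(\bw_T^{(1)})-F_S(\bw_T^{(1)})\big]+\ebb_{S,A}\big[F_S(\bw_T^{(1)})-F_S(\bw^*)\big],
\]
where I used $\ebb_S[F_S(\bw^*)]=F(\bw^*)$ since $\bw^*$ is independent of $S$. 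The first (estimation) term is controlled by on-average model stability and the second (optimization) term by a convex SGD analysis.

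For the estimation term I would invoke Theorem~\ref{thm:gen-model-stab}(c), valid because each loss is nonnegative, convex, and has $(\alpha,L)$-H\"older continuous (sub)gradients; this produces a risk term $\tfrac{c_{\alpha,1}^2}{2\gamma}\ebb[F^{\frac{2\alpha}{1+\alpha}}(\bw_T^{(1)})]$ and an $\ell_2$ model-stability term. Since $\bw_T^{(1)}$ is a convex combination of the iterates with weights $\eta_t/\sum_t\eta_t$, convexity of $\bw\mapsto\|\bw\|_2^2$ gives $\|\bw_T^{(1)}-(\bw_T^{(1)})^{(i)}\|_2^2\le(\sum_t\eta_t)^{-1}\sum_t\eta_t\|\bw_t-\bw_t^{(i)}\|_2^2$, reducing the stability of the averaged model to the per-iterate stability furnished by Theorem~\ref{thm:on-average-holder-maintext}. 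Substituting it controls the estimation error by $\tfrac1\gamma(\text{risk})+\tfrac{\gamma}{n}\big(\sum_j\eta_j^{\frac2{1-\alpha}}+\tfrac{1+T/n}{n}\sum_j\eta_j^2\ebb[F_S^{\frac{2\alpha}{1+\alpha}}(\bw_j)]\big)$, up to the averaging weights. Because the risk term involves $\ebb[F(\bw_T^{(1)})^{\frac{2\alpha}{1+\alpha}}]$, the inequality is self-referential in $\mathcal E:=\ebb[F(\bw_T^{(1)})]-F(\bw^*)$ through $\ebb[F(\bw_T^{(1)})]=\mathcal E+F(\bw^*)$; I would isolate $\mathcal E$ by sub-additivity of $t\mapsto t^{\frac{2\alpha}{1+\alpha}}$ together with Young's inequality, exactly as in the remark after Theorem~\ref{thm:gen-model-stab}.

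For the optimization term I would use the one-step projected-SGD inequality on the convex objective $F_S$: expand $\|\bw_{t+1}-\bw^*\|_2^2$ (using nonexpansiveness of $\Pi_\Omega$ and $\bw^*\in\Omega$), take the conditional expectation over $i_t$, apply convexity $\langle\bw_t-\bw^*,\partial F_S(\bw_t)\rangle\ge F_S(\bw_t)-F_S(\bw^*)$, and---crucially---the self-bounding property $\ebb_{i_t}\|\partial f(\bw_t;z_{i_t})\|_2^2\le c_{\alpha,1}^2 F_S(\bw_t)^{\frac{2\alpha}{1+\alpha}}$, which follows from nonnegativity and H\"older continuity together with Jensen's inequality. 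The main obstacle is that both the optimization and stability pieces are expressed through the fractional moments $\ebb[F_S(\bw_j)^{\frac{2\alpha}{1+\alpha}}]$, making the whole bound recursive. I would break the recursion by splitting $F_S(\bw_t)^{\frac{2\alpha}{1+\alpha}}$ via sub-additivity around $F_S(\bw^*)$: the $F_S(\bw^*)^{\frac{2\alpha}{1+\alpha}}$ part is handled by Jensen ($\le F(\bw^*)^{\frac{2\alpha}{1+\alpha}}$) and yields an $F(\bw^*)$-dependent contribution carrying the shrinking coefficient $\sum_t\eta_t^2/\sum_t\eta_t$, while the deviation $|F_S(\bw_t)-F_S(\bw^*)|^{\frac{2\alpha}{1+\alpha}}$ is linearized by Young's inequality and absorbed into the leading descent term $\sum_t\eta_t(F_S(\bw_t)-F_S(\bw^*))$ (the absorption works because $\eta_t=cT^{-\theta}$ is constant in $t$, so $\eta_t^2=\eta_t\cdot\eta_t$). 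This leaves a pure step-size penalty whose exponent is governed by $\big(1-\tfrac{2\alpha}{1+\alpha}\big)^{-1}=\tfrac{1+\alpha}{1-\alpha}$, which is the source of the $1-\alpha$ factors appearing later. Telescoping and dividing by $\sum_t\eta_t$ then bounds $\ebb[F_S(\bw_T^{(1)})-F_S(\bw^*)]$ by $\|\bw^*\|_2^2/\sum_t\eta_t$ plus these two contributions.

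Assembling the pieces gives Proposition~\ref{prop:gen-holder-up}, in which the excess error is a sum of an optimization term $\asymp 1/\sum_t\eta_t$, a non-smoothness penalty coming from $\sum_j\eta_j^{\frac2{1-\alpha}}$, and a stability term $\asymp\gamma(1+T/n)/n$, traded against the $1/\gamma$ risk term. I would finish by substituting $\eta_t=cT^{-\theta}$, so that $\sum_t\eta_t\asymp T^{1-\theta}$, $\sum_t\eta_t^2\asymp T^{1-2\theta}$ and $\sum_j\eta_j^{\frac2{1-\alpha}}\asymp T^{1-\frac{2\theta}{1-\alpha}}$, optimizing first over $\gamma$ and then choosing $\theta$ and $T\asymp n^{\beta}$ to equalize the competing powers of $n$. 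The three cases correspond to which term dominates: for $\alpha\ge1/2$ the smooth-like balance $\theta=1/2$, $T\asymp n$ already gives $O(n^{-1/2})$; for $\alpha<1/2$ the non-smoothness penalty forces the larger horizon $T\asymp n^{\frac{2-\alpha}{1+\alpha}}$ with $\theta=\frac{3-3\alpha}{2(2-\alpha)}$; and when $F(\bw^*)=0$ the risk terms vanish to leading order, permitting the faster rate $O(n^{-\frac{1+\alpha}2})$ with $T\asymp n^{\frac2{1+\alpha}}$ and $\theta=\frac{3-\alpha^2-2\alpha}4$. What remains is the routine bookkeeping of these exponents.
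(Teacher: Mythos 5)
Your proposal is correct and follows essentially the same route as the paper: the error decomposition \eqref{decomposition}, the $\ell_2$ on-average model stability bound of Theorem \ref{thm:on-average-holder-maintext} fed into Part (c) of Theorem \ref{thm:gen-model-stab} (with the self-referential risk term resolved by sub-additivity of $t\mapsto t^{\frac{2\alpha}{1+\alpha}}$ and Young's inequality), the optimization bound of Lemma \ref{lem:computation-ave}(d) built on the self-bounding property, and a final balancing of exponents exactly as in Proposition \ref{prop:gen-holder-up}. The only cosmetic difference is that you apply the generalization--stability connection once to the averaged iterate $\bw_T^{(1)}$ and push the stability to the individual iterates by Jensen, whereas the paper applies it to each $\bw_{t+1}$ and then averages the resulting bounds; this changes nothing in the resulting rates.
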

\begin{remark}
  Although relaxing smoothness affects stability by introducing $O(\sum_{j=1}^{t}\eta_j^{\frac{2}{1-\alpha}})$ in the stability bound, we achieve a generalization bound similar to the smooth case with a similar computation cost if $\alpha\geq1/2$. For $\alpha<1/2$, a minimax optimal generalization bound $O(n^{-\frac{1}{2}})$~\citep{agarwal2012information} can be also achieved with more computation cost as $T\asymp n^{\frac{2-\alpha}{1+\alpha}}$.
  {\red In particular, if $\alpha=0$ we develop the optimal generalization bounds $O(n^{-\frac{1}{2}})$ for SGD with $T\asymp n^2$ iterations. To our best knowledge, this gives the first generalization bounds for SGD with non-smooth loss functions (e.g., hinge loss) based on stability analysis.}
  Analogous to the smooth case, we can derive generalization bounds better than $O(n^{-\frac{1}{2}})$ in the case with low noises. To our best knowledge, this is the first optimistic generalization bound for SGD with non-smooth loss functions. 
\end{remark}

\begin{remark}
  We can extend our discussion to ERM. If $F_S$ is $\sigma$-strongly convex and $\partial f(\bw;z)$ is $(\alpha,L)$-H\"older continuous, we can apply the on-average model stability to show  (see Proposition \ref{prop:erm})\[\ebb_S\big[F(A(S))-F_S(A(S))\big]=O(\ebb_{S}\big[F^{\frac{2\alpha}{1+\alpha}}(A(S))\big]/(n\sigma)),\]where $A(S)=\arg\min_{\bw\in\rbb^d}F_S(\bw)$.
  This extends the error bounds developed for ERM with strongly-smooth loss functions~\citep{srebro2010smoothness,shalev2014understanding} to the non-smooth case, and removes the $G$-admissibility assumption in \citet{bousquet2002stability}. In a low-noise case with a small $\ebb_{S}\big[F(A(S))\big]$, the discussion based on an on-average stability can imply optimistic generalization bounds for ERM.
\end{remark}
\section{Stability with Relaxed Convexity\label{sec:convex}}
We now turn to stability and generalization of SGD for learning problems where the empirical objective $F_S$ is convex but each loss function $f(\bw;z)$ may be non-convex. For simplicity, we impose Assumption \ref{ass:lipschitz} here and use the arguments based on the uniform stability.
The proofs of Theorem \ref{thm:stab-convex} and Theorem \ref{thm:error-convex} are given in Appendix \ref{sec:proof-convex}.
\begin{theorem}\label{thm:stab-convex}
  Let Assumption \ref{ass:lipschitz} hold.
  Assume for all $z\in\zcal$, the function $\bw\mapsto f(\bw;z)$ is $L$-smooth.
  Let $S=\{z_1,\ldots,z_n\}$ and $\widetilde{S}=\{\tilde{z}_1,\ldots,\tilde{z}_n\}$ be two sets of training examples that differ by a single example.
  Let $\{\bw_t\}_t$ and $\{\tilde{\bw}_t\}_t$ be produced by \eqref{SGD} based on $S$ and $\widetilde{S}$, respectively.  If for all $S$, $F_S$ is convex, then
  \[
  \Big(\ebb_A\big[\|\bw_{t+1}\!-\!\tilde{\bw}_{t+1}\|_2^2\big]\Big)^{\frac{1}{2}}\!\leq \! 4GC_t\sum_{j=1}^{t}\frac{\eta_j}{n}\!+\!2G\Big(C_t\sum_{j=1}^{t}\frac{\eta_j^2}{n}\Big)^{\frac{1}{2}},
  \]
  where we introduce $C_t=\prod_{\tilde{j}=1}^{t}\Big(1+L^2\eta_{\tilde{j}}^2\Big)$.
\end{theorem}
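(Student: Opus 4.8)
The plan is to analyze a single coupled step of the two SGD runs, take the expectation over the random index $i_t$ \emph{before} anything else so that the convexity of the average $F_S$ (rather than of each individual loss) can be used, and finally collapse everything to a scalar recursion for $a_t:=\big(\ebb_A[\|\bw_t-\tilde{\bw}_t\|_2^2]\big)^{1/2}$. First I would set $\delta_t=\bw_t-\tilde{\bw}_t$, let $i_0$ denote the unique coordinate at which $S$ and $\widetilde{S}$ differ, and couple the two runs through a common index sequence $\{i_t\}$. Since $\Pi_\Omega$ is non-expansive I can discard the projection and bound $\|\delta_{t+1}\|_2^2\le\|\delta_t-\eta_t(g_t-\tilde{g}_t)\|_2^2$, where $g_t=\partial f(\bw_t;z_{i_t})$ and $\tilde{g}_t$ is the corresponding gradient for $\widetilde{S}$. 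Expanding the square produces the cross term $-2\eta_t\langle\delta_t,g_t-\tilde{g}_t\rangle$ and the quadratic term $\eta_t^2\|g_t-\tilde{g}_t\|_2^2$.

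The crucial step is to average over the uniform index $i_t$. For the cross term, summing over $i_t$ and recombining yields $\langle\delta_t,\partial F_S(\bw_t)-\partial F_S(\tilde{\bw}_t)\rangle$ plus a single $1/n$ boundary term supported on $i_0$; the former is nonnegative by monotonicity of the gradient of the convex function $F_S$, while the latter is bounded in absolute value by $\tfrac{2G}{n}\|\delta_t\|_2$ using Assumption~\ref{ass:lipschitz}. This is exactly the point where only convexity of the average is required, since the individual gradient updates need not be non-expansive. For the quadratic term, $L$-smoothness bounds each of the $n-1$ summands with $i\neq i_0$ by $L^2\|\delta_t\|_2^2$, while the $i_0$ summand is bounded by $(2G)^2$ via Assumption~\ref{ass:lipschitz}, giving $\ebb_{i_t}\|g_t-\tilde{g}_t\|_2^2\le L^2\|\delta_t\|_2^2+4G^2/n$. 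Combining these, taking full expectation, and using $\ebb_A\|\delta_t\|_2\le a_t$ (Jensen), produces the scalar quadratic inequality
\[
a_{t+1}^2\le(1+L^2\eta_t^2)a_t^2+\frac{4G\eta_t}{n}a_t+\frac{4G^2\eta_t^2}{n}.
\]

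The main remaining obstacle is to unwind this nonlinear recursion into the stated bound $a_{t+1}\le P_t+Q_t$, where $P_t=4GC_t\sum_{j=1}^t\eta_j/n$ and $Q_t=2G\big(C_t\sum_{j=1}^t\eta_j^2/n\big)^{1/2}$ carry the two forcing terms \emph{separately}. I would first record the one-step relations $P_t=(1+L^2\eta_t^2)P_{t-1}+4GC_t\eta_t/n$ and $Q_t^2=(1+L^2\eta_t^2)Q_{t-1}^2+4G^2C_t\eta_t^2/n$, both immediate from $C_t=(1+L^2\eta_t^2)C_{t-1}$, and then prove $a_s\le P_{s-1}+Q_{s-1}$ by induction on $s$, the base case holding since $\bw_1=\tilde{\bw}_1=0$ gives $a_1=0$. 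The induction step expands $(P_t+Q_t)^2$ and matches it against the recursion term by term: the factors $(1+L^2\eta_t^2)\ge1$ and $C_t\ge1$ dominate the quadratic and constant forcing, and the linear forcing $\tfrac{4G\eta_t}{n}(P_{t-1}+Q_{t-1})$ is absorbed into the cross terms $2(1+L^2\eta_t^2)P_{t-1}\mu$ and $2\mu Q_t$ with $\mu=4GC_t\eta_t/n$, using $Q_{t-1}\le Q_t$.

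The genuinely delicate point is precisely this last step: the $\Theta(1/n)$ contribution (from the boundary cross term) and the $\Theta(1/\sqrt{n})$ contribution (from the $i_0$ gradient-variance term) must be tracked in the two distinct summation structures $\sum_j\eta_j/n$ and $\big(\sum_j\eta_j^2/n\big)^{1/2}$, and a crude split via $\sqrt{x+y}\le\sqrt{x}+\sqrt{y}$ would collapse the second into the strictly larger $\sum_j\eta_j/\sqrt{n}$ and lose the stated rate. Keeping the variance term inside the square across the unrolling, as the induction above does, is what preserves the sharp form; everything else is routine bookkeeping.
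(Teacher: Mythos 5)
Your proposal is correct and follows essentially the same route as the paper's proof: the one-step analysis (averaging over $i_t$ before invoking convexity, splitting the cross term into a monotone part $\langle\bw_t-\tilde{\bw}_t,\partial F_S(\bw_t)-\partial F_S(\tilde{\bw}_t)\rangle\geq 0$ plus a $2G/n$ boundary part, and bounding the gradient-difference variance by $L^2\|\bw_t-\tilde{\bw}_t\|_2^2+4G^2/n$) reproduces the paper's recursion \eqref{stab-4} exactly. The only difference is in the bookkeeping at the end --- the paper unrolls the recursion, passes to $\max_{1\leq\tilde{j}\leq t+1}$ and solves the resulting quadratic inequality $x^2\leq bx+c$, whereas you verify the closed form $P_t+Q_t$ directly by induction --- and both closings are valid and yield the same constants.
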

\begin{remark}
  The derivation of uniform stability bounds in \citet{hardt2016train} is based on the non-expansiveness of the operator $\bw\mapsto\bw-\partial f(\bw;z)$, which requires the convexity of $\bw\mapsto f(\bw;z)$ for all $z$.
  Theorem \ref{thm:stab-convex} relaxes this convexity condition to a milder convexity condition on $F_S$.
  If $\sum_{j=1}^{\infty}\eta_j^2<\infty$, the stability bounds in Theorem~\ref{thm:stab-convex} become $O(n^{-1}\sum_{j=1}^{t}\eta_j+n^{-\frac{1}{2}})$ since $C_t<\infty$. 
\end{remark}
  By the proof, Theorem \ref{thm:stab-convex} holds if the convexity condition is replaced by
  $\ebb_A\big[\langle\bw_t-\tilde{\bw}_t,\partial F_S(\bw_t)-\partial F_S(\tilde{\bw}_t)\rangle\big]
  \geq -C\eta_t\ebb_A[\|\bw_t-\tilde{\bw}_t\|_2^2]$
  for some $C$ and all $t\in\nbb$.

As shown below, minimax optimal generalization bounds can be achieved for step sizes $\eta_t=\eta_1t^{-\theta}$ for all $\theta\in(1/2,1)$
as well as the step sizes $\eta_t\asymp1/\sqrt{T}$ with $T\asymp n$.
\begin{theorem}\label{thm:error-convex}
  Let Assumption \ref{ass:lipschitz} hold. Assume for all $z\in\zcal$, the function $\bw\mapsto f(\bw;z)$ is $L$-smooth.
  Let $\{\bw_t\}_t$ be produced by \eqref{SGD}. Suppose for all $S$, $F_S$ is convex.
  \begin{enumerate}[(a)]
    \item If $\eta_t=\eta_1t^{-\theta},\theta\in(1/2,1)$, then
    \[
    \!\!\!\ebb_{S,A}[F(\bw_T^{(1)})]-F(\bw^*)=O\Big(n^{-1}T^{1-\theta}+n^{-\frac{1}{2}}+T^{\theta-1}\Big).
    \]
  If $T\!\asymp\! n^{\frac{1}{2-2\theta}}$, then $\ebb_{S,A}[F(\bw_T^{(1)})]\!-\!F(\bw^*)=O(n^{-\frac{1}{2}})$.
  \item If $\eta_t=c/\sqrt{T}$ for some $c>0$ and $T\asymp n$, then $\ebb_{S,A}[F(\bw_T^{(1)})]-F(\bw^*)=O(n^{-\frac{1}{2}})$.
  \end{enumerate}
\end{theorem}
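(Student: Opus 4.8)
The plan is to split the excess risk through the decomposition \eqref{decomposition} into an estimation (generalization) error $\ebb_{S,A}[F(\bw_T^{(1)})-F_S(\bw_T^{(1)})]$ and an optimization error $\ebb_{S,A}[F_S(\bw_T^{(1)})-F_S(\bw^*)]$, using that $\ebb_S[F_S(\bw^*)]=F(\bw^*)$ because $\bw^*$ is independent of $S$. I would bound these two terms separately, the first by the stability machinery and the second by a standard convex-SGD argument, and then choose $T$ to balance them.

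For the estimation error the idea is to convert the $\ell_2$ argument stability of Theorem \ref{thm:stab-convex} into uniform stability via Assumption \ref{ass:lipschitz}. First I would control the stability of the averaged iterate by the triangle and Jensen inequalities,
\[
\ebb_A[\|\bw_T^{(1)}-\tilde{\bw}_T^{(1)}\|_2]\leq\frac{\sum_t\eta_t\ebb_A[\|\bw_t-\tilde{\bw}_t\|_2]}{\sum_t\eta_t}\leq\frac{\sum_t\eta_t\big(\ebb_A[\|\bw_t-\tilde{\bw}_t\|_2^2]\big)^{1/2}}{\sum_t\eta_t},
\]
and then insert the per-iterate bound from Theorem \ref{thm:stab-convex}. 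Since $\theta>1/2$ forces $\sum_j\eta_j^2<\infty$ (and constant step sizes give $C_t\leq e^{L^2c^2}$), the product factor $C_t$ stays bounded, so each iterate has argument stability $O(n^{-1}\sum_{j\le t}\eta_j+n^{-1/2})$. The $G$-Lipschitz property then yields uniform stability $\epsilon=G\,\ebb_A[\|\bw_T^{(1)}-\tilde{\bw}_T^{(1)}\|_2]$, and Lemma \ref{lem:gen-stab} bounds the estimation error by $\epsilon$. Evaluating the $\eta_t$-weighted average of $\sum_{j\le t}\eta_j$ for $\eta_t=\eta_1t^{-\theta}$ produces the $O(n^{-1}T^{1-\theta}+n^{-1/2})$ contribution.

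For the optimization error I would exploit the convexity of $F_S$ together with the fact that $\partial f(\bw_t;z_{i_t})$ is an unbiased estimate of $\partial F_S(\bw_t)$ (since $i_t$ is uniform on $\{1,\dots,n\}$) and $\|\partial f\|_2\le G$. Expanding $\|\bw_{t+1}-\bw^*\|_2^2$, using non-expansiveness of the projection $\Pi_\Omega$ (with $\bw^*\in\Omega$), taking conditional expectations, lower bounding $\langle\partial F_S(\bw_t),\bw_t-\bw^*\rangle$ by $F_S(\bw_t)-F_S(\bw^*)$ via convexity, then telescoping and applying Jensen to the average $\bw_T^{(1)}$, I obtain
\[
\ebb[F_S(\bw_T^{(1)})-F_S(\bw^*)]\le\frac{\|\bw^*\|_2^2+G^2\sum_t\eta_t^2}{2\sum_t\eta_t}=O\Big(\frac{1+\sum_t\eta_t^2}{\sum_t\eta_t}\Big),
\]
which is $O(T^{\theta-1})$ in the polynomially decaying case and $O(T^{-1/2})$ in the constant case.

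Summing the two contributions gives Part (a), and the prescribed choices $T\asymp n^{1/(2-2\theta)}$ (respectively $T\asymp n$ with $\eta_t=c/\sqrt{T}$) balance the estimation term $n^{-1}T^{1-\theta}$ against the optimization term $T^{\theta-1}$, driving every summand to $O(n^{-1/2})$ and yielding Part (b). The main obstacle I anticipate is the bookkeeping in the estimation step: I must verify that averaging the per-iterate stability bounds of Theorem \ref{thm:stab-convex} (each carrying the factor $C_t$ and the partial sum $\sum_{j\le t}\eta_j$) does not inflate the rate, and in particular that the $\eta_t$-weighted average of $\sum_{j\le t}\eta_j$ scales like $T^{1-\theta}$ rather than worse. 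Everything else reduces to routine convex-SGD estimates and elementary summation of power sequences.
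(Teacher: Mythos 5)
Your proposal is correct and follows essentially the same route as the paper: the decomposition \eqref{decomposition}, the optimization bound of Lemma \ref{lem:computation-ave}(a) (which you re-derive by the standard convex-SGD telescoping argument), and the conversion of the argument-stability bound of Theorem \ref{thm:stab-convex} into uniform stability via Assumption \ref{ass:lipschitz} and Lemma \ref{lem:gen-stab}, with the same observation that $\sum_t\eta_t^2<\infty$ keeps $C_t$ bounded. The only cosmetic difference is that the paper bounds the $\eta_t$-weighted average of the per-iterate stabilities by the final one (the bound being increasing in $t$) rather than summing term by term, which gives the same $O(n^{-1}T^{1-\theta})$ order.
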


%

\textbf{Example: AUC Maximization}. We now consider a specific example of AUC (Area under ROC curve) maximization where the objective function is convex but each loss function may be non-convex. As a widely used method in imbalanced classification ($\ycal=\{+1,-1\}$), AUC maximization was often formulated as a pairwise learning problem where the corresponding loss function involves a pair of training examples~\citep{zhao2011online,gao2013one}. Recently, AUC maximization algorithms updating models with a single example per iteration were developed~\citep{ying2016stochastic,natole2018stochastic,liu2018fast}. Specifically, AUC maximization with the square loss can be formulated as the minimization of the following objective function
\begin{equation}\label{auc-objective}
\min_{\bw\in\Omega}F(\bw):=p(1-p)\ebb\big[\big(1-\bw^\top(x-\tilde{x})\big)^2|y=1,\tilde{y}=-1\big],
\end{equation}
where $p=\mathrm{Pr}\{Y=1\}$ is the probability of a example being positive.
Let $x_+=\ebb[X|Y=1]$ and $x_-=\ebb[X|Y=-1]$ be the conditional expectation of $X$ given $Y=1$ and $Y=-1$, respectively.
It was shown that $\ebb_{i_t}\big[f(\bw;z_{i_t})\big]=F(\bw)$ for all $\bw\in\rbb^d$~\citep[][Theorem 1]{natole2018stochastic}, where
\begin{multline}
f(\bw;z)\!=\!(1\!-\!p)\big(\bw^\top(x\!-\!x_+)\big)^2\ibb_{[y=1]}\!+\!p(1\!-\!p)\!
\!+\!2\big(1\!+\!\bw^\top(x_-\!-\!x_+)\big)\bw^\top x\big(p\ibb_{[y=-1]}\!-\!(1\!-\!p)\ibb_{[y=1]}\big)\\
+\!p\big(\bw^\top(x\!-\!x_-)\big)^2\ibb_{[y=-1]}-\!p(1-p)\big(\bw^\top(x_-\!-\!x_+)\big)^2.\label{auc-component}
\end{multline}
An interesting property is that \eqref{auc-component} involves only a single example $z$.
This observation allows \citet{natole2018stochastic} to develop a stochastic algorithm as \eqref{SGD} to solve \eqref{auc-objective}.
However, for each $z$, the function $z\mapsto f(\bw;z)$ is non-convex since the associated Hessian matrix may not be positively definite.
It is clear that its expectation $F$ is convex.

\section{Stability with Relaxed Strong Convexity\label{sec:strong}}
\subsection{Stability and generalization errors}
Finally, we consider learning problems with strongly convex empirical objectives but possibly non-convex loss functions.
Theorem \ref{thm:stable-strong}  provides stability bounds, while the minimax optimal generalization bounds $O(1/(\sigma n))$ are presented in Theorem \ref{thm:error-strong}.  The proofs are given in Appendix \ref{sec:proof-strong}.
\begin{theorem}\label{thm:stable-strong}
  Let Assumptions in Theorem \ref{thm:stab-convex} hold. Suppose for all $S\subset\zcal$, $F_S$ is $\sigma_S$-strongly convex.
  Then, there exists a constant $t_0$ such that for SGD with $\eta_t=2/((t+t_0)\sigma_S)$ we have
  \[
  \Big(\ebb_A\big[\|\bw_{t+1}-\tilde{\bw}_{t+1}\|_2^2\big]\Big)^{\frac{1}{2}}\leq \frac{4G}{\sigma_S}\Big(\frac{1}{\sqrt{n(t+t_0)}}+\frac{1}{n}\Big).
  \]
\end{theorem}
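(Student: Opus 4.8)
The plan is to derive a one-step recursion for $u_t := \ebb_A[\|\bw_t-\tilde{\bw}_t\|_2^2]$ and then unroll it against the prescribed step size. Writing $\delta_t=\bw_t-\tilde{\bw}_t$ and conditioning on the index $i_t$ sampled at step $t$, I would split into two cases. With probability $(n-1)/n$ the sampled index avoids the single coordinate $i^*$ at which $S$ and $\widetilde{S}$ differ, so both iterates take a gradient step on the same loss $f(\cdot;z_{i_t})$, and non-expansiveness of $\Pi_{\Omega}$ gives $\|\delta_{t+1}\|_2^2\le\|\delta_t-\eta_t(\partial f(\bw_t;z_{i_t})-\partial f(\tilde{\bw}_t;z_{i_t}))\|_2^2$. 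With probability $1/n$ the differing coordinate is hit, and I would bound the increment crudely by the triangle inequality and Assumption~\ref{ass:lipschitz}, namely $\|\delta_{t+1}\|_2\le\|\delta_t\|_2+2\eta_t G$.

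The crucial step is the matched-index term. Expanding the square and averaging over $i\neq i^*$, the inner-product contribution becomes $\langle\delta_t,\tfrac1n\sum_{i\neq i^*}(\partial f(\bw_t;z_i)-\partial f(\tilde{\bw}_t;z_i))\rangle$; adding and subtracting the missing $i^*$ summand rewrites this as $\langle\delta_t,\partial F_S(\bw_t)-\partial F_S(\tilde{\bw}_t)\rangle$ minus an $O(1/n)$ correction. This is where I would invoke $\sigma_S$-strong convexity of $F_S$ to extract the contraction $\langle\delta_t,\partial F_S(\bw_t)-\partial F_S(\tilde{\bw}_t)\rangle\ge\sigma_S\|\delta_t\|_2^2$; note only convexity of the \emph{average} objective is used, so the individual losses may remain non-convex, which is exactly the relaxation sought. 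Using $L$-smoothness to control both the correction term and the squared gradient-difference term, and Jensen's inequality $\ebb_A\|\delta_t\|_2\le\sqrt{u_t}$, I arrive at $u_{t+1}\le(1-2\eta_t\sigma_S+\tfrac{2\eta_t L}{n}+\eta_t^2L^2)u_t+\tfrac{4\eta_t G}{n}\sqrt{u_t}+\tfrac{4\eta_t^2G^2}{n}$.

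With $\eta_t=2/((t+t_0)\sigma_S)$ the leading contraction is $1-2\eta_t\sigma_S=1-4/(t+t_0)$, and I would choose the constant $t_0\asymp L^2/\sigma_S^2$ large enough (with $n$ not too small, absorbing the $2L/n$ term into $\sigma_S$) so that the two smoothness corrections $\tfrac{2\eta_t L}{n}+\eta_t^2L^2$ are dominated by $\eta_t\sigma_S$, leaving the cleaner recursion $u_{t+1}\le(1-\tfrac{2}{t+t_0})u_t+\tfrac{4\eta_t G}{n}\sqrt{u_t}+\tfrac{4\eta_t^2G^2}{n}$. The final step is to solve this by induction against the ansatz $\sqrt{u_t}\le\frac{4G}{\sigma_S}\big(\frac{1}{\sqrt{n(t+t_0)}}+\frac1n\big)$; the base case is immediate because $\bw_1=\tilde{\bw}_1=0$ forces $u_1=0$. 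Substituting the ansatz and matching powers, the variance term $\tfrac{4\eta_t^2G^2}{n}$ accounts for the $\frac{1}{\sqrt{n(t+t_0)}}$ decay while the linear cross term $\tfrac{4\eta_t G}{n}\sqrt{u_t}$ produces the $\frac1n$ floor.

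I expect the main obstacle to be this last recursion. The $\sqrt{u_t}$ cross term blocks a clean substitution $v_t=\sqrt{u_t}$ into a linear recursion (passing the square root through would degrade the rate via $\sqrt{a+b}\le\sqrt a+\sqrt b$), so the induction must run on $u_t$ directly, and one has to verify that the residual terms of orders $(ns^2)^{-1}$, $(n^{3/2}s^{3/2})^{-1}$ and $(n^2s)^{-1}$, with $s=t+t_0$, all cancel against the gap $R_{t+1}^2-(1-2/s)R_t^2$. This works precisely because the effective contraction exponent is at least $2$, which is also what pins down the constant $4$ in the statement. A secondary technical point is checking that $t_0$ can be taken as an absolute constant (independent of $t$) that absorbs the smoothness corrections uniformly along the whole trajectory.
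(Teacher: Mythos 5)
Your overall strategy matches the paper's: derive a one-step recursion for $u_t=\ebb_A[\|\bw_t-\tilde\bw_t\|_2^2]$ by conditioning on whether $i_t$ hits the differing index, extract the contraction from the $\sigma_S$-strong convexity of $F_S$ alone (so the individual losses may be non-convex), and unroll against $\eta_t=2/((t+t_0)\sigma_S)$ with $t_0\gtrsim L^2/\sigma_S^2$ absorbing the $\eta_t^2L^2$ correction. Two points where you diverge are worth noting. First, you bound the $O(1/n)$ correction to the inner product by smoothness, producing a term $\tfrac{2\eta_tL}{n}u_t$ that you then absorb into the contraction only under the side condition that $n$ is ``not too small'' relative to $L/\sigma_S$; the theorem carries no such condition. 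The paper avoids this entirely: in its version of the decomposition (cf.\ the derivation of \eqref{stab-3}) the leftover $\tfrac1n$-term is a difference of gradients at the \emph{same} point on \emph{different} examples, bounded by $\tfrac{2G}{n}\|\bw_t-\tilde\bw_t\|_2$ via Assumption \ref{ass:lipschitz}, so it merges with the linear cross term and costs nothing that depends on $n$. Your version is repairable the same way (bound $\|\partial f(\bw_t;z_{i^*})-\partial f(\tilde\bw_t;z_{i^*})\|_2\le 2G$ instead of $L\|\delta_t\|_2$), but as written this step is a genuine restriction. Second, you keep the cross term $\tfrac{4\eta_tG}{n}\sqrt{u_t}$ and close by induction on the ansatz $\sqrt{u_t}\le\tfrac{4G}{\sigma_S}(\tfrac{1}{\sqrt{n(t+t_0)}}+\tfrac1n)$; this does work (the slack in comparing $(1-\tfrac2s)P^2$ with $Q^2$ is just enough to absorb the residual $O((ns^2)^{-1})$ and $O((ns)^{-3/2})$ terms, which is what pins down the constant $4$), but it is delicate. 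The paper instead eliminates the cross term up front with $\tfrac{4G}{n}\ebb_A\|\bw_t-\tilde\bw_t\|_2\le\tfrac{8G^2}{n^2\sigma_S}+\tfrac{\sigma_S}{2}\ebb_A\|\bw_t-\tilde\bw_t\|_2^2$, spending half the contraction margin, and then telescopes the purely linear recursion after multiplying by $(t+t_0)(t+t_0-1)$ — a mechanically simpler route that yields the same constant. So: same skeleton, one fixable flaw in the treatment of the $1/n$ correction, and a more laborious (but viable) endgame.
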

\begin{remark}
  Under the assumption $\bw\mapsto f(\bw,z)$ is $\sigma$-strongly convex and smooth for all $z$, it was shown that $\ebb_A\big[\|\bw_{t+1}-\tilde{\bw}_{t+1}\|_2\big]=O(1/(n\sigma))$ for $\eta_t=O(1/(\sigma t))$~\citep{hardt2016train}. Indeed, this strong convexity condition is used to show that the operator $\bw\mapsto\bw-\partial f(\bw;z)$ is contractive.
  We relax the strong convexity of $f(\bw;z)$ to the strong convexity of $F_S$.
  Our stability bound holds even if $\bw\mapsto f(\bw;z)$ is non-convex. If $t\asymp n$, then our stability bound coincides with the one in \citet{hardt2016train} up to a constant factor.
\end{remark}

\begin{theorem}\label{thm:error-strong}
  Let Assumption \ref{ass:lipschitz} hold. Assume for all $z\in\zcal$, the function $\bw\mapsto f(\bw;z)$ is $L$-smooth. Suppose for all $S\subset\zcal$, $F_S$ is $\sigma_S$-strongly convex.
  Then, there exists some $t_0$ such that for SGD with $\eta_t=2/((t+t_0)\sigma_S)$ and $T\asymp n$ we have
  \[
  \ebb_{S,A}[F(\bw_T^{(2)})]-F(\bw^*)=O(\ebb_S\big[1/(n\sigma_S)\big]),
  \]
  where $\bw_T^{(2)}=\big(\sum_{t=1}^{T}(t+t_0-1)\bw_t\big)/\sum_{t=1}^{T}(t+t_0-1)$.
\end{theorem}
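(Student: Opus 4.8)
The plan is to split the excess generalization error via the decomposition \eqref{decomposition} into an estimation error $\ebb_{S,A}[F(\bw_T^{(2)})-F_S(\bw_T^{(2)})]$ and an optimization error $\ebb_{S,A}[F_S(\bw_T^{(2)})-F_S(\bw^*)]$, bound the former by the stability estimate of Theorem \ref{thm:stable-strong} together with Assumption \ref{ass:lipschitz}, and bound the latter by a standard strongly convex SGD analysis. Setting $T\asymp n$ then balances the two contributions to give the stated $O(\ebb_S[1/(n\sigma_S)])$.

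For the estimation error, I would first reduce the stability of the weighted average $\bw_T^{(2)}$ to the per-iterate stability already supplied by Theorem \ref{thm:stable-strong}. Since the weights $t+t_0-1$ are deterministic, convexity of the norm gives $\|\bw_T^{(2)}-\tilde{\bw}_T^{(2)}\|_2\le\big(\sum_{t=1}^T(t+t_0-1)\|\bw_t-\tilde{\bw}_t\|_2\big)/\sum_{t=1}^T(t+t_0-1)$ for neighboring $S,\widetilde{S}$. Under Assumption \ref{ass:lipschitz} the map $\bw\mapsto f(\bw;z)$ is $G$-Lipschitz, so the algorithm producing $\bw_T^{(2)}$ is uniformly stable with parameter $G\,\ebb[\|\bw_T^{(2)}-\tilde{\bw}_T^{(2)}\|_2]$, and the symmetrization underlying Lemma \ref{lem:gen-stab} converts this into the estimation error bound. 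Plugging in $\ebb_A[\|\bw_t-\tilde{\bw}_t\|_2]\le(\ebb_A[\|\bw_t-\tilde{\bw}_t\|_2^2])^{1/2}\le\frac{4G}{\sigma_S}\big(\frac{1}{\sqrt{n(t+t_0-1)}}+\frac{1}{n}\big)$ from Theorem \ref{thm:stable-strong} and using $\sum_{t=1}^T(t+t_0-1)\asymp T^2$ and $\sum_{t=1}^T\sqrt{t+t_0-1}\asymp T^{3/2}$, the weighted average collapses to $O\big(\frac{G}{\sigma_S}(\frac{1}{\sqrt{nT}}+\frac{1}{n})\big)$, so the estimation error is $O\big(\frac{G^2}{\sigma_S}(\frac{1}{\sqrt{nT}}+\frac{1}{n})\big)$; taking $\ebb_S$ and $T\asymp n$ gives $O(\ebb_S[1/(n\sigma_S)])$.

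For the optimization error, I would run the usual one-step expansion: since projection is non-expansive and $\bw^*\in\Omega$, $\ebb_{i_t}[\|\bw_{t+1}-\bw^*\|_2^2]\le\|\bw_t-\bw^*\|_2^2-2\eta_t\langle\bw_t-\bw^*,\partial F_S(\bw_t)\rangle+\eta_t^2G^2$, where $\ebb_{i_t}[\partial f(\bw_t;z_{i_t})]=\partial F_S(\bw_t)$ and $\|\partial f\|_2\le G$. Invoking $\sigma_S$-strong convexity of $F_S$ in the form $\langle\bw_t-\bw^*,\partial F_S(\bw_t)\rangle\ge F_S(\bw_t)-F_S(\bw^*)+\frac{\sigma_S}{2}\|\bw_t-\bw^*\|_2^2$ and rearranging yields $F_S(\bw_t)-F_S(\bw^*)\le\frac{(1-\eta_t\sigma_S)\|\bw_t-\bw^*\|_2^2-\ebb_{i_t}\|\bw_{t+1}-\bw^*\|_2^2}{2\eta_t}+\frac{\eta_tG^2}{2}$. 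With $\eta_t=2/((t+t_0)\sigma_S)$ the coefficient simplifies to $\frac{1-\eta_t\sigma_S}{2\eta_t}=\frac{(t+t_0-2)\sigma_S}{4}$, so multiplying by the weight $t+t_0-1$ makes the quadratic terms telescope across $t$; Jensen's inequality and the definition of $\bw_T^{(2)}$ then give $\ebb_{S,A}[F_S(\bw_T^{(2)})-F_S(\bw^*)]=O(G^2/(\sigma_S T))$, which is $O(1/(n\sigma_S))$ after $\ebb_S$ when $T\asymp n$. Combining the two pieces gives the claim.

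The hard part will be the careful bookkeeping of the $\sigma_S$-dependence through the expectation over $S$: the step sizes, the averaging weights, and the stability constant $t_0$ all involve $\sigma_S$, and one must verify that the per-$S$ rates $O(G^2/(n\sigma_S))$ combine cleanly under $\ebb_S$ into $O(\ebb_S[1/(n\sigma_S)])$, in particular that neighboring datasets $S,\widetilde{S}$ can be run with a common schedule so the symmetrization behind Lemma \ref{lem:gen-stab} remains legitimate. The telescoping in the optimization step is the other place where constants must line up exactly, since it relies on the precise matching between the weights $t+t_0-1$ and the step size $2/((t+t_0)\sigma_S)$; this is routine once the coefficient identity $\frac{1-\eta_t\sigma_S}{2\eta_t}=\frac{(t+t_0-2)\sigma_S}{4}$ is in hand.
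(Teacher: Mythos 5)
Your proposal matches the paper's proof in all essentials: the same decomposition \eqref{decomposition}, the same reduction of the stability of $\bw_T^{(2)}$ to per-iterate stability via convexity of the norm and Theorem \ref{thm:stable-strong}, the same conversion to an estimation error bound through the Lipschitz assumption (the paper routes this through $\ell_1$ on-average model stability and Part (a) of Theorem \ref{thm:gen-model-stab}, which is the same symmetrization you invoke via Lemma \ref{lem:gen-stab}), and the same standard strongly convex optimization error bound, which the paper simply cites as Part (b) of Lemma \ref{lem:computation-ave} rather than re-deriving the weighted telescoping as you do. The argument is correct and essentially identical to the paper's.
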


\subsection{Application: least squares regression}
We now consider an application to learning with the least squares loss, where $f(\bw;z)=\frac{1}{2}\big(\langle\bw,x\rangle-y\big)^2$. Let $\Omega=\{\bw\in\rbb^d:\|\bw\|_2\leq R\}$. In this case, \eqref{SGD} becomes
\begin{equation}\label{SGD-LS}
  \bw_{t+1}=\Pi_\Omega\big(\bw_t-\eta_t\big(\langle\bw_t,x_t\rangle-y_t\big)x_t\big),
\end{equation}
where $\Pi_\Omega(\bw)=\min\{R/\|\bw\|_2,1\}\bw$.
Note that each individual loss function $f(\bw_t;z_t)$ is non-strongly convex. However, as we will show below, the empirical objective satisfies a strong convexity on a subspace containing the iterates $\{\bw_t\}$.
For any $S=\{z_1,\ldots,z_n\}$ let $C_S=\frac{1}{n}\sum_{i=1}^{n}x_ix_i^\top$ be the empirical covariance matrix and $\sigma_S'$ be the minimal positive eigenvalue of $C_S$. Then it is clear from \eqref{SGD-LS} that $\{\bw_t\}_t$ belongs to the range of $C_S$.~\footnote{The range of $C_S$ is the linear span of $x_1,\ldots,x_n$. Details are given in Proposition \ref{prop:span} in Appendix \ref{sec:proof-strong}.} Let $\widetilde{S}\subset\zcal^n$ differ from $S$ by a single example. For simplicity, we assume $S$ and $\widetilde{S}$ differ by the first example and denote $\widetilde{S}=\{\tilde{z}_1,z_2,\ldots,z_n\}$. We construct a set $\bar{S}=\{0,z_2,\ldots,z_n\}$. Let $\{\bw_t\},\{\tilde{\bw}_t\}$ and $\{\bar{\bw}_t\}$ be the sequence by \eqref{SGD-LS} based on $S,\widetilde{S}$ and $\bar{S}$, respectively. Then our previous discussion implies that $\bw_t-\bar{\bw}_t\in\ran(C_S),\tilde{\bw}_t-\bar{\bw}_t\in\ran(C_{\widetilde{S}})$ for all $t\in\nbb$ ($\ran(C_{\bar{S}})\subseteq\ran(C_S), \ran(C_{\bar{S}})\subseteq\ran(C_{\widetilde{S}})$), where we denote by $\ran(C)$ the range of a matrix $C$. It follows that $\bw_t-\bar{\bw}_t$ and $\tilde{\bw}_t-\bar{\bw}_t$ are orthogonal to the kernel of $C_S$ and $C_{\widetilde{S}}$, respectively. Therefore,
\[
\langle \bw_t-\bar{\bw}_t, C_S(\bw_t-\bar{\bw}_t)\rangle\geq \sigma_S'\|\bw_t-\bar{\bw}_t\|_2^2,
\]
\[
\langle \tilde{\bw}_t-\bar{\bw}_t, C_{\widetilde{S}}(\tilde{\bw}_t-\bar{\bw}_t)\rangle\geq \sigma_{\widetilde{S}}'\|\tilde{\bw}_t-\bar{\bw}_t\|_2^2.
\]
As we will see in the proof, Theorem \ref{thm:stable-strong} holds if only the following local strong convexity holds, i.e.,
\[
\langle\bw_t-\tilde{\bw}_t,\partial F_S(\bw_t)-\partial F_S(\tilde{\bw}_t)\rangle\geq\sigma_S\|\bw_t-\tilde{\bw}_t\|_2^2,\;\forall t\in\nbb.
\]
Therefore, we can apply Theorem \ref{thm:stable-strong} with $\widetilde{S}=\bar{S}$ and $\sigma_S=\sigma_S'$ to derive (note $\partial F_S(\bw)=C_S\bw-\frac{1}{n}\sum_{i=1}^{n}y_ix_i$)
\[
\ebb_A\big[\|\bw_{t+1}-\bar{\bw}_{t+1}\|_2\big]\leq \frac{4G}{\sigma_S'}\Big(\frac{1}{\sqrt{n(t+t_0)}}+\frac{1}{n}\Big).
\]
A similar inequality also holds for $\ebb_A\big[\|\tilde{\bw}_{t+1}-\bar{\bw}_{t+1}\|_2\big]$,
which together with the subadditivity of $\|\cdot\|_2$ immediately gives the following stability bound on $\ebb_A[\|\bw_{t+1}-\tilde{\bw}_{t+1}\|_2]$.
\begin{corollary}
Let $f(\bw;z)=\frac{1}{2}\big(\langle\bw,x\rangle-y\big)^2$ and $\Omega=\{\bw\in\rbb^d:\|\bw\|_2\leq R\}$ for some $R>0$. There exists an $t_0$ such that for \eqref{SGD-LS} with $\eta_j=2/(\sigma_S'(j+t_0))$ we have
  \[
  \ebb_A\big[\|\bw_{t}-\tilde{\bw}_{t}\|_2\big]=O\Big(\frac{1}{\sqrt{n(t+t_0)}\sigma_S'}+\frac{1}{n\sigma_S'}\Big).
  \]
\end{corollary}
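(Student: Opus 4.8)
The plan is to bound $\|\bw_{t+1}-\tilde{\bw}_{t+1}\|_2$ not directly, but by comparing each of the two SGD trajectories against a common \emph{reference} trajectory $\{\bar{\bw}_t\}$ generated on $\bar{S}=\{0,z_2,\dots,z_n\}$, in which the single differing example is replaced by the zero example. Since $\bar{S}$ shares $z_2,\dots,z_n$ with both $S$ and $\widetilde{S}$, this lets me invoke Theorem \ref{thm:stable-strong} twice---once for the pair $(S,\bar{S})$ and once for $(\widetilde{S},\bar{S})$---and recombine via the triangle inequality $\|\bw_{t+1}-\tilde{\bw}_{t+1}\|_2\le\|\bw_{t+1}-\bar{\bw}_{t+1}\|_2+\|\bar{\bw}_{t+1}-\tilde{\bw}_{t+1}\|_2$. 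Routing both trajectories through $\bar{S}$ is what keeps the shared covariance structure in play, which is essential because Theorem \ref{thm:stable-strong} is phrased in terms of the strong-convexity constant of the objective on the unperturbed set.

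First I would record the two structural facts that make the least-squares objective behave strongly convex along the relevant directions, even though each $f(\bw;z)=\tfrac12(\langle\bw,x\rangle-y)^2$ is merely convex. The subgradient $(\langle\bw_t,x_t\rangle-y_t)x_t$ is a scalar multiple of $x_t$, and the projection $\Pi_\Omega(\bw)=\min\{R/\|\bw\|_2,1\}\bw$ only rescales by a nonnegative factor; hence every iterate of \eqref{SGD-LS} stays in $\ran(C_S)=\mathrm{span}\{x_1,\dots,x_n\}$, which is precisely Proposition \ref{prop:span}. Consequently $\bw_t-\bar{\bw}_t\in\ran(C_S)$ and $\tilde{\bw}_t-\bar{\bw}_t\in\ran(C_{\widetilde{S}})$, so both are orthogonal to the respective kernels; since $\partial F_S(\bw)-\partial F_S(\bar{\bw})=C_S(\bw-\bar{\bw})$, this yields the local strong convexity $\langle\bw_t-\bar{\bw}_t,\partial F_S(\bw_t)-\partial F_S(\bar{\bw}_t)\rangle\ge\sigma_S'\|\bw_t-\bar{\bw}_t\|_2^2$, and symmetrically with $\sigma_{\widetilde{S}}'$ for the second pair.

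With these in hand I would invoke Theorem \ref{thm:stable-strong} in the form noted just before the corollary, namely that its conclusion holds whenever only this \emph{local} strong convexity along the iterates is available. Applying it to $(S,\bar{S})$ with $\sigma_S=\sigma_S'$ and the matching step size $\eta_j=2/(\sigma_S'(j+t_0))$ gives $\ebb_A[\|\bw_{t+1}-\bar{\bw}_{t+1}\|_2]\le \tfrac{4G}{\sigma_S'}\big(\tfrac{1}{\sqrt{n(t+t_0)}}+\tfrac1n\big)$, where $G$ is finite because $\Omega$ is the bounded ball of radius $R$, so Assumption \ref{ass:lipschitz} is met; the same argument for $(\widetilde{S},\bar{S})$ gives the analogous bound, and summing through the triangle inequality produces the claimed $O\big(\tfrac{1}{\sqrt{n(t+t_0)}\,\sigma_S'}+\tfrac{1}{n\sigma_S'}\big)$. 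The step I expect to be the main obstacle is the bookkeeping for the second comparison: Theorem \ref{thm:stable-strong} ties the step size to the strong-convexity constant, yet the $\widetilde{S}$-trajectory is naturally governed by $\sigma_{\widetilde{S}}'$ rather than the $\sigma_S'$ appearing in $\eta_j$. I would resolve this by noting that $C_S$ and $C_{\widetilde{S}}$ differ by a single rank-one term of order $1/n$, so $\sigma_{\widetilde{S}}'$ and $\sigma_S'$ are comparable and the constant-factor discrepancy is absorbed into the $O(\cdot)$, leaving the final bound expressed purely in terms of $\sigma_S'$.
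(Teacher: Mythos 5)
Your proposal follows essentially the same route as the paper: the same auxiliary set $\bar S=\{0,z_2,\dots,z_n\}$, the same observation that the iterates stay in $\ran(C_S)$ so that local strong convexity with constant $\sigma_S'$ holds along the relevant differences, two applications of Theorem \ref{thm:stable-strong}, and the triangle inequality. The only difference is that you explicitly flag the mismatch between $\sigma_{\widetilde S}'$ and $\sigma_S'$ in the second application---a point the paper passes over with ``a similar inequality also holds''---and your eigenvalue-comparability patch is a reasonable (though not fully airtight, since the minimal \emph{positive} eigenvalue need not be stable under low-rank perturbations) way to justify that step.
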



\section{Conclusions\label{sec:conclusion}}
In this paper, we study stability and generalization of SGD by removing the bounded gradient assumptions, and relaxing the smoothness assumption and the convexity requirement of each loss function in the existing analysis.
We introduce a novel on-average model stability able to capture the risks of SGD iterates, which implies fast generalization bounds in the low-noise case and {\red stability bounds for learning with even \emph{non-smooth} loss functions.}
For all considered problems, we show that our stability bounds can imply minimax optimal generalization bounds by balancing optimization and estimation errors.
We apply our results to practical learning problems to justify the superiority of our approach over the existing stability analysis. Our results can be extended to stochastic proximal gradient descent, high-probability bounds and SGD without replacement (details are given in Appendix \ref{sec:extension}).
In the future, it would be interesting to study stability bounds for other stochastic optimization algorithms, e.g., Nesterov's accelerated variants of SGD~\citep{nesterov2013introductory}.

\section*{Acknowledgement}
The work of Y. Lei is supported by the National Natural Science Foundation of China (Grant Nos. 61806091, 11771012) and the Alexander von Humboldt Foundation.
The work of Y. Ying is supported by the National Science Foundation (NSF) under Grant No. \#1816227.

\appendix
\numberwithin{equation}{section}
\numberwithin{theorem}{section}
\numberwithin{figure}{section}
\numberwithin{table}{section}
\renewcommand{\thesection}{{\Alph{section}}}
\renewcommand{\thesubsection}{\Alph{section}.\arabic{subsection}}
\renewcommand{\thesubsubsection}{\Roman{section}.\arabic{subsection}.\arabic{subsubsection}}
\setcounter{secnumdepth}{-1}
\setcounter{secnumdepth}{3}
\section{Optimization Error Bounds}
For a full picture of generalization errors, we need to address the optimization errors. This is achieved by the following lemma.
Parts (a) and (b) consider the convex and strongly convex empirical objectives, respectively (we make no assumptions on the convexity of each $f(\cdot,z)$).
Note in Parts (c) and (d), we do not make a bounded gradient assumption. As an alternative, we require convexity of $f(\cdot;z)$ for all $z$.
An appealing property of Parts (c) and (d) is that it involves $O(\sum_{j=1}^{t}\eta_j^2F_S(\bw))$ instead of $O(\sum_{j=1}^{t}\eta_j^2)$, which is a requirement for developing fast rates in the case with low noises.

Our discussion on optimization errors requires to use a self-bounding property for functions with H\"older continuous (sub)gradients, which means that gradients can be controlled by function values. The case $\alpha=1$ was established in \citet{srebro2010smoothness}.
The case $\alpha\in(0,1)$ was established in \citet{ying2017unregularized}.
The case $\alpha=0$ follows directly from Definition \ref{def:holder}.
Define
\begin{equation}\label{alpha-1}
  c_{\alpha,1}=\begin{cases}
                 (1+1/\alpha)^{\frac{\alpha}{1+\alpha}}L^{\frac{1}{1+\alpha}}, & \mbox{if } \alpha>0 \\
                 \sup_z\|\partial f(0;z)\|_2+L, & \mbox{if } \alpha=0.
               \end{cases}
\end{equation}
\begin{lemma}\label{lem:self-bounding}
Assume for all $z\in\zcal$, the map $\bw\mapsto f(\bw;z)$ is nonnegative,  and $\bw\mapsto \partial f(\bw;z)$ is $(\alpha,L)$-H\"older continuous with $\alpha\in[0,1]$. Then for $c_{\alpha,1}$ defined as \eqref{alpha-1} we have
  \[
  \|\partial f(\bw,z)\|_2\leq c_{\alpha,1}f^{\frac{\alpha}{1+\alpha}}(\bw,z),\quad\forall \bw\in\rbb^d,z\in\zcal.
  \]
\end{lemma}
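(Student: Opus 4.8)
The plan is to reduce everything to a H\"older analogue of the classical descent lemma and then exploit the nonnegativity of $f$. I would handle the degenerate case $\alpha=0$ first, since there the claim is almost immediate: the H\"older condition \eqref{holder-condition} with $\alpha=0$ reads $\|\partial f(\bw;z)-\partial f(\tilde{\bw};z)\|_2\leq L$, so by the triangle inequality $\|\partial f(\bw;z)\|_2\leq\|\partial f(0;z)\|_2+L\leq c_{\alpha,1}$, and since $f^{\frac{\alpha}{1+\alpha}}=f^0\equiv 1$ this is exactly the asserted bound.

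For $\alpha\in(0,1]$ the first observation is that H\"older continuity of the (sub)gradient with a positive exponent forces $\bw\mapsto f(\bw;z)$ to be differentiable: as $\tilde{\bw}\to\bw$ all subgradients collapse to a single limit, so $\partial f(\cdot;z)$ is single-valued and continuous. This removes any ambiguity in the subgradient and lets me invoke the fundamental theorem of calculus. The key step is then to prove the approximate descent inequality
\[
f(\bw';z)\leq f(\bw;z)+\langle\partial f(\bw;z),\bw'-\bw\rangle+\frac{L}{1+\alpha}\|\bw'-\bw\|_2^{1+\alpha},
\]
which I would obtain by writing $f(\bw';z)-f(\bw;z)=\int_0^1\langle\partial f(\bw+t(\bw'-\bw);z),\bw'-\bw\rangle\,dt$, subtracting the linear term, bounding the integrand through \eqref{holder-condition} by $L\|t(\bw'-\bw)\|_2^{\alpha}\|\bw'-\bw\|_2$, and using $\int_0^1 t^{\alpha}\,dt=\frac{1}{1+\alpha}$.

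With the descent inequality in hand, I would fix $\bw$, write $g=\partial f(\bw;z)$, and apply it with $\bw'=\bw-\eta g$ for a free step $\eta>0$. Nonnegativity of $f$ then gives
\[
\eta\|g\|_2^2-\frac{L}{1+\alpha}\eta^{1+\alpha}\|g\|_2^{1+\alpha}\leq f(\bw;z).
\]
Optimizing the left-hand side over $\eta$ (the maximizer is $\eta^{*}=(\|g\|_2^{1-\alpha}/L)^{1/\alpha}$) and simplifying yields $\frac{\alpha}{1+\alpha}L^{-1/\alpha}\|g\|_2^{(1+\alpha)/\alpha}\leq f(\bw;z)$; solving for $\|g\|_2$ produces precisely the constant $c_{\alpha,1}=(1+1/\alpha)^{\frac{\alpha}{1+\alpha}}L^{\frac{1}{1+\alpha}}$ of \eqref{alpha-1}.

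The descent inequality is the routine ingredient; the only place requiring genuine care is the exponent bookkeeping in the final optimization, where the powers of $\|g\|_2$ and of $L$ must be tracked so that the emerging constant matches $c_{\alpha,1}$ \emph{exactly} rather than merely up to an absolute factor. I expect this constant-chasing, together with checking that the chosen $\eta^{*}$ is admissible (positive, with the descent inequality valid for any step), to be the main, though still elementary, obstacle.
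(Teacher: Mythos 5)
Your proof is correct, and it is the standard argument: the paper itself does not prove Lemma \ref{lem:self-bounding} but only cites \citet{srebro2010smoothness} for $\alpha=1$ and \citet{ying2017unregularized} for $\alpha\in(0,1)$, and those references establish the self-bounding property exactly as you do, via the H\"older descent inequality applied at $\bw'=\bw-\eta\,\partial f(\bw;z)$, nonnegativity of $f$, and optimization over $\eta$. Your constant-chasing checks out: the maximizer $\eta^{*}=(\|g\|_2^{1-\alpha}/L)^{1/\alpha}$ gives $\frac{\alpha}{1+\alpha}L^{-1/\alpha}\|g\|_2^{(1+\alpha)/\alpha}\leq f(\bw;z)$, which inverts to exactly $c_{\alpha,1}=(1+1/\alpha)^{\frac{\alpha}{1+\alpha}}L^{\frac{1}{1+\alpha}}$, and the $\alpha=0$ case is handled correctly.
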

\begin{lemma}\label{lem:computation-ave}
  \begin{enumerate}[(a)]
    \item Let $\{\bw_t\}_t$ be produced by \eqref{SGD} and Assumption \ref{ass:lipschitz} hold. If $F_S$ is convex, then for all $t\in\nbb$ and $\bw\in\Omega$
    \[
    \ebb_A[F_S(\bw_t^{(1)})]-F_S(\bw)\leq \frac{G^2\sum_{j=1}^{t}\eta_j^2+\|\bw\|_2^2}{2\sum_{j=1}^{t}\eta_j},
    \]
    where $\bw_t^{(1)}=\big(\sum_{j=1}^{t}\eta_j\bw_j\big)/\sum_{j=1}^{t}\eta_j$.
    \item Let $F_S$ be $\sigma_S$-strongly convex and Assumption \ref{ass:lipschitz} hold. Let $t_0\geq0$ and $\{\bw_t\}_t$ be produced by \eqref{SGD} with $\eta_t=2/(\sigma_S(t+t_0))$. Then for all $t\in\nbb$ and $\bw\in\Omega$
    \[
    \ebb_A[F_S(\bw_t^{(2)})]-F_S(\bw)=O\big(1/(t\sigma_S)+\|\bw\|_2^2/t^2\big),
    \]
    where $\bw_t^{(2)}=\big(\sum_{j=1}^{t}(j+t_0-1)\bw_j\big)/\sum_{j=1}^{t}(j+t_0-1)$.
    \item Assume for all $z\in\zcal$, the function $\bw\mapsto f(\bw;z)$ is nonnegative, convex and $L$-smooth. Let $\{\bw_t\}_t$ be produced by \eqref{SGD} with $\eta_t\leq 1/(2L)$. If the step size is nonincreasing, then for all $t\in\nbb$ and $\bw\in\Omega$ independent of the SGD algorithm $A$
        \[
        \sum_{j=1}^{t}\eta_j\ebb_A[F_S(\bw_j)-F_S(\bw)]  \leq (1/2+L\eta_1)\|\bw\|_2^2 + 2L\sum_{j=1}^{t}\eta_j^2F_S(\bw).
        \]
    \item Assume for all $z\in\zcal$, the function $\bw\mapsto f(\bw;z)$ is nonnegative, convex, and $\partial f(\bw;z)$ is $(\alpha,L)$-H\"older continuous with $\alpha\in[0,1)$. Let $\{\bw_t\}_t$ be produced by \eqref{SGD} with nonincreasing step sizes. Then for all $t\in\nbb$ and $\bw\in\Omega$ independent of the SGD algorithm $A$ we have
        \[
        2\sum_{j=1}^{t}\eta_j\ebb_A[F_S(\bw_j)-F_S(\bw)]\leq \|\bw\|_2^2+
        c_{\alpha,1}^2\Big(\sum_{j=1}^{t}\eta_j^2\Big)^{\frac{1-\alpha}{1+\alpha}}
   \Big(\eta_1\|\bw\|_2^2+2\sum_{j=1}^{t}\eta_j^2F_S(\bw)+c_{\alpha,2}\sum_{j=1}^{t}\eta_j^{\frac{3-\alpha}{1-\alpha}}\Big)^{\frac{2\alpha}{1+\alpha}},
        \]
        where
        \begin{equation}\label{alpha-2}
          c_{\alpha,2}=\begin{cases}
                         \frac{1-\alpha}{1+\alpha}(2\alpha/(1+\alpha))^{\frac{2\alpha}{1-\alpha}}c_{\alpha,1}^{\frac{2+2\alpha}{1-\alpha}}, & \mbox{if } \alpha>0 \\
                         c_{\alpha,1}^2, & \mbox{if }\alpha=0.
                       \end{cases}
        \end{equation}

  \end{enumerate}
\end{lemma}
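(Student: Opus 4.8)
The plan is to treat all four parts through one common one-step recursion and then specialize. Starting from the update \eqref{SGD} and the nonexpansiveness of $\Pi_{\Omega}$, for any comparator $\bw\in\Omega$ I expand
\[
\|\bw_{t+1}-\bw\|_2^2\leq \|\bw_t-\bw\|_2^2-2\eta_t\langle\bw_t-\bw,\partial f(\bw_t;z_{i_t})\rangle+\eta_t^2\|\partial f(\bw_t;z_{i_t})\|_2^2.
\]
Taking $\ebb_{i_t}$ and using unbiasedness $\ebb_{i_t}[\partial f(\bw_t;z_{i_t})]=\partial F_S(\bw_t)$, the inner product is lower-bounded by $F_S(\bw_t)-F_S(\bw)$ via convexity of $F_S$. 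This is the shared engine; the four parts differ only in how the last (gradient-norm) term is controlled.

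For Parts (a) and (b) I invoke Assumption \ref{ass:lipschitz} to replace $\|\partial f\|_2^2$ by $G^2$. In (a), summing telescopes the distance terms, and Jensen's inequality on the weighted average $\bw_t^{(1)}$ turns $\sum_j\eta_j(F_S(\bw_j)-F_S(\bw))$ into $F_S(\bw_t^{(1)})-F_S(\bw)$, giving the claim with $\bw_1=0$. In (b) I instead use $\sigma_S$-strong convexity \eqref{strong-convex}, so the cross term contributes an additional $\frac{\sigma_S}{2}\|\bw_t-\bw\|_2^2$; with $\eta_t=2/(\sigma_S(t+t_0))$ the recursion becomes a telescoping weighted by $(t+t_0-1)$, matching the averaging $\bw_t^{(2)}$ and producing the $O(1/(t\sigma_S))$ rate. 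Both are routine.

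The substance is in (c) and (d), where no bounded-gradient assumption is available; note that (c) is exactly the case $\alpha=1$ of (d) (then $c_{\alpha,1}^2=2L$, the factor $(\sum_j\eta_j^2)^{\frac{1-\alpha}{1+\alpha}}$ collapses to $1$, and $c_{\alpha,2}=0$), so I would prove (d) and read off (c). Here I replace the gradient-norm term using the self-bounding property of Lemma \ref{lem:self-bounding}, which after $\ebb_{i_t}$ and Jensen gives $\ebb_{i_t}\|\partial f(\bw_t;z_{i_t})\|_2^2\le c_{\alpha,1}^2 F_S^{\frac{2\alpha}{1+\alpha}}(\bw_t)$. Telescoping the recursion yields
\[
2\sum_{j=1}^{t}\eta_j\ebb_A[F_S(\bw_j)-F_S(\bw)]\leq\|\bw\|_2^2+c_{\alpha,1}^2\sum_{j=1}^{t}\eta_j^2\ebb_A\big[F_S^{\frac{2\alpha}{1+\alpha}}(\bw_j)\big].
\]
I then pass to $\sum_j\eta_j^2\ebb_A[F_S(\bw_j)]$ by Jensen in $A$ followed by H\"older over the index $j$ with conjugate exponents $\frac{1+\alpha}{2\alpha}$ and $\frac{1+\alpha}{1-\alpha}$, which extracts exactly the factor $(\sum_j\eta_j^2)^{\frac{1-\alpha}{1+\alpha}}$ and leaves $\big(\sum_j\eta_j^2\ebb_A[F_S(\bw_j)]\big)^{\frac{2\alpha}{1+\alpha}}$.

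The main obstacle is that $\sum_j\eta_j^2\ebb_A[F_S(\bw_j)]$ is self-referential: it involves the very iterate risks I am trying to bound. To close it I would multiply the one-step recursion by $\eta_t$ and telescope the $\eta_t$-weighted distance terms; since the step sizes are nonincreasing, Abel summation bounds $\sum_t\eta_t\big(\|\bw_t-\bw\|_2^2-\|\bw_{t+1}-\bw\|_2^2\big)$ by $\eta_1\|\bw\|_2^2$, which is the source of the $\eta_1\|\bw\|_2^2$ term. The weighted self-bounding contribution is now $c_{\alpha,1}^2\eta_t^3 F_S^{\frac{2\alpha}{1+\alpha}}(\bw_t)$, to which I apply Young's inequality to split off a controllable multiple of $\eta_t^2 F_S(\bw_t)$ (absorbed against the progress term) at the cost of a pure step-size residual, whose exponent works out to be precisely $\frac{3-\alpha}{1-\alpha}$. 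This gives the auxiliary estimate $\sum_j\eta_j^2\ebb_A[F_S(\bw_j)]\le\eta_1\|\bw\|_2^2+2\sum_j\eta_j^2 F_S(\bw)+c_{\alpha,2}\sum_j\eta_j^{\frac{3-\alpha}{1-\alpha}}$, and substituting it back produces the stated bound. The delicate points I expect to fight with are (i) choosing the free Young parameter so that the absorbed $\eta_t^2 F_S(\bw_t)$ coefficient stays below the available progress and the leading constant on $\sum_j\eta_j^2F_S(\bw)$ comes out as $2$, and (ii) tracking the exact constants $c_{\alpha,1},c_{\alpha,2}$ through the two applications of H\"older and Young, including the degenerate endpoints $\alpha=0$ and $\alpha=1$.
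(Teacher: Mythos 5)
Your proposal is correct and follows essentially the same route as the paper's proof: the same one-step expansion \eqref{computation-ave-1}/\eqref{computation-ave-3}, the self-bounding property of Lemma \ref{lem:self-bounding} in place of the bounded-gradient assumption, Young's inequality to peel off the residual $c_{\alpha,2}\eta_t^{\frac{1+\alpha}{1-\alpha}}$, the $\eta_t$-weighted telescoping under nonincreasing step sizes to obtain the auxiliary estimate \eqref{computation-ave-4}, and your ``H\"older over the index $j$'' is the same computation as the paper's Jensen step \eqref{computation-ave-5} (working in expectation throughout rather than pathwise, as the paper does, changes nothing). The one caveat is that Part (c) cannot literally be read off as the $\alpha=1$ case of Part (d), since the Young exponent $\frac{3-\alpha}{1-\alpha}$ degenerates there; the paper instead proves (c) directly by absorbing $2L\eta_t^2 f(\bw_t;z_{i_t})$ into $\eta_t f(\bw_t;z_{i_t})$ via the condition $\eta_t\leq 1/(2L)$ --- which is exactly why that step-size restriction appears in (c) but not in (d) --- so you would need to run your argument once more with this direct absorption to get (c) with the stated constants.
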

\begin{proof}
   Parts (a) and (b) can be found in the literature~\citep{nemirovski2009robust,lacoste2012simpler}. We only prove Parts (c) and (d).
   We first prove Part (c).
   The projection operator $\Pi_\Omega$ is non-expansive, i.e.,
  \begin{equation}\label{proj-cont}
    \big\|\Pi_\Omega(\bw)-\Pi_\Omega(\tilde{\bw})\big\|_2 \leq \|\bw-\tilde{\bw}\|_2.
  \end{equation}
   By the SGD update \eqref{SGD}, \eqref{proj-cont}, convexity and Lemma \ref{lem:self-bounding}, we know
   \begin{align}
     \|\bw_{t+1}-\bw\|_2^2 & \leq \|\bw_t-\eta_t\partial f(\bw_t;z_{i_t})-\bw\|_2^2\notag\\
      & = \|\bw_t-\bw\|_2^2 + \eta_t^2\|\partial f(\bw_t;z_{i_t})\|_2^2 + 2\eta_t\langle\bw-\bw_t,\partial f(\bw_t;z_{i_t}) \notag\\
      & \leq \|\bw_t-\bw\|_2^2 + 2\eta_t^2Lf(\bw_t;z_{i_t}) + 2\eta_t(f(\bw;z_{i_t})-f(\bw_t;z_{i_t}))\label{computation-ave-1}\\
      & \leq \|\bw_t-\bw\|_2^2+2\eta_tf(\bw;z_{i_t})-\eta_tf(\bw_t;z_{i_t}),\notag
   \end{align}
   where the last inequality is due to $\eta_t\leq1/(2L)$. It then follows that
   \[
   \eta_tf(\bw_t;z_{i_t})\leq \|\bw_t-\bw\|_2^2-\|\bw_{t+1}-\bw\|_2^2+2\eta_tf(\bw;z_{i_t}).
   \]
   Multiplying both sides by $\eta_t$ and using the assumption $\eta_{t+1}\leq \eta_t$, we know
   \begin{align*}
     \eta_t^2f(\bw_t;z_{i_t}) & \leq \eta_t\|\bw_t-\bw\|_2^2-\eta_t\|\bw_{t+1}-\bw\|_2^2+2\eta_t^2f(\bw;z_{i_t})\\
     & \leq \eta_t\|\bw_t-\bw\|_2^2-\eta_{t+1}\|\bw_{t+1}-\bw\|_2^2+2\eta_t^2f(\bw;z_{i_t}).
   \end{align*}
   Taking a summation of the above inequality gives ($\bw_1=0$)
   \[
   \sum_{j=1}^{t}\eta_j^2f(\bw_j;z_{i_j})\leq\eta_1\|\bw\|_2^2+2\sum_{j=1}^{t}\eta_j^2f(\bw;z_{i_j}).
   \]
   Taking an expectation w.r.t. $A$ gives (note $\bw_j$ is independent of $i_j$)
   \begin{equation}\label{computation-ave-2}
   \sum_{j=1}^{t}\eta_j^2\ebb_A[F_S(\bw_j)]=\sum_{j=1}^{t}\eta_j^2\ebb_A\big[f(\bw_j;z_{i_j})\big]\leq\eta_1\|\bw\|_2^2+2\sum_{j=1}^{t}\eta_j^2\ebb_A[F_S(\bw)].
   \end{equation}
   On the other hand, taking an expectation w.r.t. $i_t$ over both sides of \eqref{computation-ave-1} shows
   \[
   2\eta_t\big[F_S(\bw_t)-F_S(\bw)\big]\leq \|\bw_t-\bw\|_2^2-\ebb_{i_t}\big[\|\bw_{t+1}-\bw\|_2^2\big]+2\eta_t^2LF_S(\bw_t).
   \]
   Taking an expectation on both sides followed with a summation, we get
   \begin{align*}
   2\sum_{j=1}^{t}\eta_j\ebb_A[F_S(\bw_j)-F_S(\bw)] & \leq \|\bw\|_2^2 + 2L\sum_{j=1}^{t}\eta_j^2\ebb_A[F_S(\bw_j)] \\
   & \leq (1+2L\eta_1)\|\bw\|_2^2 + 4L\sum_{j=1}^{t}\eta_j^2\ebb_A[F_S(\bw)],
   \end{align*}
   where the last step is due to \eqref{computation-ave-2}.
   The proof is complete since $\bw$ is independent of $A$.

   We now prove Part (d). Analogous to \eqref{computation-ave-1}, one can show for loss functions with H\"older continuous (sub)gradients (Lemma \ref{lem:self-bounding})
   \begin{equation}\label{computation-ave-3}
   \|\bw_{t+1}-\bw\|_2^2 \leq \|\bw_t-\bw\|_2^2 + c_{\alpha,1}^2\eta_t^2f^{\frac{2\alpha}{1+\alpha}}(\bw_t;z_{i_t})+2\eta_t(f(\bw;z_{i_t})-f(\bw_t;z_{i_t})).
   \end{equation}
   By the Young's inequality
  \begin{equation}\label{young}
  ab\leq p^{-1}|a|^p+q^{-1}|b|^q,\quad a,b\in\rbb,p,q>0 \text{ with } p^{-1}+q^{-1}=1,
  \end{equation} we know (notice the following inequality holds trivially if $\alpha=0$)
   \begin{align*}
     \eta_tc_{\alpha,1}^2f^{\frac{2\alpha}{1+\alpha}}(\bw_t;z_{i_t}) & =\Big(\frac{1+\alpha}{2\alpha}f(\bw_t;z_{i_t})\Big)^{\frac{2\alpha}{1+\alpha}}\Big(\frac{2\alpha}{1+\alpha}\Big)^{\frac{2\alpha}{1+\alpha}} c_{\alpha,1}^2\eta_t\\
      & \leq \frac{2\alpha}{1+\alpha}\Big(\frac{1+\alpha}{2\alpha}f(\bw_t;z_{i_t})\Big)^{\frac{2\alpha}{1+\alpha}\frac{1+\alpha}{2\alpha}}+
      \frac{1-\alpha}{1+\alpha}\Big(\Big(\frac{2\alpha}{1+\alpha}\Big)^{\frac{2\alpha}{1+\alpha}} c^2_{\alpha,1}\eta_t\Big)^{\frac{1+\alpha}{1-\alpha}}\\
      & = f(\bw_t;z_{i_t})+c_{\alpha,2}\eta_t^{\frac{1+\alpha}{1-\alpha}}.
   \end{align*}
   Combining the above two inequalities together, we get
   \[
   \eta_tf(\bw_t;z_{i_t})\leq \|\bw_t-\bw\|_2^2 - \|\bw_{t+1}-\bw\|_2^2 + 2\eta_tf(\bw;z_{i_t})+c_{\alpha,2}\eta_t^{\frac{2}{1-\alpha}}.
   \]
   Multiplying both sides by $\eta_t$ and using $\eta_{t+1}\leq\eta_t$, we derive
   \[
   \eta_t^2f(\bw_t;z_{i_t})\leq \eta_t\|\bw_t-\bw\|_2^2 - \eta_{t+1}\|\bw_{t+1}-\bw\|_2^2 + 2\eta_t^2f(\bw;z_{i_t})+c_{\alpha,2}\eta_t^{\frac{3-\alpha}{1-\alpha}}.
   \]
   Taking a summation of the above inequality gives
   \begin{equation}\label{computation-ave-4}
     \sum_{j=1}^{t}\eta_j^2f(\bw_j;z_{i_j})\leq\eta_1\|\bw\|_2^2+2\sum_{j=1}^{t}\eta_j^2f(\bw;z_{i_j})+c_{\alpha,2}\sum_{j=1}^{t}\eta_j^{\frac{3-\alpha}{1-\alpha}}.
   \end{equation}
   According to the Jensen's inequality and the concavity of $x\mapsto x^{\frac{2\alpha}{1+\alpha}}$, we know
   \begin{align}
     \sum_{j=1}^{t}\eta_j^2f^{\frac{2\alpha}{1+\alpha}}(\bw_j;z_{i_j}) & \leq \sum_{j=1}^{t}\eta_j^2\Big(\frac{\sum_{j=1}^{t}\eta_j^2f(\bw_j;z_{i_j})}{\sum_{j=1}^{t}\eta_j^2}\Big)^{\frac{2\alpha}{1+\alpha}} \notag\\
      & \leq \Big(\sum_{j=1}^{t}\eta_j^2\Big)^{\frac{1-\alpha}{1+\alpha}}\Big(\eta_1\|\bw\|_2^2+2\sum_{j=1}^{t}\eta_j^2f(\bw;z_{i_j})+c_{\alpha,2}\sum_{j=1}^{t}\eta_j^{\frac{3-\alpha}{1-\alpha}}\Big)^{\frac{2\alpha}{1+\alpha}},
      \label{computation-ave-5}
   \end{align}
   where in the last step we have used \eqref{computation-ave-4}. Taking an expectation on both sides of \eqref{computation-ave-3}, we know
   \[
   2\eta_t\ebb_A\big[F_S(\bw_t)-F_S(\bw)\big]\leq\ebb_A[\|\bw_t-\bw\|_2^2]-\ebb_A\big[\|\bw_{t+1}-\bw\|_2^2\big]+c_{\alpha,1}^2\eta_t^2\ebb_A\big[f^{\frac{2\alpha}{1+\alpha}}(\bw_t;z_{i_t})\big].
   \]
   Taking a summation of the above inequality gives
   \begin{align*}
   &2\sum_{j=1}^{t}\eta_j\ebb_A[F_S(\bw_j)-F_S(\bw)]\leq \|\bw\|_2^2+c_{\alpha,1}^2\sum_{j=1}^{t}\eta_j^2\ebb_A\big[f^{\frac{2\alpha}{1+\alpha}}(\bw_j;z_{i_j})\big]\\
   &\leq \|\bw\|_2^2+c_{\alpha,1}^2\Big(\sum_{j=1}^{t}\eta_j^2\Big)^{\frac{1-\alpha}{1+\alpha}}
   \Big(\eta_1\|\bw\|_2^2+2\sum_{j=1}^{t}\eta_j^2\ebb_A[F_S(\bw)]+c_{\alpha,2}\sum_{j=1}^{t}\eta_j^{\frac{3-\alpha}{1-\alpha}}\Big)^{\frac{2\alpha}{1+\alpha}},
   \end{align*}
   where we have used \eqref{computation-ave-5} and the concavity of $x\mapsto x^{\frac{2\alpha}{1+\alpha}}$ in the last step.
   The proof is complete by noting the independence between $\bw$ and $A$.
\end{proof}

\section{Proofs on Generalization by On-average Model Stability\label{sec:proof-gen-stab}}
To prove Theorem \ref{thm:gen-model-stab}, we introduce an useful inequality for $L$-smooth functions $\bw\mapsto f(\bw;z)$~\citep{nesterov2013introductory}
\begin{equation}\label{smooth-bound}
  f(\bw;z)\leq f(\tilde{\bw};z)+\langle\bw-\tilde{\bw},\partial f(\tilde{\bw};z)\rangle+\frac{L\|\bw-\tilde{\bw}\|_2^2}{2}.
\end{equation}


\begin{proof}[Proof of Theorem \ref{thm:gen-model-stab}]
  Due to the symmetry, we know
  \begin{align}
    \ebb_{S,A}\big[F(A(S))-F_S(A(S))\big] & = \ebb_{S,\widetilde{S},A}\Big[\frac{1}{n}\sum_{i=1}^{n}\big(F(A(S^{(i)}))-F_S(A(S))\big)\Big] \notag\\
     & = \ebb_{S,\widetilde{S},A}\Big[\frac{1}{n}\sum_{i=1}^{n}\big(f(A(S^{(i)});z_i)-f(A(S);z_i)\big)\Big], \label{gen-model-stab-1}
  \end{align}
  where the last identity holds since $A(S^{(i)})$ is independent of $z_i$.
  Under Assumption \ref{ass:lipschitz}, it is then clear that
  \[
  \big|\ebb_{S,A}\big[F(A(S))-F_S(A(S))\big]\big|\leq\ebb_{S,\widetilde{S},A}\Big[\frac{G}{n}\sum_{i=1}^{n}\|A(S)-A(S^{(i)})\|_2\Big].
  \]
  This proves Part (a).

  We now prove Part (b). According to \eqref{smooth-bound} due to the $L$-smoothness of $f$ and \eqref{gen-model-stab-1}, we know
  \[
  \ebb_{S,A}\big[F(A(S))-F_S(A(S))\big]
  \leq \frac{1}{n}\sum_{i=1}^{n}\ebb_{S,\widetilde{S},A}\Big[\langle A(S^{(i)})-A(S),\partial f(A(S);z_i)\rangle+\frac{L}{2}\|A(S^{(i)})-A(S)\|_2^2\Big].
  \]
  According to the Schwartz's inequality we know
  \begin{align*}
    \langle A(S^{(i)})-A(S),\partial f(A(S);z_i)\rangle & \leq \|A(S^{(i)})-A(S)\|_2\|\partial f(A(S);z_i)\|_2 \\
     & \leq \frac{\gamma}{2}\|A(S^{(i)})-A(S)\|_2^2 + \frac{1}{2\gamma}\|\partial f(A(S);z_i)\|_2^2\\
     & \leq \frac{\gamma}{2}\|A(S^{(i)})-A(S)\|_2^2 + \frac{L}{\gamma}f(A(S);z_i),
  \end{align*}
  where the last inequality is due to the self-bounding property of smooth functions (Lemma \ref{lem:self-bounding}).
  Combining the above two inequalities together, we derive
  \begin{align*}
    \ebb_{S,A}\big[F(A(S))-F_S(A(S))\big]
    & \leq \frac{L+\gamma}{2n}\sum_{i=1}^{n}\ebb_{S,\widetilde{S},A}\big[\|A(S^{(i)})-A(S)\|_2^2\big] + \frac{L}{n\gamma}\sum_{i=1}^{n}\ebb_{S,A}[f(A(S);z_i)].
  \end{align*}
  The stated inequality in Part (b) then follows directly by noting $\frac{1}{n}\sum_{i=1}^{n}f(A(S);z_i)=F_S(A(S))$.

  Finally, we consider Part (c). By \eqref{gen-model-stab-1} and the convexity of $f$, we know
  \[
  \ebb_{S,A}\big[F(A(S))-F_S(A(S))\big]
  \leq \frac{1}{n}\sum_{i=1}^{n}\ebb_{S,\widetilde{S},A}\Big[\langle A(S^{(i)})-A(S),\partial f(A(S^{(i)});z_i)\rangle\Big].
  \]
  By the Schwartz's inequality and Lemma \ref{lem:self-bounding} we know
  \begin{align*}
    \langle A(S^{(i)})-A(S),\partial f(A(S^{(i)});z_i)\rangle
     & \leq \frac{\gamma}{2}\|A(S^{(i)})-A(S)\|_2^2 + \frac{1}{2\gamma}\|\partial f(A(S^{(i)});z_i)\|_2^2\\
     & \leq \frac{\gamma}{2}\|A(S^{(i)})-A(S)\|_2^2 + \frac{c^2_{\alpha,1}}{2\gamma}f^{\frac{2\alpha}{1+\alpha}}(A(S^{(i)});z_i).
  \end{align*}
  Combining the above two inequalities together, we get
  \[
  \ebb_{S,A}\big[F(A(S))-F_S(A(S))\big]\leq \frac{\gamma}{2n}\sum_{i=1}^{n}\ebb_{S,\widetilde{S},A}\big[\|A(S^{(i)})-A(S)\|_2^2\big]
  +\frac{c^2_{\alpha,1}}{2\gamma}\frac{1}{n}\sum_{i=1}^{n}\ebb_{S,\widetilde{S},A}\big[f^{\frac{2\alpha}{1+\alpha}}(A(S^{(i)});z_i)\big].
  \]
  Since $x\mapsto x^{\frac{2\alpha}{1+\alpha}}$ is concave and $z_i$ is independent of $A(S^{(i)})$, we know
  \begin{align*}
  \ebb_{S,\widetilde{S},A}\big[f^{\frac{2\alpha}{1+\alpha}}(A(S^{(i)});z_i)\big]&\leq \ebb_{S,\widetilde{S},A}\Big[\Big(\ebb_{z_i}\Big[f(A(S^{(i)});z_i)\Big]\Big)^{\frac{2\alpha}{1+\alpha}}\Big]\\
  &=\ebb_{S,\widetilde{S},A}\Big[F^{\frac{2\alpha}{1+\alpha}}(A(S^{(i)}))\Big]=
  \ebb_{S,A}\Big[F^{\frac{2\alpha}{1+\alpha}}(A(S))\Big].
  \end{align*}
  A combination of the above two inequalities then gives the stated bound in Part (c).
  The proof is complete.
\end{proof}

\section{Proof on Learning without Bounded Gradients: Strongly Smooth Case}
\subsection{Stability bounds\label{sec:proof-lip-stab}}
A key property on establishing the stability of SGD is the non-expansiveness of the gradient-update operator established in the following lemma.
\begin{lemma}[\citealt{hardt2016train}\label{lem:nonexpansive}]
Assume for all $z\in\zcal$, the function $\bw\mapsto f(\bw;z)$ is convex and $L$-smooth. Then for $\eta\leq2/L$ we know
  \[
  \|\bw-\eta \partial f(\bw;z)-\tilde{\bw}+\eta \partial f(\tilde{\bw};z)\|_2\leq \|\bw-\tilde{\bw}\|_2.
  \]
\end{lemma}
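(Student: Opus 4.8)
The plan is to reduce the non-expansiveness to the \emph{co-coercivity} of the gradient of a convex $L$-smooth function, after which the claim follows from a one-line expansion. Abbreviate $d=\bw-\tilde{\bw}$ and $g=\partial f(\bw;z)-\partial f(\tilde{\bw};z)$. Then the square of the quantity to be bounded is
\[
\|d-\eta g\|_2^2=\|d\|_2^2-2\eta\langle d,g\rangle+\eta^2\|g\|_2^2,
\]
so it suffices to prove $\eta\|g\|_2^2\le 2\langle d,g\rangle$. Granting co-coercivity in the form $\langle d,g\rangle\ge L^{-1}\|g\|_2^2$, this is immediate: $2\langle d,g\rangle\ge 2L^{-1}\|g\|_2^2\ge\eta\|g\|_2^2$ precisely because $\eta\le 2/L$. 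Hence the whole argument rests on establishing co-coercivity for each fixed $z$.

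I would derive co-coercivity from the descent inequality \eqref{smooth-bound}. The preliminary step is the gradient-norm bound: for any convex $L$-smooth function $h$ with global minimizer $\mathbf{u}^\ast$, one has $h(\mathbf{u})-h(\mathbf{u}^\ast)\ge\frac{1}{2L}\|\partial h(\mathbf{u})\|_2^2$. This follows by applying \eqref{smooth-bound} (valid for the $L$-smooth $h$) to the single gradient step $\mathbf{u}'=\mathbf{u}-L^{-1}\partial h(\mathbf{u})$, which gives $h(\mathbf{u}')\le h(\mathbf{u})-\frac{1}{2L}\|\partial h(\mathbf{u})\|_2^2$, and then using $h(\mathbf{u}^\ast)\le h(\mathbf{u}')$. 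I then apply this bound to the two auxiliary functions $\mathbf{u}\mapsto f(\mathbf{u};z)-\langle\partial f(\tilde{\bw};z),\mathbf{u}\rangle$ and $\mathbf{u}\mapsto f(\mathbf{u};z)-\langle\partial f(\bw;z),\mathbf{u}\rangle$, each convex and $L$-smooth and minimized at $\tilde{\bw}$ and $\bw$ respectively (their gradients vanish there by construction). Evaluating the first at $\mathbf{u}=\bw$ and the second at $\mathbf{u}=\tilde{\bw}$ produces two inequalities; adding them cancels the raw function values, and the linear terms combine into $\langle d,g\rangle$, leaving exactly $\langle d,g\rangle\ge L^{-1}\|g\|_2^2$.

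The main obstacle is the auxiliary gradient-norm bound rather than the final expansion. The delicate points are that the shifted functions remain convex and $L$-smooth (smoothness is preserved under adding a linear term, since the H\"older gradient condition with $\alpha=1$ transfers directly), and that each indeed attains its minimum at the claimed point so that the step $h(\mathbf{u}^\ast)\le h(\mathbf{u}')$ is legitimate---this uses that a convex differentiable function with vanishing gradient at a point is globally minimized there. Everything else is bookkeeping: once co-coercivity is in hand, neither convexity nor smoothness is invoked again beyond the single scalar comparison governed by the step-size restriction $\eta\le 2/L$.
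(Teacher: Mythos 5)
Your proof is correct. The paper does not actually prove Lemma \ref{lem:nonexpansive}---it is quoted from \citet{hardt2016train} without argument---and your derivation is exactly the standard route taken there: establish co-coercivity $\langle \bw-\tilde{\bw},\partial f(\bw;z)-\partial f(\tilde{\bw};z)\rangle\geq L^{-1}\|\partial f(\bw;z)-\partial f(\tilde{\bw};z)\|_2^2$ via the gradient-norm lower bound applied to the two linearly shifted functions, then expand the square and use $\eta\leq 2/L$.
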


Based on Lemma \ref{lem:nonexpansive}, we establish stability bounds of models for SGD applied to two sets differing by a single example.
\begin{lemma}\label{lem:on-average}
  Assume for all $z\in\zcal$, the function $\bw\mapsto f(\bw;z)$ is nonnegative, convex and $L$-smooth.
  Let $S,\widetilde{S}$ and $S^{(i)}$ be constructed as Definition \ref{def:aver-stab}. Let $\{\bw_t\}$ and $\{\bw_t^{(i)}\}$ be produced by \eqref{SGD} with $\eta_t\leq2/L$ based on $S$ and $S^{(i)}$, respectively. Then for any $p>0$ we have
  \begin{equation}\label{on-average-a}
  \ebb_{S,\widetilde{S},A}\big[\|\bw_{t+1}-\bw_{t+1}^{(i)}\|_2\big]\leq \frac{2\sqrt{2L}}{n}\sum_{j=1}^{t}\eta_j\ebb_{S,A}\big[\sqrt{f(\bw_j;z_i)}\big]
  \end{equation}
  and
  \begin{equation}\label{on-average-b}
  \ebb_{S,\widetilde{S},A}\big[\|\bw_{t+1}-\bw_{t+1}^{(i)}\|_2^2\big]
  \leq \frac{8(1+1/p)L}{n}\sum_{j=1}^{t}(1+p/n)^{t-j}\eta_j^2\ebb_{S,A}\big[f(\bw_j;z_i)\big].
  \end{equation}
\end{lemma}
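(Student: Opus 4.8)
The plan is to compare the two SGD trajectories $\{\bw_t\}$ and $\{\bw_t^{(i)}\}$ step by step under the natural coupling that feeds the \emph{same} random index $i_t$ into both recursions. Since $S$ and $S^{(i)}$ agree in every coordinate except the $i$-th, the per-step analysis splits into two cases according to whether $i_t=i$. When $i_t\neq i$ both updates apply the identical gradient map $\bw\mapsto\bw-\eta_t\partial f(\bw;z_{i_t})$, so Lemma \ref{lem:nonexpansive} together with the non-expansiveness of $\Pi_\Omega$ gives the contraction $\|\bw_{t+1}-\bw_{t+1}^{(i)}\|_2\leq\|\bw_t-\bw_t^{(i)}\|_2$. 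When $i_t=i$ the two updates use different data ($z_i$ versus $\tilde z_i$); using again that $\Pi_\Omega$ is non-expansive and then the triangle inequality on $(\bw_t-\bw_t^{(i)})-\eta_t\big(\partial f(\bw_t;z_i)-\partial f(\bw_t^{(i)};\tilde z_i)\big)$, I bound each gradient norm by the self-bounding property (Lemma \ref{lem:self-bounding} with $\alpha=1$, giving $c_{1,1}=\sqrt{2L}$), obtaining $\|\bw_{t+1}-\bw_{t+1}^{(i)}\|_2\leq\|\bw_t-\bw_t^{(i)}\|_2+\eta_t\sqrt{2L}\big(\sqrt{f(\bw_t;z_i)}+\sqrt{f(\bw_t^{(i)};\tilde z_i)}\big)$.

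Taking the expectation over $i_t$, where the event $i_t=i$ carries probability $1/n$, merges the two cases into a single recursion in which the contraction holds and the extra gradient terms appear with a $1/n$ prefactor. For the $\ell_1$ bound \eqref{on-average-a} I unroll this recursion from $\bw_1=\bw_1^{(i)}=0$ and pass to the full expectation over $S,\widetilde S,A$. The two summands $\sqrt{f(\bw_j;z_i)}$ and $\sqrt{f(\bw_j^{(i)};\tilde z_i)}$ have equal expectation by the symmetry of the construction: swapping $z_i\leftrightarrow\tilde z_i$ is measure-preserving and maps $(\bw_j,z_i)$ to $(\bw_j^{(i)},\tilde z_i)$, so $\ebb[\sqrt{f(\bw_j^{(i)};\tilde z_i)}]=\ebb[\sqrt{f(\bw_j;z_i)}]$. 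This collapses the two terms into $2\sqrt{f(\bw_j;z_i)}$ and yields the prefactor $2\sqrt{2L}/n$.

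For the $\ell_2$ bound \eqref{on-average-b} I square the $i_t=i$ estimate and split the cross term with the elementary inequality $(a+b)^2\leq(1+p)a^2+(1+1/p)b^2$; using $(\sqrt{u}+\sqrt{v})^2\leq2(u+v)$ on the gradient part then produces the constant $4L(1+1/p)\eta_t^2$ on the data term. Averaging over $i_t$ turns the Case-2 multiplier $1+p$ into the recursion multiplier $1+p/n$ (the probability $1/n$ of the perturbed index is exactly what converts $1+p$ to $1+p/n$), and invoking the same symmetry to replace $\ebb[f(\bw_t^{(i)};\tilde z_i)]$ by $\ebb[f(\bw_t;z_i)]$ gives the linear recursion $u_{t+1}\leq(1+p/n)u_t+\tfrac{8(1+1/p)L\eta_t^2}{n}\ebb_{S,A}[f(\bw_t;z_i)]$ for $u_t:=\ebb_{S,\widetilde S,A}[\|\bw_t-\bw_t^{(i)}\|_2^2]$, whose solution from $u_1=0$ is precisely the claimed weighted sum with the geometric factor $(1+p/n)^{t-j}$.

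I expect the main obstacle to be the bookkeeping in the squared case: one must split the cross term at parameter $p$ (rather than $p/n$) and then let the $1/n$ probability of the perturbed index produce the recursion multiplier $1+p/n$, while keeping the data-dependent term in the stated constant $8(1+1/p)L$ and correctly propagating the geometric weight through the unrolling. The symmetry argument, though routine, must also be stated with care because the SGD randomness (the shared index sequence) is coupled across the two trajectories; one has to verify that the swap $z_i\leftrightarrow\tilde z_i$ preserves the joint law of everything in sight so that the relabelling of expectations is legitimate.
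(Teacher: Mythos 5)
Your proposal is correct and follows essentially the same route as the paper's proof: the same coupling with a shared index sequence, the same case split on $i_t=i$ versus $i_t\neq i$ using Lemma \ref{lem:nonexpansive} and the self-bounding property, the same $(a+b)^2\leq(1+p)a^2+(1+1/p)b^2$ splitting that turns the $1/n$ probability of hitting the perturbed index into the recursion multiplier $1+p/n$, and the same symmetry identity to replace $\ebb[f(\bw_t^{(i)};\tilde z_i)]$ by $\ebb[f(\bw_t;z_i)]$. The only cosmetic difference is that you square the already self-bounded sum of gradient norms rather than first bounding $\|\partial f(\bw_t;z_i)-\partial f(\bw_t^{(i)};\tilde z_i)\|_2^2$ by twice the sum of squares, which yields identical constants.
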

\begin{proof}
  If $i_t\neq i$, we know the updates of $\bw_{t+1}$ and $\bw_{t+1}^{(i)}$ are based on stochastic gradients calculated with the same example $z_{i_t}$.
  By Lemma \ref{lem:nonexpansive} we then get
  \begin{equation}\label{on-average-1}
  \|\bw_{t+1}-\bw_{t+1}^{(i)}\|_2\leq \big\|\bw_t-\eta_t\partial f(\bw_t;z_{i_t})-\bw_t^{(i)}+\eta_t\partial f(\bw_t^{(i)},z_{i_t})\big\|_2 \leq \|\bw_{t}-\bw_{t}^{(i)}\|_2.
  \end{equation}
  If $i_t=i$, we know
  \begin{align}
     \|\bw_{t+1}-\bw_{t+1}^{(i)}\|_2 & \leq \big\|\bw_t-\eta_t\partial f(\bw_t;z_i)-\bw_t^{(i)}+\eta_t\partial f(\bw_t^{(i)},\tilde{z}_i)\big\|_2 \notag \\
     & \leq \|\bw_{t}-\bw_{t}^{(i)}\|_2 + \eta_t\|\partial f(\bw_t;z_i)-\partial f(\bw_t^{(i)};\tilde{z}_i)\|_2\label{on-average-2}\\
     & \leq \|\bw_{t}-\bw_{t}^{(i)}\|_2 + \eta_t\|\partial f(\bw_t;z_i)\|_2 + \eta_t\|\partial f(\bw_t^{(i)};\tilde{z}_i)\|_2\notag\\
     & \leq \|\bw_{t}-\bw_{t}^{(i)}\|_2 + \sqrt{2L}\eta_t\Big(\sqrt{f(\bw_t;z_i)}+\sqrt{f(\bw_t^{(i)};\tilde{z}_i)}\Big),\label{on-average-3}
  \end{align}
  where the second inequality follows from the sub-additivity of $\|\cdot\|_2$
  and the last inequality is due to Lemma \ref{lem:self-bounding} on the self-bounding property of smooth functions.

  We first prove Eq. \eqref{on-average-a}.
  Since $i_t$ is drawn from the uniform distribution over $\{1,\ldots,n\}$, we can combine Eqs. \eqref{on-average-1} and \eqref{on-average-3} to derive
  \[
  \ebb_A\big[\|\bw_{t+1}-\bw_{t+1}^{(i)}\|_2\big]\leq \ebb_A\big[\|\bw_t-\bw_t^{(i)}\|\big]+\frac{\sqrt{2L}\eta_t}{n}\ebb_A\Big[\sqrt{f(\bw_t;z_i)}+\sqrt{f(\bw_t^{(i)};\tilde{z}_i)}\Big].
  \]
  Since $z_i$ and $\tilde{z}_i$ follow from the same distribution, we know
  \begin{equation}\label{symmetry}
    \ebb_{S,\widetilde{S},A}\big[\sqrt{f(\bw_t^{(i)};\tilde{z}_i)}\big]=\ebb_{S,A}\big[\sqrt{f(\bw_t;z_i)}\big].
  \end{equation}
  It then follows that
  \[
  \ebb_{S,\widetilde{S},A}\big[\|\bw_{t+1}-\bw_{t+1}^{(i)}\|_2\big]\leq \ebb_{S,\widetilde{S},A}\big[\|\bw_t-\bw_t^{(i)}\|\big]+\frac{2\sqrt{2L}\eta_t}{n}\ebb_{S,A}\big[\sqrt{f(\bw_t;z_i)}\big].
  \]
  Taking a summation of the above inequality and using $\bw_1=\bw_1^{(i)}$ then give \eqref{on-average-a}.

  We now turn to \eqref{on-average-b}. For the case $i_t=i$, it follows from \eqref{on-average-2} and the standard inequality $(a+b)^2\leq (1+p)a^2+(1+1/p)b^2$ that
  \begin{align}
    \|\bw_{t+1}-\bw_{t+1}^{(i)}\|_2^2 & \leq (1+p)\|\bw_{t}-\bw_{t}^{(i)}\|_2^2 + (1+p^{-1})\eta_t^2\|\partial f(\bw_t;z_i)-\partial f(\bw_t^{(i)};\tilde{z}_i)\|_2^2 \notag\\
     & \leq (1+p)\|\bw_{t}-\bw_{t}^{(i)}\|_2^2 +2(1+p^{-1})\eta_t^2\|\partial f(\bw_t;z_i)\|_2^2 + 2(1+p^{-1})\eta_t^2\|\partial f(\bw_t^{(i)};\tilde{z}_i)\|_2^2\notag\\
     & \leq (1+p)\|\bw_{t}-\bw_{t}^{(i)}\|_2^2 + 4(1+p^{-1})L\eta_t^2f(\bw_t;z_i) + 4(1+p^{-1})L\eta_t^2f(\bw_t^{(i)};\tilde{z}_i),\label{on-average-4}
  \end{align}
  where the last inequality is due to Lemma \ref{lem:self-bounding}.
  Combining \eqref{on-average-1}, the above inequality together and noticing the distribution of $i_t$, we derive
  \[
  \ebb_{A}\big[\|\bw_{t+1}-\bw_{t+1}^{(i)}\|_2^2\big]\leq (1+p/n)\ebb_A\big[\|\bw_{t}-\bw_{t}^{(i)}\|_2^2\big]+\frac{4(1+p^{-1})L\eta_t^2}{n}\ebb_A\big[f(\bw_t;z_i)+f(\bw_t^{(i)};\tilde{z}_i)\big].
  \]
  Analogous to \eqref{symmetry}, we have
  \begin{equation}\label{symmetry-2}
    \ebb_{S,\widetilde{S},A}\big[f(\bw_t^{(i)};\tilde{z}_i)\big]=\ebb_{S,A}\big[f(\bw_t;z_i)\big]
  \end{equation}
  and get
  \[
  \ebb_{S,\widetilde{S},A}\big[\|\bw_{t+1}-\bw_{t+1}^{(i)}\|_2^2\big]\leq (1+p/n)\ebb_{S,\widetilde{S},A}\big[\|\bw_{t}-\bw_{t}^{(i)}\|_2^2\big]+\frac{8(1+p^{-1})L\eta_t^2}{n}\ebb_{S,A}\big[f(\bw_t;z_i)\big].
  \]
  Multiplying both sides by $(1+p/n)^{-(t+1)}$ yields that
  \begin{multline*}
  (1+p/n)^{-(t+1)}\ebb_{S,\widetilde{S},A}\big[\|\bw_{t+1}-\bw_{t+1}^{(i)}\|_2^2\big]\leq (1+p/n)^{-t}\ebb_{S,\widetilde{S},A}\big[\|\bw_{t}-\bw_{t}^{(i)}\|_2^2\big]\\+\frac{8(1+p^{-1})L(1+p/n)^{-(t+1)}\eta_t^2}{n}\ebb_{S,A}\big[f(\bw_t;z_i)\big].
  \end{multline*}
  Taking a summation of the above inequality and using $\bw_1=\bw_1^{(i)}$, we get
  \[
  (1+p/n)^{-(t+1)}\ebb_{S,\widetilde{S},A}\big[\|\bw_{t+1}-\bw_{t+1}^{(i)}\|_2^2\big]
  \leq \sum_{j=1}^{t}\frac{8(1+p^{-1})L(1+p/n)^{-(j+1)}\eta_j^2}{n}\ebb_{S,A}\big[f(\bw_j;z_i)\big].
  \]
  The stated bound then follows.
\end{proof}

\begin{proof}[Proof of Theorem \ref{thm:on-average}]
  We first prove \eqref{on-average}. According to Lemma \ref{lem:on-average} (Eq. \eqref{on-average-a}), we know
  \[
  \ebb_{S,\widetilde{S},A}\Big[\frac{1}{n}\sum_{i=1}^{n}\|\bw_{t+1}-\bw_{t+1}^{(i)}\|_2\Big]\leq \frac{2\sqrt{2L}}{n^2}\sum_{i=1}^{n}\sum_{j=1}^{t}\eta_j\ebb_{S,A}\big[\sqrt{f(\bw_j;z_i)}\big].
  \]
  It then follows from the concavity of the square-root function and the Jensen's inequality that
  \begin{align*}
    \ebb_{S,\widetilde{S},A}\Big[\frac{1}{n}\sum_{i=1}^{n}\|\bw_{t+1}-\bw_{t+1}^{(i)}\|_2\Big]
    & \leq \frac{2\sqrt{2L}}{n}\sum_{j=1}^{t}\eta_j\ebb_{S,A}\bigg[\sqrt{n^{-1}\sum_{i=1}^{n}f(\bw_j;z_i)}\bigg] \\
    & = \frac{2\sqrt{2L}}{n}\sum_{j=1}^{t}\eta_j\ebb_{S,A}\Big[\sqrt{F_S(\bw_j)}\Big].
  \end{align*}
  This proves \eqref{on-average}.

  We now turn to \eqref{on-average-l2}.
  It follows from Lemma \ref{lem:on-average} (Eq. \eqref{on-average-b}) that
  \begin{align*}
    \ebb_{S,\widetilde{S},A}\Big[\frac{1}{n}\sum_{i=1}^{n}\|\bw_{t+1}-\bw_{t+1}^{(i)}\|_2^2\Big]
    & \leq \frac{8(1+p^{-1})L}{n^2}\sum_{i=1}^{n}\sum_{j=1}^{t}(1+p/n)^{t-j}\eta_j^2\ebb_{S,A}\big[f(\bw_j;z_i)\big]\\
    & = \frac{8(1+p^{-1})L}{n}\sum_{j=1}^{t}(1+p/n)^{t-j}\eta_j^2\ebb_{S,A}\big[F_S(\bw_j)\big].
  \end{align*}
  The proof is complete.
\end{proof}

\begin{proposition}[Stability bounds for non-convex learning]\label{thm:on-average-non-convex}
  Let Assumptions of Theorem \ref{thm:on-average} hold except that we do not require the convexity of $\bw\mapsto f(\bw;z)$. Then for any $p>0$ we have
  \[
    \ebb_{S,\widetilde{S},A}\Big[\frac{1}{n}\sum_{i=1}^{n}\|\bw_{t+1}-\bw_{t+1}^{(i)}\|_2^2\Big]
    \leq (1+p/n)(1+\eta_tL)^2\ebb_{S,\widetilde{S},A}\Big[\frac{1}{n}\sum_{i=1}^{n}\|\bw_{t}-\bw_{t}^{(i)}\|_2^2\Big]
    + \frac{8(1+p^{-1})L\eta_t^2}{n}\ebb_{S,A}\big[F_S(\bw_t)\big].
  \]
\end{proposition}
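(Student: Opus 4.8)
The plan is to mimic the proof of Lemma~\ref{lem:on-average} (Eq.~\eqref{on-average-b}), tracking exactly where convexity was used and supplying a substitute based only on $L$-smoothness. The single place convexity entered was the case $i_t\neq i$, where the non-expansiveness of the gradient-update operator (Lemma~\ref{lem:nonexpansive}) gave the contraction $\|\bw_{t+1}-\bw_{t+1}^{(i)}\|_2\leq\|\bw_t-\bw_t^{(i)}\|_2$ via \eqref{on-average-1}. Without convexity this contraction fails, so first I would instead invoke only the non-expansiveness of the projection \eqref{proj-cont}, together with the triangle inequality and the Lipschitz continuity of the gradient \eqref{Lip-condition}, to obtain, for $i_t\neq i$,
\[
\|\bw_{t+1}-\bw_{t+1}^{(i)}\|_2\leq\|\bw_t-\bw_t^{(i)}\|_2+\eta_t\|\partial f(\bw_t;z_{i_t})-\partial f(\bw_t^{(i)};z_{i_t})\|_2\leq(1+\eta_tL)\|\bw_t-\bw_t^{(i)}\|_2,
\]
and hence $\|\bw_{t+1}-\bw_{t+1}^{(i)}\|_2^2\leq(1+\eta_tL)^2\|\bw_t-\bw_t^{(i)}\|_2^2$. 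This is precisely where the extra factor $(1+\eta_tL)^2$ in the statement originates.

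The case $i_t=i$ is left unchanged, since it never used convexity: it relied only on projection non-expansiveness, the triangle inequality \eqref{on-average-2}, the elementary bound $(a+b)^2\leq(1+p)a^2+(1+p^{-1})b^2$, and the self-bounding property (Lemma~\ref{lem:self-bounding}). I would therefore reuse \eqref{on-average-4} verbatim,
\[
\|\bw_{t+1}-\bw_{t+1}^{(i)}\|_2^2\leq(1+p)\|\bw_t-\bw_t^{(i)}\|_2^2+4(1+p^{-1})L\eta_t^2\big(f(\bw_t;z_i)+f(\bw_t^{(i)};\tilde{z}_i)\big).
\]
Next I would average over the draw $i_t\sim\mathrm{Unif}\{1,\ldots,n\}$, which assigns probability $1/n$ to the event $i_t=i$ and $(n-1)/n$ to $i_t\neq i$. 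Collecting the coefficients of $\|\bw_t-\bw_t^{(i)}\|_2^2$ yields $\frac{1+p}{n}+\frac{n-1}{n}(1+\eta_tL)^2$, and the key elementary step is to verify that this is dominated by $(1+p/n)(1+\eta_tL)^2$; since the difference equals $\frac{1+p}{n}\big((1+\eta_tL)^2-1\big)\geq0$, this holds because $(1+\eta_tL)^2\geq1$.

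Finally I would take the coordinate average $\frac{1}{n}\sum_{i=1}^n$ and the expectation $\ebb_{S,\widetilde{S},A}$, using the symmetry identity \eqref{symmetry-2} to replace $\ebb_{S,\widetilde{S},A}[f(\bw_t^{(i)};\tilde{z}_i)]$ by $\ebb_{S,A}[f(\bw_t;z_i)]$; the two loss terms then merge into a single term with a factor $8$, and the identity $\frac{1}{n}\sum_{i=1}^n f(\bw_t;z_i)=F_S(\bw_t)$ collapses the data sum into the empirical risk, producing exactly the claimed recurrence. I expect the main obstacle to be the first step: finding the right replacement for the lost contraction and, more subtly, checking that the resulting mixed coefficient on the previous iterate can be cleanly dominated by $(1+p/n)(1+\eta_tL)^2$. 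Once the trivial inequality $(1+\eta_tL)^2\geq1$ is exploited, the remainder is the same bookkeeping as in Lemma~\ref{lem:on-average}.
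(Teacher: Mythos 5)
Your proposal is correct and follows essentially the same route as the paper: the $(1+\eta_tL)$ expansion for $i_t\neq i$ via gradient Lipschitzness in place of the lost non-expansiveness, the unchanged bound \eqref{on-average-4} for $i_t=i$, and the symmetry identity \eqref{symmetry-2} to merge the two loss terms. Your explicit verification that the mixed coefficient $\frac{1+p}{n}+\frac{n-1}{n}(1+\eta_tL)^2$ is dominated by $(1+p/n)(1+\eta_tL)^2$ is a small but welcome piece of bookkeeping that the paper leaves implicit.
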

\begin{proof}
  If $i_t\neq i$, then by the $L$-smoothness of $f$ we know
  \begin{equation}\label{on-average-5}
  \|\bw_{t+1}-\bw_{t+1}^{(i)}\|_2\leq \|\bw_t-\bw_t^{(i)}\|_2+\eta_t\big\|\partial f(\bw_t;z_{i_t})-\partial f(\bw_t^{(i)},z_{i_t})\big\|_2\leq (1+\eta_tL)\|\bw_t-\bw_t^{(i)}\|_2.
  \end{equation}
  If $i_t= i$, then analogous to \eqref{on-average-4}, one can get
  \[
  \|\bw_{t+1}-\bw_{t+1}^{(i)}\|_2^2\leq (1+p)\|\bw_{t}-\bw_{t}^{(i)}\|_2^2 + 4(1+p^{-1})L\eta_t^2f(\bw_t;z_i) + 4(1+p^{-1})L\eta_t^2f(\bw_t^{(i)};\tilde{z}_i).
  \]
  By the uniform distribution of $i_t\in\{1,2,\ldots,n\}$ we can combine the above inequality and \eqref{on-average-5} to derive
  \[
  \ebb_A\big[\|\bw_{t+1}-\bw_{t+1}^{(i)}\|_2^2\big]\leq (1+p/n)(1+\eta_tL)^2\ebb_A\big[\|\bw_{t}-\bw_{t}^{(i)}\|_2^2\big] + \frac{4(1+p^{-1})L\eta_t^2}{n}\ebb_A\big[f(\bw_t;z_i)+f(\bw_t^{(i)};\tilde{z}_i)\big].
  \]
  This together with \eqref{symmetry-2} implies
  \[
  \ebb_{S,\widetilde{S},A}\big[\|\bw_{t+1}-\bw_{t+1}^{(i)}\|_2^2\big]\leq (1+p/n)(1+\eta_tL)^2\ebb_{S,\widetilde{S},A}\big[\|\bw_{t}-\bw_{t}^{(i)}\|_2^2\big] + \frac{8(1+p^{-1})L\eta_t^2}{n}\ebb_{S,A}\big[f(\bw_t;z_i)\big].
  \]
  It then follows that
  \begin{align*}
    & \ebb_{S,\widetilde{S},A}\Big[\frac{1}{n}\sum_{i=1}^{n}\|\bw_{t+1}-\bw_{t+1}^{(i)}\|_2^2\Big] \\
    & \leq (1+p/n)(1+\eta_tL)^2\ebb_{S,\widetilde{S},A}\Big[\frac{1}{n}\sum_{i=1}^{n}\|\bw_{t}-\bw_{t}^{(i)}\|_2^2\Big]
    + \frac{8(1+p^{-1})L\eta_t^2}{n^2}\sum_{i=1}^{n}\ebb_{S,A}\big[f(\bw_t;z_i)\big]\\
    & = (1+p/n)(1+\eta_tL)^2\ebb_{A}\Big[\frac{1}{n}\sum_{i=1}^{n}\|\bw_{t}-\bw_{t}^{(i)}\|_2^2\Big]
    + \frac{8(1+p^{-1})L\eta_t^2}{n}\ebb_{S,A}\big[F_S(\bw_t)\big].
  \end{align*}
  The proof is complete.
\end{proof}

\subsection{Generalization bounds\label{sec:proof-lip-gen}}

We now prove generalization bounds for SGD.
\begin{proof}[Proof of Theorem \ref{thm:gen-lipschitz}]
According to Part (c) of Lemma \ref{lem:computation-ave} with $\bw=\bw^*$, we know the following inequality
\begin{equation}\label{gen-lipschitz-1}
\sum_{t=1}^{T}\eta_t\ebb_A[F_S(\bw_t)-F_S(\bw^*)]  \leq (1/2+L\eta_1)\|\bw^*\|_2^2 + 2L\sum_{t=1}^{T}\eta_t^2F_S(\bw^*).
\end{equation}

Let $A(S)$ be the $(t+1)$-th iterate of SGD applied to the dataset $S$.
  We plug \eqref{on-average-l2} into Part (b) of Theorem \ref{thm:gen-model-stab}, and derive
  \[
  \ebb_{S,A}\big[F(\bw_{t+1})\big]\leq (1+L\gamma^{-1})\ebb_{S,A}\big[F_S(\bw_{t+1})\big] +
  \frac{4(1+p^{-1})(L+\gamma)L\big(1+p/n\big)^{t-1}}{n}\sum_{j=1}^{t}\eta_j^2\ebb_{S,A}\big[F_S(\bw_j)\big].
  \]
  We can plug \eqref{computation-ave-2} with $\bw=\bw^*$ into the above inequality, and derive
  \begin{multline*}
    \ebb_{S,A}\big[F(\bw_{t+1})\big]\leq (1+L\gamma^{-1})\ebb_{S,A}\big[F_S(\bw_{t+1})\big] \\+
  \frac{4(1+p^{-1})(L+\gamma)L\big(1+p/n\big)^{t-1}}{n}\big(\eta_1\|\bw^*\|_2^2+2\sum_{j=1}^{t}\eta_j^2\ebb_{S,A}[F_S(\bw^*)]\big).
\end{multline*}
We choose $p=n/T$, then $(1+p/n)^{T-1}=(1+1/T)^{T-1}\leq e$ and therefore the following inequality holds for all $t=1,\ldots,T$ (note $\ebb_{S,A}[F_S(\bw^*)]=F(\bw^*)$)
\[
\ebb_{S,A}\big[F(\bw_{t+1})\big]\leq (1+L\gamma^{-1})\ebb_{S,A}\big[F_S(\bw_{t+1})\big] +
  \frac{4(1+T/n)(L+\gamma)Le}{n}\big(\eta_1\|\bw^*\|_2^2+2\sum_{j=1}^{t}\eta_j^2F(\bw^*)\big).
\]
Multiplying both sides by $\eta_{t+1}$ followed with a summation gives
\[
  \sum_{t=1}^{T}\eta_t\ebb_{S,A}[F(\bw_{t})]\leq \big(1+L/\gamma\big)\sum_{t=1}^{T}\eta_t\ebb_{S,A}[F_S(\bw_{t})]+
  \frac{4(1+T/n)(L+\gamma)Le}{n}\sum_{t=1}^{T}\eta_t\big(\eta_1\|\bw^*\|_2^2+2\sum_{j=1}^{t-1}\eta_j^2F(\bw^*)\big).
\]

Putting \eqref{gen-lipschitz-1} into the above inequality then gives
\begin{multline*}
  \sum_{t=1}^{T}\eta_t\ebb_{S,A}[F(\bw_{t})]\leq \big(1+L/\gamma\big)\Big(\sum_{t=1}^{T}\eta_t\ebb_{S,A}[F_S(\bw^*)]+(1/2+L\eta_1)\|\bw^*\|_2^2 + 2L\sum_{t=1}^{T}\eta_t^2\ebb_{S,A}[F_S(\bw^*)]\Big)\\+
  \frac{4(1+T/n)(L+\gamma)Le}{n}\sum_{t=1}^{T}\eta_t\big(\eta_1\|\bw^*\|_2^2+2\sum_{j=1}^{t-1}\eta_j^2F(\bw^*)\big).
\end{multline*}
Since $\ebb_S[F_S(\bw^*)]=F(\bw^*)$, it follows that
\begin{multline*}
  \sum_{t=1}^{T}\eta_t\ebb_{S,A}[F(\bw_{t})-F(\bw^*)]\leq \frac{L}{\gamma}\sum_{t=1}^{T}\eta_tF(\bw^*)+
  \big(1+L/\gamma\big)\Big((1/2+L\eta_1)\|\bw^*\|_2^2 + 2L\sum_{t=1}^{T}\eta_t^2F(\bw^*)]\Big)\\+
  \frac{4(1+T/n)(L+\gamma)Le}{n}\sum_{t=1}^{T}\eta_t\big(\eta_1\|\bw^*\|_2^2+2\sum_{j=1}^{t-1}\eta_j^2F(\bw^*)\big).
\end{multline*}
The stated inequality then follows from Jensen's inequality. The proof is complete.
\end{proof}
\begin{proof}[Proof of Corollary \ref{cor:gen-lipschitz}]
  We first prove Part (a).
  For the chosen step size, we know
  \begin{equation}\label{cor-gen-lipschitz-1}
    \sum_{t=1}^{T}\eta_t^2=c^2\sum_{t=1}^{T}\frac{1}{T}=c^2\quad\text{and}\quad\sum_{t=1}^{T}\eta_t=c\sqrt{T}.
  \end{equation}
  The stated bound \eqref{cor-gen-lipschitz-a} then follows from Theorem \ref{thm:gen-lipschitz}, $\gamma=\sqrt{n}$ and \eqref{cor-gen-lipschitz-1}.

  We now prove Part (b). The stated bound \eqref{cor-gen-lipschitz-b} then follows from Theorem \ref{thm:gen-lipschitz}, $F(\bw^*)=0$ and $\gamma=1$.
  The proof is complete.
\end{proof}

\section{Proof on Learning without Bounded Gradients: Non-smooth Case}

\subsection{Stability bounds\label{sec:proof-holder-stab}}
Theorem \ref{thm:on-average-holder-maintext} is a direct application of the following general stability bounds with $p=n/t$. Therefore, it suffices to prove Theorem \ref{thm:on-average-holder}.
\begin{theorem}\label{thm:on-average-holder}
  Assume for all $z\in\zcal$, the function $\bw\mapsto f(\bw;z)$ is nonnegative, convex and $\partial f(\bw;z)$ is $(\alpha,L)$-H\"older continuous with $\alpha\in[0,1)$.
  Let $S,\widetilde{S}$ and $S^{(i)}$ be constructed as Definition \ref{def:aver-stab} and
  $c_{\alpha,3}=\frac{\sqrt{1-\alpha}}{\sqrt{1+\alpha}}(2^{-\alpha}L)^{\frac{1}{1-\alpha}}$. Let $\bw_t$ and $\bw_t^{(i)}$ be the $t$-th iterate produced by \eqref{SGD} based on $S$ and $S^{(i)}$, respectively. Then for any $p>0$ we have
    \begin{multline}
    \ebb_{S,\widetilde{S},A}\Big[\frac{1}{n}\sum_{i=1}^{n}\|\bw_{t+1}-\bw_{t+1}^{(i)}\|_2^2\Big]
      \leq c^2_{\alpha,3}\sum_{j=1}^{t}(1+p/n)^{t+1-j}\eta_j^{\frac{2}{1-\alpha}}
  \\+4(1+p^{-1})c^2_{\alpha,1}\sum_{j=1}^{t}\frac{(1+p/n)^{t-j}\eta_j^2}{n}
  \ebb_{S,A}\Big[F_S^{\frac{2\alpha}{1+\alpha}}(\bw_j)\Big].\label{on-average-holder-l2}
  \end{multline}
\end{theorem}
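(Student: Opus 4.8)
The plan is to mirror the structure of Lemma~\ref{lem:on-average}: establish a one-step recursion for $a_{t}:=\ebb_{S,\widetilde{S},A}\big[\frac1n\sum_{i=1}^n\|\bw_{t}-\bw_{t}^{(i)}\|_2^2\big]$ by conditioning on whether the index $i_t$ drawn at step $t$ coincides with the perturbed coordinate $i$, and then unroll it from $a_1=0$. Concretely, I expect \eqref{on-average-holder-l2} to follow once I prove
\[
a_{t+1}\leq(1+p/n)\Big(a_t+c_{\alpha,3}^2\eta_t^{\frac{2}{1-\alpha}}\Big)+\frac{4(1+p^{-1})c_{\alpha,1}^2\eta_t^2}{n}\ebb_{S,A}\big[F_S^{\frac{2\alpha}{1+\alpha}}(\bw_t)\big],
\]
since unrolling this reproduces the bound, the extra power $(1+p/n)^{t+1-j}$ on the first sum arising precisely because the $\eta_j^{2/(1-\alpha)}$ term sits inside the $(1+p/n)$ factor while the self-bounding term does not.

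For the case $i_t=i$ the two trajectories use distinct examples $z_i,\tilde z_i$, so I would argue exactly as in \eqref{on-average-3}: non-expansiveness of $\Pi_\Omega$ and the triangle inequality bound $\|\bw_{t+1}-\bw_{t+1}^{(i)}\|_2$ by $\|\bw_t-\bw_t^{(i)}\|_2+\eta_t(\|\partial f(\bw_t;z_i)\|_2+\|\partial f(\bw_t^{(i)};\tilde z_i)\|_2)$, after which the self-bounding property (Lemma~\ref{lem:self-bounding}) $\|\partial f(\bw;z)\|_2\le c_{\alpha,1}f^{\frac{\alpha}{1+\alpha}}(\bw;z)$ converts gradients into function values. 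Squaring via $(a+b)^2\le(1+p)a^2+(1+p^{-1})b^2$ and $(u+v)^2\le2u^2+2v^2$ then yields $\|\bw_{t+1}-\bw_{t+1}^{(i)}\|_2^2\le(1+p)\|\bw_t-\bw_t^{(i)}\|_2^2+2(1+p^{-1})c_{\alpha,1}^2\eta_t^2\big(f^{\frac{2\alpha}{1+\alpha}}(\bw_t;z_i)+f^{\frac{2\alpha}{1+\alpha}}(\bw_t^{(i)};\tilde z_i)\big)$.

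The main obstacle is the case $i_t\neq i$, where both iterates apply the same map $\bw\mapsto\Pi_\Omega(\bw-\eta_t\partial f(\bw;z_{i_t}))$ which, unlike in the smooth setting, is no longer non-expansive. Here I would expand the squared distance as $\|\bw_t-\bw_t^{(i)}\|_2^2-2\eta_tD_t+\eta_t^2G_t^2$, where $D_t=\langle\bw_t-\bw_t^{(i)},\partial f(\bw_t;z_{i_t})-\partial f(\bw_t^{(i)};z_{i_t})\rangle$ and $G_t$ is the norm of that gradient difference. The crux is an extended co-coercivity for $(\alpha,L)$-H\"older gradients, $D_t\ge\frac{2\alpha}{1+\alpha}L^{-1/\alpha}G_t^{\frac{1+\alpha}{\alpha}}$, which I would derive from the H\"older descent inequality $f(\bv)\le f(\bw)+\langle\partial f(\bw),\bv-\bw\rangle+\frac{L}{1+\alpha}\|\bv-\bw\|_2^{1+\alpha}$ by optimizing the descent step $\bv=\bw-\tau\partial f(\bw)$ and symmetrizing the two resulting Bregman lower bounds. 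Substituting this and maximizing the scalar map $G\mapsto\eta_t^2G^2-2\eta_t\frac{2\alpha}{1+\alpha}L^{-1/\alpha}G^{\frac{1+\alpha}{\alpha}}$ over $G\ge0$ (the exponent $\frac{1+\alpha}{\alpha}>2$ guarantees a finite maximizer) gives \emph{exactly} $c_{\alpha,3}^2\eta_t^{\frac{2}{1-\alpha}}$, so that $\|\bw_{t+1}-\bw_{t+1}^{(i)}\|_2^2\le\|\bw_t-\bw_t^{(i)}\|_2^2+c_{\alpha,3}^2\eta_t^{\frac{2}{1-\alpha}}$ with coefficient exactly one on the previous iterate. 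Getting this constant to match the stated $c_{\alpha,3}$ is the delicate computation; note the boundary case $\alpha=0$ sidesteps co-coercivity entirely, since convexity gives $D_t\ge0$ and H\"older continuity gives $G_t\le L$, whence the same inequality holds with $c_{0,3}^2=L^2$.

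Finally I would assemble the recursion. Averaging the two cases over the uniform draw of $i_t$ produces coefficient $\tfrac{n-1}{n}+\tfrac{1+p}{n}=1+p/n$ on $\|\bw_t-\bw_t^{(i)}\|_2^2$, coefficient $\tfrac{n-1}{n}\le1+p/n$ on $c_{\alpha,3}^2\eta_t^{\frac{2}{1-\alpha}}$, and a factor $1/n$ on the self-bounding term. Taking $\ebb_{S,\widetilde{S},A}$ and using the distributional symmetry $\ebb[f^{\frac{2\alpha}{1+\alpha}}(\bw_t^{(i)};\tilde z_i)]=\ebb[f^{\frac{2\alpha}{1+\alpha}}(\bw_t;z_i)]$ as in \eqref{symmetry-2} merges the two self-bounding contributions into $\frac{4(1+p^{-1})c_{\alpha,1}^2\eta_t^2}{n}\ebb[f^{\frac{2\alpha}{1+\alpha}}(\bw_t;z_i)]$; averaging over $i$ and applying Jensen's inequality to the concave map $x\mapsto x^{\frac{2\alpha}{1+\alpha}}$ turns $\frac1n\sum_i f^{\frac{2\alpha}{1+\alpha}}(\bw_t;z_i)$ into $F_S^{\frac{2\alpha}{1+\alpha}}(\bw_t)$. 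This gives the displayed recursion, and unrolling it with $a_1=0$ completes the proof.
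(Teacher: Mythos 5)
Your proposal is correct and follows essentially the same route as the paper: the same case split on $i_t=i$ versus $i_t\neq i$, the same $(1+p)a^2+(1+p^{-1})b^2$ splitting with the self-bounding property, the same symmetrization $\ebb[f^{\frac{2\alpha}{1+\alpha}}(\bw_t^{(i)};\tilde z_i)]=\ebb[f^{\frac{2\alpha}{1+\alpha}}(\bw_t;z_i)]$ and Jensen step, and the same unrolled recursion. The only cosmetic difference is that the paper packages the $i_t\neq i$ case as a separate expansiveness lemma (Lemma~\ref{lem:holder-expansive}) proved by applying Young's inequality to the H\"older co-coercivity bound, whereas you obtain the identical increment $c_{\alpha,3}^2\eta_t^{2/(1-\alpha)}$ by directly maximizing $G\mapsto\eta_t^2G^2-\tfrac{4\alpha\eta_t}{1+\alpha}L^{-1/\alpha}G^{\frac{1+\alpha}{\alpha}}$; these are the same computation.
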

We require several lemmas to prove Theorem \ref{thm:on-average-holder}.
The following lemma establishes the co-coercivity of gradients for convex functions with H\"older continuous (sub)gradients. The case $\alpha=1$ can be found in \citet{nesterov2013introductory}. The case $\alpha\in(0,1)$ can be found in \citet{ying2017unregularized}. The case $\alpha=0$ follows directly from the convexity of $f$.
\begin{lemma}\label{lem:coercivity}
Assume for all $z\in\zcal$, the map $\bw\mapsto f(\bw;z)$ is convex,  and $\bw\mapsto \partial f(\bw;z)$ is $(\alpha,L)$-H\"older continuous with $\alpha\in[0,1]$. Then
for all $\bw,\tilde{\bw}$ we have
  \begin{equation}\label{coercivity}
    \big\langle\bw-\tilde{\bw},\partial f(\bw;z)-\partial f(\tilde{\bw};z)\big\rangle\geq \frac{2L^{-\frac{1}{\alpha}}\alpha}{1+\alpha}\|\partial f(\bw;z)-\partial f(\tilde{\bw};z)\|_2^{\frac{1+\alpha}{\alpha}}.
  \end{equation}
\end{lemma}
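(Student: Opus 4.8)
The plan is to mimic the classical co-coercivity proof for $L$-smooth convex functions, replacing the quadratic descent lemma with its H\"older analogue and then optimizing a one-dimensional gradient step to extract the sharp constant. Throughout I fix $z$ and abbreviate $f(\cdot)=f(\cdot;z)$. The case $\alpha=0$ needs no work: the prefactor $\alpha$ on the right-hand side of \eqref{coercivity} vanishes, so the claim reduces to $\langle\bw-\tilde{\bw},\partial f(\bw)-\partial f(\tilde{\bw})\rangle\geq0$, which is exactly the monotonicity of the subdifferential of the convex map $\bw\mapsto f(\bw)$. I therefore assume $\alpha\in(0,1]$, in which case H\"older continuity of $\partial f$ forces it to be single-valued and $f$ to be $C^1$.

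First I would establish the H\"older version of the descent lemma,
\[
f(\bw)\leq f(\tilde{\bw})+\langle\partial f(\tilde{\bw}),\bw-\tilde{\bw}\rangle+\frac{L}{1+\alpha}\|\bw-\tilde{\bw}\|_2^{1+\alpha},
\]
by writing the left-hand gap as $\int_0^1\langle\partial f(\tilde{\bw}+t(\bw-\tilde{\bw}))-\partial f(\tilde{\bw}),\bw-\tilde{\bw}\rangle\,dt$, bounding the integrand with the Cauchy--Schwarz inequality and \eqref{holder-condition} (producing a factor $Lt^{\alpha}\|\bw-\tilde{\bw}\|_2^{\alpha}$), and using $\int_0^1 t^{\alpha}\,dt=(1+\alpha)^{-1}$.

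Next I would introduce the auxiliary convex functions $\phi_1(v)=f(v)-\langle\partial f(\tilde{\bw}),v\rangle$ and $\phi_2(v)=f(v)-\langle\partial f(\bw),v\rangle$. Since a subgradient of $\phi_1$ vanishes at $v=\tilde{\bw}$ (resp. of $\phi_2$ at $v=\bw$), convexity makes $\tilde{\bw}$ a global minimizer of $\phi_1$ and $\bw$ a global minimizer of $\phi_2$. Applying the displayed H\"older descent inequality to $\phi_1$ (its gradient has the same modulus of continuity, as the extra term is linear) at the trial point $\bw-s\,\partial\phi_1(\bw)$ with $\partial\phi_1(\bw)=\partial f(\bw)-\partial f(\tilde{\bw})$ gives, for every $s>0$,
\[
\phi_1(\tilde{\bw})\leq\phi_1(\bw)-s\|\partial\phi_1(\bw)\|_2^2+\frac{Ls^{1+\alpha}}{1+\alpha}\|\partial\phi_1(\bw)\|_2^{1+\alpha}.
\]

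The step I expect to require the most care is the minimization of this right-hand side over $s>0$: the stationarity condition yields $s_\ast=L^{-1/\alpha}\|\partial\phi_1(\bw)\|_2^{(1-\alpha)/\alpha}$, and on substituting back the exponents of $\|\partial\phi_1(\bw)\|_2$ in both terms collapse to $(1+\alpha)/\alpha$, leaving
\[
\phi_1(\bw)-\phi_1(\tilde{\bw})\geq\frac{\alpha}{1+\alpha}L^{-\frac{1}{\alpha}}\|\partial f(\bw)-\partial f(\tilde{\bw})\|_2^{\frac{1+\alpha}{\alpha}}.
\]
Unwinding $\phi_1(\bw)-\phi_1(\tilde{\bw})=f(\bw)-f(\tilde{\bw})-\langle\partial f(\tilde{\bw}),\bw-\tilde{\bw}\rangle$ and adding the symmetric inequality coming from $\phi_2$, the function values cancel, the two linear terms combine into $\langle\partial f(\bw)-\partial f(\tilde{\bw}),\bw-\tilde{\bw}\rangle$, and the right-hand sides sum to the constant $\tfrac{2\alpha}{1+\alpha}L^{-1/\alpha}$ appearing in \eqref{coercivity}. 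Apart from the exponent bookkeeping in the step-size optimization, every step is routine; this recovers the bounds of \citet{nesterov2013introductory} for $\alpha=1$ and \citet{ying2017unregularized} for $\alpha\in(0,1)$.
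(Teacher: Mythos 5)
Your proof is correct. The paper does not prove Lemma \ref{lem:coercivity} itself but defers to \citet{nesterov2013introductory} for $\alpha=1$ and \citet{ying2017unregularized} for $\alpha\in(0,1)$, and your argument --- the H\"older descent lemma applied to the shifted functions $\phi_1,\phi_2$, followed by optimizing the step size $s$ and summing the two resulting inequalities --- is exactly the standard derivation found in those references; your treatment of $\alpha=0$ as plain monotonicity also matches the paper's reading of that degenerate case. The exponent bookkeeping at $s_\ast=L^{-1/\alpha}\|\partial f(\bw;z)-\partial f(\tilde{\bw};z)\|_2^{(1-\alpha)/\alpha}$ checks out and yields precisely the constant $\tfrac{2\alpha}{1+\alpha}L^{-1/\alpha}$ in \eqref{coercivity}.
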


The following lemma controls the expansive behavior of the operator $\bw\mapsto\bw-\eta \partial f(\bw;z)$ for convex $f$ with H\"older continuous (sub)gradients.
\begin{lemma}\label{lem:holder-expansive}
Assume for all $z\in\zcal$, the map $\bw\mapsto f(\bw;z)$ is convex,  and $\bw\mapsto \partial f(\bw;z)$ is $(\alpha,L)$-H\"older continuous with $\alpha\in[0,1)$. Then for all $\bw\in\rbb^d$ and $\eta>0$ there holds
  \[
  \|\bw-\eta \partial f(\bw;z)-\tilde{\bw}+\eta \partial f(\tilde{\bw};z)\|_2^2 \leq \|\bw-\tilde{\bw}\|_2^2+c^2_{\alpha,3}\eta^{\frac{2}{1-\alpha}}.
  \]
\end{lemma}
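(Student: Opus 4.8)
The plan is to expand the squared norm and control the resulting cross term through the co-coercivity of the gradient map. Writing $u=\bw-\tilde{\bw}$ and $v=\partial f(\bw;z)-\partial f(\tilde{\bw};z)$, the left-hand side equals $\|u\|_2^2-2\eta\langle u,v\rangle+\eta^2\|v\|_2^2$, so the entire task reduces to absorbing the two $\eta$-dependent terms $-2\eta\langle u,v\rangle$ and $\eta^2\|v\|_2^2$ into the single slack term $c_{\alpha,3}^2\eta^{\frac{2}{1-\alpha}}$ without disturbing $\|u\|_2^2$.

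First I would dispose of the boundary case $\alpha=0$, where the co-coercivity estimate degenerates. Here convexity of $\bw\mapsto f(\bw;z)$ gives monotonicity $\langle u,v\rangle\ge0$, while $(0,L)$-H\"older continuity means $\|v\|_2\le L$; hence $\|u-\eta v\|_2^2\le\|u\|_2^2+\eta^2L^2$, which is exactly the claim since $c_{\alpha,3}=L$ and $\tfrac{2}{1-\alpha}=2$ at $\alpha=0$.

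For $\alpha\in(0,1)$ the heart of the argument is Lemma~\ref{lem:coercivity}, which yields
\[
2\eta\langle u,v\rangle\ge \frac{4\eta\alpha L^{-\frac{1}{\alpha}}}{1+\alpha}\|v\|_2^{\frac{1+\alpha}{\alpha}}.
\]
Substituting this lower bound, it remains to show $\eta^2\|v\|_2^2\le \frac{4\eta\alpha L^{-1/\alpha}}{1+\alpha}\|v\|_2^{\frac{1+\alpha}{\alpha}}+c_{\alpha,3}^2\eta^{\frac{2}{1-\alpha}}$. Because the exponent $\tfrac{1+\alpha}{\alpha}$ is strictly larger than $2$, the co-coercivity term dominates the quadratic term $\eta^2\|v\|_2^2$ for large $\|v\|_2$, so their difference is bounded uniformly in $\|v\|_2$. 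I would make this precise with Young's inequality $XY\le X^p/p+Y^q/q$ for the conjugate pair $p=\tfrac{1+\alpha}{2\alpha}$ and $q=\tfrac{1+\alpha}{1-\alpha}$, splitting $\eta^2\|v\|_2^2=XY$ so that $X^p/p$ reproduces \emph{exactly} the co-coercivity term and $Y^q/q$ produces $c_{\alpha,3}^2\eta^{\frac{2}{1-\alpha}}$. Then $-2\eta\langle u,v\rangle+\eta^2\|v\|_2^2\le -X^p/p+(X^p/p+Y^q/q)=Y^q/q$, and the co-coercivity contribution cancels cleanly, leaving precisely the slack term.

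The main obstacle is purely computational: verifying that the conjugate exponents reproduce the co-coercivity term \emph{on the nose} (so it cancels rather than merely being dominated) and that the residual constant collapses to exactly $c_{\alpha,3}^2=\tfrac{1-\alpha}{1+\alpha}(2^{-\alpha}L)^{\frac{2}{1-\alpha}}$. Concretely one checks $2p=\tfrac{1+\alpha}{\alpha}$, that the multiplicative constant $K$ in the split $X=K\|v\|_2^2\eta^{\frac{2\alpha}{1+\alpha}}$ satisfies $K^p/p=\tfrac{4\alpha L^{-1/\alpha}}{1+\alpha}$ (equivalently $K^p=2L^{-1/\alpha}$), and that consequently $K^q=2^{\frac{2\alpha}{1-\alpha}}L^{-\frac{2}{1-\alpha}}$, whence $q^{-1}K^{-q}=c_{\alpha,3}^2$; tracking the powers of $2$, $L$, and $\eta$ closes these identities. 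An equivalent route, if one prefers to avoid guessing the Young split, is to maximize $g(a)=\eta^2a^2-\tfrac{4\eta\alpha L^{-1/\alpha}}{1+\alpha}a^{\frac{1+\alpha}{\alpha}}$ over $a=\|v\|_2\ge0$ by solving $g'(a)=0$; the stationary value evaluates to exactly $c_{\alpha,3}^2\eta^{\frac{2}{1-\alpha}}$, giving the same conclusion.
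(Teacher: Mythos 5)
Your proposal is correct and follows essentially the same route as the paper's proof: expand the square, settle $\alpha=0$ by monotonicity of the (sub)gradient plus $\|v\|_2\leq L$, and for $\alpha\in(0,1)$ combine the co-coercivity of Lemma \ref{lem:coercivity} with Young's inequality for the conjugate pair $\big(\tfrac{1+\alpha}{2\alpha},\tfrac{1+\alpha}{1-\alpha}\big)$ so that the cross term cancels exactly and the residual constant is $c_{\alpha,3}^2$. The only cosmetic difference is that the paper applies Young's inequality after first bounding $\|v\|_2^2$ by a power of $\langle u,v\rangle$, whereas you apply it directly to $\eta^2\|v\|_2^2$; the constants and cancellation are identical.
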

\begin{proof}
  The following equality holds
  \begin{equation}
  \|\bw-\eta \partial f(\bw;z)-\tilde{\bw}+\eta \partial f(\tilde{\bw};z)\|_2^2=\|\bw-\tilde{\bw}\|_2^2
  +\eta^2\|\partial f(\bw;z)-\partial f(\tilde{\bw};z)\|_2^2-2\eta\langle\bw-\tilde{\bw},\partial f(\bw;z)-\partial f(\tilde{\bw};z)\rangle.\label{expansive-1}
  \end{equation}
  We first consider the case $\alpha=0$. In this case, it follows from Definition \ref{def:holder} and Lemma \ref{lem:coercivity} with $\alpha=0$ that
  \[
  \|\bw-\eta \partial f(\bw;z)-\tilde{\bw}+\eta \partial f(\tilde{\bw};z)\|_2^2\leq\|\bw-\tilde{\bw}\|_2^2
  +\eta^2L^2.
  \]
  We now consider the case $\alpha>0$.
  According to Lemma \ref{lem:coercivity}, we know
  \begin{align*}
     & \|\partial f(\bw;z)-\partial f(\tilde{\bw};z)\|_2^2  \leq \Big(\frac{L^{\frac{1}{\alpha}}(1+\alpha)}{2\alpha}\big\langle\bw-\tilde{\bw},\partial f(\bw;z)-\partial f(\tilde{\bw};z)\big\rangle\Big)^{\frac{2\alpha}{1+\alpha}} \\
     & =\Big(\frac{1+\alpha}{\eta\alpha}\big\langle\bw-\tilde{\bw},\partial f(\bw;z)-\partial f(\tilde{\bw};z)\big\rangle\Big)^{\frac{2\alpha}{1+\alpha}}\Big(\eta^{\frac{2\alpha}{1+\alpha}}L^{\frac{2}{1+\alpha}}2^{-\frac{2\alpha}{1+\alpha}}\Big) \\
     & \leq \frac{2\alpha}{1+\alpha}\Big(((1+\alpha)/(\eta\alpha))^{\frac{2\alpha}{1+\alpha}}\big\langle\bw-\tilde{\bw},\partial f(\bw;z)-\partial f(\tilde{\bw};z)\big\rangle^{\frac{2\alpha}{1+\alpha}}\Big)^{\frac{1+\alpha}{2\alpha}}+\frac{1-\alpha}{1+\alpha}\Big(\eta^{\frac{2\alpha}{1+\alpha}}L^{\frac{2}{1+\alpha}}2^{-\frac{2\alpha}{1+\alpha}}\Big)^{\frac{1+\alpha}{1-\alpha}} \\
    & =2\eta^{-1}\big\langle\bw-\tilde{\bw},\partial f(\bw;z)-\partial f(\tilde{\bw};z)\big\rangle+\frac{1-\alpha}{1+\alpha}\eta^{\frac{2\alpha}{1-\alpha}}(2^{-\alpha}L)^{\frac{2}{1-\alpha}},
  \end{align*}
  where we have used Young's inequality~\eqref{young}.
  Plugging the above inequality back into \eqref{expansive-1}, we derive
  \[
    \|\bw-\eta \partial f(\bw;z)-\tilde{\bw}+\eta \partial f(\tilde{\bw};z)\|_2^2 \leq \|\bw-\tilde{\bw}\|_2^2+\frac{1-\alpha}{1+\alpha}\eta^{2+\frac{2\alpha}{1-\alpha}}(2^{-\alpha}L)^{\frac{2}{1-\alpha}}.
  \]
  Combining the above two cases together, we get the stated bound with the definition of $c_{\alpha,3}$ given in Theorem \ref{thm:on-average-holder}.
  The proof is complete.
\end{proof}

\begin{proof}[Proof of Theorem \ref{thm:on-average-holder}]
  For the case $i_t\neq i$, it follows from Lemma \ref{lem:holder-expansive} that
  \[
  \|\bw_{t+1}-\bw_{t+1}^{(i)}\|_2^2\leq \|\bw_{t}-\eta_t\partial f(\bw_t;z_{i_t})-\bw_{t}^{(i)}+\eta_t\partial f(\bw_t^{(i)},z_{i_t})\|_2^2\leq \|\bw_{t}-\bw_{t}^{(i)}\|_2^2+c^2_{\alpha,3}\eta_t^{\frac{2}{1-\alpha}}.
  \]
  If $i_t=i$, by \eqref{on-average-2} and the standard inequality $(a+b)^2\leq (1+p)a^2+(1+1/p)b^2$, we get
  \begin{align*}
    \|\bw_{t+1}-\bw_{t+1}^{(i)}\|_2^2
    & \leq (1+p)\|\bw_{t}-\bw_{t}^{(i)}\|_2^2+
    2(1+p^{-1})\eta_t^2\big(\|\partial f(\bw_t;z_i)\big\|_2^2+\big\|\partial f(\bw_t^{(i)};\tilde{z}_i)\|_2^2\big).
  \end{align*}
  Combining the above two inequalities together, using the self-bounding property (Lemma \ref{lem:self-bounding}) and noticing the distribution of $i_t$, we derive
  \begin{multline*}
  \ebb_{A}\big[\|\bw_{t+1}-\bw_{t+1}^{(i)}\|_2^2\big]\leq (1+p/n)\Big(\ebb_A\big[\|\bw_{t}-\bw_{t}^{(i)}\|_2^2\big]+c^2_{\alpha,3}\eta_t^{\frac{2}{1-\alpha}}\Big)
  +\\
  \frac{2(1+p^{-1})c^2_{\alpha,1}\eta_t^2}{n}\ebb_A\big[f^{\frac{2\alpha}{1+\alpha}}(\bw_t;z_i)+f^{\frac{2\alpha}{1+\alpha}}(\bw_t^{(i)};\tilde{z}_i)\big].
  \end{multline*}
  Analogous to \eqref{symmetry}, we know
  \[
  \ebb_{S,\widetilde{S},A}\big[f^{\frac{2\alpha}{1+\alpha}}(\bw_t^{(i)};\tilde{z}_i)\big]=\ebb_{S,A}\big[f^{\frac{2\alpha}{1+\alpha}}(\bw_t;z_i)\big]
  \]
  and therefore
  \begin{multline*}
  \ebb_{S,\widetilde{S},A}\big[\|\bw_{t+1}-\bw_{t+1}^{(i)}\|_2^2\big]\leq (1+p/n)\Big(\ebb_{S,\widetilde{S},A}\big[\|\bw_{t}-\bw_{t}^{(i)}\|_2^2\big]+c^2_{\alpha,3}\eta_t^{\frac{2}{1-\alpha}}\Big)
  +\\
  \frac{4(1+p^{-1})c^2_{\alpha,1}\eta_t^2}{n}\ebb_{S,A}\big[f^{\frac{2\alpha}{1+\alpha}}(\bw_t;z_i)\big].
  \end{multline*}
  Multiplying both sides by $(1+p/n)^{-(t+1)}$ gives
  \begin{multline*}
  (1+p/n)^{-(t+1)}\ebb_{S,\widetilde{S},A}\big[\|\bw_{t+1}-\bw_{t+1}^{(i)}\|_2^2\big]\leq (1+p/n)^{-t}\Big(\ebb_{S,\widetilde{S},A}\big[\|\bw_{t}-\bw_{t}^{(i)}\|_2^2\big]+c^2_{\alpha,3}\eta_t^{\frac{2}{1-\alpha}}\Big)
  +\\
  \frac{4(1+p^{-1})c^2_{\alpha,1}(1+p/n)^{-(t+1)}\eta_t^2}{n}\ebb_{S,A}\big[f^{\frac{2\alpha}{1+\alpha}}(\bw_t;z_i)\big].
  \end{multline*}
  Taking a summation of the above inequality and using $\bw_1=\bw_1^{(i)}$, we derive
  \begin{multline*}
  (1+p/n)^{-(t+1)}\ebb_{S,\widetilde{S},A}\big[\|\bw_{t+1}-\bw_{t+1}^{(i)}\|_2^2\big]
  \leq c^2_{\alpha,3}\sum_{j=1}^{t}(1+p/n)^{-j}\eta_j^{\frac{2}{1-\alpha}}\\
  +\frac{4(1+p^{-1})c^2_{\alpha,1}}{n}\sum_{j=1}^{t}(1+p/n)^{-(j+1)}\eta_j^2\ebb_{S,A}\big[f^{\frac{2\alpha}{1+\alpha}}(\bw_j;z_i)\big].
  \end{multline*}
  We can take an average over $i$ and get
  \begin{multline*}
  \frac{1}{n}\sum_{i=1}^{n}\ebb_{S,\widetilde{S},A}\big[\|\bw_{t+1}-\bw_{t+1}^{(i)}\|_2^2\big]
  \leq c^2_{\alpha,3}\sum_{j=1}^{t}(1+p/n)^{t+1-j}\eta_j^{\frac{2}{1-\alpha}}\\
  +\frac{4(1+p^{-1})c^2_{\alpha,1}}{n^2}\sum_{i=1}^{n}\sum_{j=1}^{t}(1+p/n)^{t-j}\eta_j^2\ebb_{S,A}\big[f^{\frac{2\alpha}{1+\alpha}}(\bw_j;z_i)\big].
  \end{multline*}
  It then follows from the concavity of the function $x\mapsto x^{\frac{2\alpha}{1+\alpha}}$ and the Jensen's inequality that
  \begin{multline*}
    \frac{1}{n}\sum_{i=1}^{n}\ebb_{S,\widetilde{S},A}\big[\|\bw_{t+1}-\bw_{t+1}^{(i)}\|_2^2\big]
  \leq c^2_{\alpha,3}\sum_{j=1}^{t}(1+p/n)^{t+1-j}\eta_j^{\frac{2}{1-\alpha}}+\\
  4(1+p^{-1})c^2_{\alpha,1}\sum_{j=1}^{t}\frac{(1+p/n)^{t-j}\eta_j^2}{n}\ebb_{S,A}\Big[\Big(\frac{1}{n}\sum_{i=1}^nf(\bw_j;z_i)\Big)^{\frac{2\alpha}{1+\alpha}}\Big].
  \end{multline*}
  The stated inequality then follows from the definition of $F_S$. The proof is complete.
\end{proof}

\subsection{Generalization errors\label{sec:proof-holder-gen}}
Theorem \ref{thm:error-holder-cor-a} can be considered as an instantiation of the following proposition on generalization error bounds with specific choices of $\gamma,T$ and $\theta$. In this subsection, we first give the proof of Theorem \ref{thm:error-holder-cor-a} based on Proposition \ref{prop:gen-holder-up}, and then turn to the proof of Proposition \ref{prop:gen-holder-up}.
\begin{proposition}\label{prop:gen-holder-up}
  Assume for all $z\in\zcal$, the function $\bw\mapsto f(\bw;z)$ is nonnegative, convex, and $\partial f(\bw;z)$ is $(\alpha,L)$-H\"older continuous with $\alpha\in[0,1)$. Let $\{\bw_t\}_t$ be produced by \eqref{SGD} with step sizes $\eta_t=cT^{-\theta},\theta\in[0,1]$ satisfying
  $\theta\geq(1-\alpha)/2$. Then for all $T$ satisfying $n=O(T)$ and any $\gamma>0$ we have
  \begin{multline*}
    \ebb_{S,A}[F(\bw_T^{(1)})]-F(\bw^*)
   = O\Big(\gamma^{\frac{1+\alpha}{\alpha-1}}\Big)
   +O(\gamma)\Big(T^{1-\frac{2\theta}{1-\alpha}}+n^{-2}T^{\frac{2-2\theta}{1+\alpha}}+n^{-2}T^{2-2\theta}F^{\frac{2\alpha}{1+\alpha}}(\bw^*)\Big)\\
   +O(1/\gamma)\Big(T^{-\frac{2\alpha(1-\theta)}{1+\alpha}}+F^{\frac{2\alpha}{1+\alpha}}(\bw^*)\Big)
  +O(T^{\theta-1})+O(T^{\frac{\alpha\theta-\theta-2\alpha}{1+\alpha}})+O(T^{-\theta}F^{\frac{2\alpha}{1+\alpha}}(\bw^*)).
  \end{multline*}
\end{proposition}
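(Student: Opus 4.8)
The plan is to assemble the three ingredients already established — the generalization-via-model-stability estimate (Theorem \ref{thm:gen-model-stab}, Part (c)), the H\"older stability bound (Theorem \ref{thm:on-average-holder}), and the optimization error bound (Lemma \ref{lem:computation-ave}, Part (d)) — and apply them to the weighted average $\bw_T^{(1)}$. I would start from the decomposition
\[
\ebb_{S,A}[F(\bw_T^{(1)})]-F(\bw^*)=\ebb_{S,A}\big[F(\bw_T^{(1)})-F_S(\bw_T^{(1)})\big]+\ebb_{S,A}\big[F_S(\bw_T^{(1)})-F_S(\bw^*)\big],
\]
using $\ebb_S[F_S(\bw^*)]=F(\bw^*)$ since $\bw^*$ is independent of $S$. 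The second (optimization) term is the easy one: by convexity of $F_S$ and Jensen's inequality it is at most $(\sum_t\eta_t)^{-1}\sum_t\eta_t\ebb[F_S(\bw_t)-F_S(\bw^*)]$, which Lemma \ref{lem:computation-ave}(d) with $\bw=\bw^*$ controls. The first (estimation) term is the crux.

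For the estimation term I would first transfer the individual-iterate stability of Theorem \ref{thm:on-average-holder} to the average. Because $\bw_T^{(1)}$ and the corresponding average built on $S^{(i)}$ differ by a convex combination, with weights $\eta_t/\sum_t\eta_t$, of the vectors $\bw_t-\bw_t^{(i)}$, convexity of $\|\cdot\|_2^2$ gives
\[
\frac1n\sum_{i=1}^n\ebb\big[\|\bw_T^{(1)}-\bw_T^{(1),i}\|_2^2\big]\leq\frac{\sum_t\eta_t\big(\tfrac1n\sum_i\ebb[\|\bw_t-\bw_t^{(i)}\|_2^2]\big)}{\sum_t\eta_t},
\]
into which I substitute Theorem \ref{thm:on-average-holder} with $p=n/T$ so that $(1+p/n)^T\leq e$. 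I would then apply Theorem \ref{thm:gen-model-stab}(c) with $A(S)=\bw_T^{(1)}$, which bounds the estimation error by $\frac{c_{\alpha,1}^2}{2\gamma}\ebb[F^{\frac{2\alpha}{1+\alpha}}(\bw_T^{(1)})]$ plus $\frac{\gamma}{2}$ times the stability quantity above.

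The hard part is the self-referential term $\frac{c_{\alpha,1}^2}{2\gamma}\ebb[F^{\frac{2\alpha}{1+\alpha}}(\bw_T^{(1)})]$, since $F(\bw_T^{(1)})$ is exactly the object being bounded. I would break it in three moves. First, Jensen (concavity of $t\mapsto t^{\frac{2\alpha}{1+\alpha}}$) passes to $(\ebb[F(\bw_T^{(1)})])^{\frac{2\alpha}{1+\alpha}}$. Second, writing $\ebb[F(\bw_T^{(1)})]=\ebb[F_S(\bw_T^{(1)})]+D$ with $D$ the estimation error, the sub-additivity $(a+b)^{\frac{2\alpha}{1+\alpha}}\leq a^{\frac{2\alpha}{1+\alpha}}+b^{\frac{2\alpha}{1+\alpha}}$ splits it. Third, Young's inequality applied to $\frac1\gamma D^{\frac{2\alpha}{1+\alpha}}$ peels off $\frac12 D$, which I move to the left-hand side, at the cost of a residual of order $\gamma^{-\frac{1+\alpha}{1-\alpha}}=\gamma^{\frac{1+\alpha}{\alpha-1}}$ — precisely the leading term of the claim. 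What remains, $\frac1\gamma(\ebb[F_S(\bw_T^{(1)})])^{\frac{2\alpha}{1+\alpha}}$, I again bound through $\ebb[F_S(\bw_T^{(1)})]\leq F(\bw^*)+(\text{optimization error})$ and sub-additivity, yielding the $O(1/\gamma)$ bracket containing $F^{\frac{2\alpha}{1+\alpha}}(\bw^*)$ and the optimization contribution $T^{-\frac{2\alpha(1-\theta)}{1+\alpha}}$.

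Finally, every surviving quantity is a power sum of the constant step size $\eta_t=cT^{-\theta}$: $\sum_{t=1}^T\eta_t=cT^{1-\theta}$, $\sum_{t=1}^T\eta_t^2=c^2T^{1-2\theta}$, $\sum_{t=1}^T\eta_t^{\frac{2}{1-\alpha}}=O(T^{1-\frac{2\theta}{1-\alpha}})$ (where the hypothesis $\theta\geq(1-\alpha)/2$ makes this exponent nonpositive and thus keeps the extra H\"older stability term under control), and $\sum_{t=1}^T\eta_t^{\frac{3-\alpha}{1-\alpha}}$. Substituting these, using $n=O(T)$ so that $1+T/n\asymp T/n$, and repeatedly invoking concavity of $t\mapsto t^{\frac{2\alpha}{1+\alpha}}$ together with \eqref{computation-ave-5} to bound $\sum_t\eta_t^2\ebb[F_S^{\frac{2\alpha}{1+\alpha}}(\bw_t)]$ in terms of $F(\bw^*)$, the $\frac{\gamma}{2}$-stability contribution generates the three $O(\gamma)$ terms while the optimization term produces the trailing $O(T^{\theta-1})$, $O(T^{\frac{\alpha\theta-\theta-2\alpha}{1+\alpha}})$ and $O(T^{-\theta}F^{\frac{2\alpha}{1+\alpha}}(\bw^*))$ pieces. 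I expect the genuine obstacle to be the bookkeeping around the self-reference in the three moves above — choosing the Young exponent correctly and arranging the sub-additivity so that the dependence on $F(\bw^*)$ stays at the power $\frac{2\alpha}{1+\alpha}$ and never leaks to the first power — whereas the ensuing power-of-$T$ accounting, though lengthy, is routine.
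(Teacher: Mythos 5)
Your strategy is sound and assembles the same three ingredients the paper uses, but the route differs in one organizational respect: the paper applies Theorem \ref{thm:gen-model-stab}(c) to each iterate $A(S)=\bw_{t+1}$ separately, obtains a per-iterate inequality for $\delta_{t+1}=\max\{\ebb_{S,A}[F(\bw_{t+1})]-\ebb_{S,A}[F_S(\bw_{t+1})],0\}$, resolves the self-reference for each $t$ by solving the sub-linear inequality $\delta_{t+1}\leq a\,\delta_{t+1}^{2\alpha/(1+\alpha)}+b$, and only then takes the $\eta_t$-weighted average and invokes convexity of $F$; you instead apply the theorem once to $A(S)=\bw_T^{(1)}$, pushing the averaging into the stability term via convexity of $\|\cdot\|_2^2$ and resolving the self-reference a single time by Young's inequality. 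Both are valid (the $\max\{\cdot,0\}$ caveat needed for sub-additivity when your $D$ is negative is exactly what the paper builds into $\delta_j$), and your version is arguably cleaner since the sub-linear inequality is solved once rather than once per iterate. The one place to be careful is the residual term $\frac{1}{\gamma}\big(\ebb[F_S(\bw_T^{(1)})]\big)^{2\alpha/(1+\alpha)}$: bounding $\ebb[F_S(\bw_T^{(1)})]$ by $F(\bw^*)$ plus the optimization error of Lemma \ref{lem:computation-ave}(d) and then raising to the power $\frac{2\alpha}{1+\alpha}$ leaks a factor $F^{(2\alpha/(1+\alpha))^2}(\bw^*)$ and a term of order $T^{-2\alpha\theta/(1+\alpha)}$, which is not absorbed by the claimed $O(1/\gamma)\big(T^{-2\alpha(1-\theta)/(1+\alpha)}+F^{2\alpha/(1+\alpha)}(\bw^*)\big)$ bracket when $\theta<1/2$ (a regime the hypothesis $\theta\geq(1-\alpha)/2$ permits). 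The paper sidesteps this by bounding the weighted sums $\sum_t\eta_t\big(\ebb[F_S(\bw_t)]\big)^{2\alpha/(1+\alpha)}$ through Jensen's inequality applied to \eqref{computation-ave-4}, in which $F(\bw^*)$ enters only to the first power before the exponent is taken (Lemma \ref{lem:big-O}, Eqs.~\eqref{big-O-b}--\eqref{big-O-c}); since you already invoke this device via \eqref{computation-ave-5} for the $O(\gamma)$ stability contribution, the fix is simply to use it for the $O(1/\gamma)$ contribution as well.
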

\begin{proof}[Proof of Theorem \ref{thm:error-holder-cor-a}]
It can be checked that $\theta$ considered in Parts (a)-(c) satisfy $\theta\geq(1-\alpha)/2$. Therefore Proposition \ref{prop:gen-holder-up} holds.
If $\gamma=\sqrt{n}$, then by Proposition \ref{prop:gen-holder-up} we know
\begin{equation}\label{error-holder-up-cor-1}
\ebb_{S,A}[F(\bw_T^{(1)})]-F(\bw^*)=O(n^{\frac{1}{2}}T^{1-\frac{2\theta}{1-\alpha}})
+O(n^{-\frac{3}{2}}T^{2-2\theta})+O(n^{-\frac{1}{2}})+O(T^{\theta-1})+O(T^{\frac{\alpha\theta-\theta-2\alpha}{1+\alpha}})+O(T^{-\theta}).
\end{equation}

We first prove Part (a). Since $\alpha\geq1/2$, $\theta=1/2$ and $T\asymp n$, it follows from \eqref{error-holder-up-cor-1} that
\[
\ebb_{S,A}[F(\bw_T^{(1)})]-F(\bw^*)=O(n^{-\frac{1}{2}})+O(n^{\frac{3}{2}-\frac{1}{1-\alpha}})+O(n^{-\frac{3\alpha+1}{2(1+\alpha)}})=O(n^{-\frac{1}{2}}),
\]
where we have used $\frac{3}{2}-\frac{1}{1-\alpha}\leq-\frac{1}{2}$ due to $\alpha\geq1/2$. This shows Part (a).

We now prove Part (b).
Since $\alpha<1/2$, $T\asymp n^{\frac{2-\alpha}{1+\alpha}}$ and $\theta=\frac{3-3\alpha}{2(2-\alpha)}\geq1/2$ in \eqref{error-holder-up-cor-1}, the following inequalities hold
\begin{align*}
  &\sqrt{n}T^{1-\frac{2\theta}{1-\alpha}}\asymp\sqrt{n}T^{1-\frac{3-3\alpha}{(1-\alpha)(2-\alpha)}}\asymp\sqrt{n}n^{-\frac{(1+\alpha)(2-\alpha)}{(2-\alpha)(1+\alpha)}}\asymp n^{-\frac{1}{2}} \\
  &n^{-\frac{3}{2}}T^{2-2\theta}\asymp n^{-\frac{3}{2}}T^{\frac{4-2\alpha-3+3\alpha}{2-\alpha}}\asymp n^{-\frac{3}{2}}T^{\frac{1+\alpha}{2-\alpha}}\asymp n^{-\frac{1}{2}} \\
  &T^{\theta-1}\asymp T^{\frac{3-3\alpha-4+2\alpha}{2(2-\alpha)}}\asymp T^{-\frac{1+\alpha}{2(2-\alpha)}}\asymp n^{-\frac{1}{2}}\\
  &T^{-\theta}\asymp n^{\frac{2-\alpha}{1+\alpha}\frac{3\alpha-3}{2(2-\alpha)}}\asymp n^{\frac{3\alpha-3}{2(1+\alpha)}}=O(n^{-\frac{1}{2}}),
\end{align*}
where we have used $(3\alpha-3)/(1+\alpha)\leq-1$ due to $\alpha<1/2$.
Furthermore, since $\theta\geq1/2$ and $\alpha<1/2$ we know $\frac{\alpha\theta-\theta-2\alpha}{1+\alpha}\leq -\frac{1}{2}$ and $T\asymp n^{\frac{2-\alpha}{1+\alpha}}\geq n$. Therefore $T^{\frac{\alpha\theta-\theta-2\alpha}{1+\alpha}}=O(T^{-\frac{1}{2}})=O(n^{-\frac{1}{2}})$. Plugging the above inequalities into \eqref{error-holder-up-cor-1} gives the stated bound in Part (b).

We now turn to Part (c).
Since $F(\bw^*)=0$, Proposition \ref{prop:gen-holder-up} reduces to
\[
    \ebb_{S,A}[F(\bw_T^{(1)})]-F(\bw^*)
   = O\Big(\gamma^{\frac{1+\alpha}{\alpha-1}}\Big)
   +O(\gamma)\Big(T^{1-\frac{2\theta}{1-\alpha}}+n^{-2}T^{\frac{2-2\theta}{1+\alpha}}\Big)
   +O\Big(\gamma^{-1}T^{-\frac{2\alpha(1-\theta)}{1+\alpha}}\Big)
  +O(T^{\theta-1})+O(T^{\frac{\alpha\theta-\theta-2\alpha}{1+\alpha}}).
\]
With $\gamma=nT^{\theta-1}$, we further get
\begin{equation}\label{error-holder-up-cor-2}
    \ebb_{S,A}[F(\bw_T^{(1)})]-F(\bw^*)
   = O\Big(\big(n^{-1}T^{1-\theta}\big)^{\frac{1+\alpha}{1-\alpha}}\Big)
   +O(nT^{-\frac{(1+\alpha)\theta}{1-\alpha}})+O(n^{-1}T^{\frac{(\theta-1)(\alpha-1)}{1+\alpha}})+O(T^{\theta-1})+O(T^{\frac{\alpha\theta-\theta-2\alpha}{1+\alpha}}).
\end{equation}
For the choice $T=n^{\frac{2}{1+\alpha}}$ and $\theta=\frac{3-\alpha^2-2\alpha}{4}$, we know $1-\theta=(1+\alpha)^2/4$ and therefore
\begin{align*}
  & \Big(n^{-1}T^{1-\theta}\Big)^{\frac{1+\alpha}{1-\alpha}}\asymp \Big(n^{-1}n^{\frac{2}{1+\alpha}\frac{(1+\alpha)^2}{4}}\Big)^{\frac{1+\alpha}{1-\alpha}}= n^{-\frac{1+\alpha}{2}}\\
  & nT^{-\frac{(1+\alpha)\theta}{1-\alpha}}\asymp n^{1-\frac{2\theta}{1-\alpha}}=n^{1+\frac{2\alpha+\alpha^2-3}{2(1-\alpha)}}= n^{\frac{\alpha^2-1}{2-2\alpha}}= n^{-\frac{1+\alpha}{2}} \\
  & n^{-1}T^{\frac{(\theta-1)(\alpha-1)}{1+\alpha}}\asymp n^{-1}T^{\frac{(1-\alpha)(1+\alpha)^2}{4(1+\alpha)}}\asymp
  n^{-1}T^{\frac{(1-\alpha)(1+\alpha)}{4}}\asymp n^{-\frac{1+\alpha}{2}}\\
  & T^{\theta-1}\asymp n^{-\frac{2}{1+\alpha}\frac{(1+\alpha)^2}{4}}\asymp n^{-\frac{1+\alpha}{2}}.
\end{align*}
Furthermore,
\[
(\theta-1)(1+\alpha)-(\alpha\theta-\theta-2\alpha)=2\theta+\alpha-1=2^{-1}\big(3-\alpha^2-2\alpha+2\alpha-2\big)\geq0
\]
and therefore
\[
T^{\theta-1}\geq T^{\frac{\alpha\theta-\theta-2\alpha}{1+\alpha}}.
\]
Plugging the above inequalities into \eqref{error-holder-up-cor-2} gives the stated bound in Part (c). The proof is complete.
\end{proof}

To  prove Proposition \ref{prop:gen-holder-up}, we first introduce an useful lemma to address some involved series.
\begin{lemma}\label{lem:big-O}
  Assume for all $z\in\zcal$, the function $\bw\mapsto f(\bw;z)$ is nonnegative, convex, and $\partial f(\bw;z)$ is $(\alpha,L)$-H\"older continuous with $\alpha\in[0,1)$. Let $\{\bw_t\}_t$ be produced by \eqref{SGD} with step sizes $\eta_t=cT^{-\theta},\theta\in[0,1]$ satisfying
  $\theta\geq\frac{1-\alpha}{2}$. Then
  \begin{gather}
  \Big(\sum_{t=1}^{T}\eta_t^2\Big)^{\frac{1-\alpha}{1+\alpha}}\Big(\eta_1\|\bw^*\|_2^2+2\sum_{t=1}^{T}\eta_t^2F(\bw^*)+c_{\alpha,2}\sum_{t=1}^{T}\eta_t^{\frac{3-\alpha}{1-\alpha}}\Big)^{\frac{2\alpha}{1+\alpha}}=
  O(T^{\frac{1-\alpha-2\theta}{1+\alpha}})+O(T^{1-2\theta}F^{\frac{2\alpha}{1+\alpha}}(\bw^*)),\label{big-O-a}\\
  \sum_{t=1}^{T}\eta_t^2\big(\ebb_{S,A}[F_S(\bw_t)]\big)^{\frac{2\alpha}{1+\alpha}}=
  O(T^{\frac{1-\alpha-2\theta}{1+\alpha}})+O(T^{1-2\theta}F^{\frac{2\alpha}{1+\alpha}}(\bw^*)),\label{big-O-b}\\
  \sum_{t=1}^{T}\eta_t\big(\ebb_{S,A}[F_S(\bw_{t})]\big)^{\frac{2\alpha}{1+\alpha}}=
  O(T^{\frac{(1-\alpha)(1-\theta)}{1+\alpha}})+O(T^{1-\theta}F^{\frac{2\alpha}{1+\alpha}}(\bw^*)).\label{big-O-c}
  \end{gather}
\end{lemma}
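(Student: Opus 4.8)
The three estimates are all consequences of the fact that the prescribed step size $\eta_t\equiv cT^{-\theta}$ does not depend on $t$, together with the regret-type bound of Lemma~\ref{lem:computation-ave}(d). The plan is therefore to prove \eqref{big-O-a} by a direct exponent computation, then reduce \eqref{big-O-b} to \eqref{big-O-a} and \eqref{big-O-c} to \eqref{big-O-b}. Throughout write $\eta:=cT^{-\theta}$ and let $\Phi:=\eta_1\|\bw^*\|_2^2+2\sum_{t=1}^{T}\eta_t^2F(\bw^*)+c_{\alpha,2}\sum_{t=1}^{T}\eta_t^{\frac{3-\alpha}{1-\alpha}}$ denote the inner factor, so that the left-hand side of \eqref{big-O-a} is exactly $\big(\sum_{t=1}^{T}\eta_t^2\big)^{\frac{1-\alpha}{1+\alpha}}\Phi^{\frac{2\alpha}{1+\alpha}}$.

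First I would prove \eqref{big-O-a}. Since every $\eta_t$ equals $\eta$, the three summands of $\Phi$ become $cT^{-\theta}\|\bw^*\|_2^2$, $2c^2T^{1-2\theta}F(\bw^*)$, and $c_{\alpha,2}c^{\frac{3-\alpha}{1-\alpha}}T^{1-\theta\frac{3-\alpha}{1-\alpha}}$. The decisive elementary fact is that the hypothesis $\theta\ge\frac{1-\alpha}{2}$ is equivalent to $1-\theta\frac{3-\alpha}{1-\alpha}\le-\theta$, so the H\"older-penalty summand is dominated by the initialization summand, giving $\Phi=O(T^{-\theta})+O(T^{1-2\theta}F(\bw^*))$. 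Using the subadditivity of $x\mapsto x^{\frac{2\alpha}{1+\alpha}}$ (valid because $\frac{2\alpha}{1+\alpha}\le1$) and $\big(\sum_t\eta_t^2\big)^{\frac{1-\alpha}{1+\alpha}}=O(T^{\frac{(1-2\theta)(1-\alpha)}{1+\alpha}})$, the product splits into two terms, and I would check the exponents: the $F$-free term has exponent $\frac{(1-2\theta)(1-\alpha)-2\alpha\theta}{1+\alpha}=\frac{1-\alpha-2\theta}{1+\alpha}$, while the $F$-term has exponent $\frac{(1-2\theta)(1-\alpha)+2\alpha(1-2\theta)}{1+\alpha}=1-2\theta$, which is \eqref{big-O-a}.

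For \eqref{big-O-b}, the same computation that produces the intermediate bound \eqref{computation-ave-4}, taken at $\bw=\bw^*$ and then averaged over $S$ and $A$ (using $\ebb_S[F_S(\bw^*)]=F(\bw^*)$ and $\ebb[f(\bw_t;z_{i_t})]=\ebb[F_S(\bw_t)]$), yields $\sum_{t=1}^{T}\eta_t^2\ebb_{S,A}[F_S(\bw_t)]\le\Phi$. A weighted Jensen inequality with weights $\eta_t^2$ applied to the concave map $x\mapsto x^{\frac{2\alpha}{1+\alpha}}$ then gives $\sum_{t=1}^{T}\eta_t^2\big(\ebb_{S,A}[F_S(\bw_t)]\big)^{\frac{2\alpha}{1+\alpha}}\le\big(\sum_t\eta_t^2\big)^{\frac{1-\alpha}{1+\alpha}}\big(\sum_t\eta_t^2\ebb_{S,A}[F_S(\bw_t)]\big)^{\frac{2\alpha}{1+\alpha}}\le\big(\sum_t\eta_t^2\big)^{\frac{1-\alpha}{1+\alpha}}\Phi^{\frac{2\alpha}{1+\alpha}}$, which is precisely the left-hand side of \eqref{big-O-a}; hence \eqref{big-O-b} is immediate from \eqref{big-O-a}. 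Finally, \eqref{big-O-c} requires no further estimation: because $\eta_t\equiv\eta$ is constant, $\eta_t^2=\eta\,\eta_t$, so the left-hand side of \eqref{big-O-c} equals $\eta^{-1}$ times that of \eqref{big-O-b}. Multiplying the bound of \eqref{big-O-b} by $\eta^{-1}=c^{-1}T^{\theta}$ raises the exponents $\frac{1-\alpha-2\theta}{1+\alpha}$ and $1-2\theta$ to $\theta+\frac{1-\alpha-2\theta}{1+\alpha}=\frac{(1-\alpha)(1-\theta)}{1+\alpha}$ and $1-\theta$, matching \eqref{big-O-c}.

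The argument is essentially exponent bookkeeping rather than a conceptual difficulty, and the constant step size is what makes \eqref{big-O-c} a one-line consequence of \eqref{big-O-b} and \eqref{big-O-b} a one-line consequence of the algebraic identity \eqref{big-O-a} via Jensen. The only steps demanding genuine care are recognizing that $\theta\ge\frac{1-\alpha}{2}$ is exactly the threshold that renders the H\"older-penalty term $c_{\alpha,2}\sum_t\eta_t^{\frac{3-\alpha}{1-\alpha}}$ harmless in $\Phi$, and then correctly tracking how the two exponents $\frac{1-\alpha}{1+\alpha}$ and $\frac{2\alpha}{1+\alpha}$ recombine when $\Phi$ is raised to the power $\frac{2\alpha}{1+\alpha}$; routing \eqref{big-O-b} through the clean bound $\sum_t\eta_t^2\ebb_{S,A}[F_S(\bw_t)]\le\Phi$ (rather than through $\sum_t\eta_t\ebb_{S,A}[F_S(\bw_t)]$) is what keeps $F(\bw^*)$ appearing to a single power $\frac{2\alpha}{1+\alpha}$ and avoids spurious higher-order noise terms.
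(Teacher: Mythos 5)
Your proposal is correct and follows essentially the same route as the paper's proof: the same exponent computation for \eqref{big-O-a} using subadditivity of $x\mapsto x^{\frac{2\alpha}{1+\alpha}}$ and the threshold $\theta\geq\frac{1-\alpha}{2}$ to kill the H\"older-penalty term, the same reduction of \eqref{big-O-b} to \eqref{big-O-a} via \eqref{computation-ave-4} and weighted Jensen, and the identical observation that \eqref{big-O-c} is $\eta^{-1}=c^{-1}T^{\theta}$ times \eqref{big-O-b}. The only cosmetic difference is that you absorb the H\"older term inside the inner factor before exponentiating, whereas the paper exponentiates first and then compares the resulting exponent $\frac{1-\alpha-2\theta}{1-\alpha}$ against $\frac{1-\alpha-2\theta}{1+\alpha}$; both hinge on the same condition and yield the same bound.
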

\begin{proof}
We first prove \eqref{big-O-a}.
For the step size sequence $\eta_t=cT^{-\theta}$, we have
\begin{align*}
  &\Big(\sum_{t=1}^{T}\eta_t^2\Big)^{\frac{1-\alpha}{1+\alpha}}\Big(\eta_1\|\bw^*\|_2^2+2\sum_{t=1}^{T}\eta_t^2F(\bw^*)+c_{\alpha,2}\sum_{t=1}^{T}\eta_t^{\frac{3-\alpha}{1-\alpha}}\Big)^{\frac{2\alpha}{1+\alpha}}\notag\\
  &=O(T^{\frac{(1-2\theta)(1-\alpha)}{1+\alpha}})\Big(T^{-\theta}+T^{1-2\theta}F(\bw^*)+T^{1-\frac{(3-\alpha)\theta}{1-\alpha}}\Big)^{\frac{2\alpha}{1+\alpha}}\\
  &= O(T^{\frac{1-\alpha-2\theta}{1+\alpha}})+O(T^{1-2\theta}F^{\frac{2\alpha}{1+\alpha}}(\bw^*))+O(T^{\frac{1-\alpha-2\theta}{1-\alpha}})\\
  &= O(T^{\frac{1-\alpha-2\theta}{1+\alpha}})+O(T^{1-2\theta}F^{\frac{2\alpha}{1+\alpha}}(\bw^*)),
\end{align*}
where we have used the subadditivity of $x\mapsto x^{\frac{2\alpha}{1+\alpha}}$, the identity
\[
\frac{(1-2\theta)(1-\alpha)}{1+\alpha}+\frac{2\alpha}{1+\alpha}\frac{1-\alpha-3\theta+\alpha\theta}{1-\alpha}=\frac{1-\alpha-2\theta}{1-\alpha}
\]
in the second step and $\theta\geq(1-\alpha)/2$ in the third step (the third term is dominated by the first term).
This shows \eqref{big-O-a}.

We now consider \eqref{big-O-b}.
Taking an expectation over both sides of \eqref{computation-ave-4} with $\bw=\bw^*$, we get
\[
\sum_{t=1}^{T}\eta_t^2\ebb_{S,A}[F_S(\bw_t)]\leq\eta_1\|\bw^*\|_2^2+2\sum_{t=1}^{T}\eta_t^2\ebb_S[F_S(\bw^*)]+c_{\alpha,2}\sum_{t=1}^{T}\eta_t^{\frac{3-\alpha}{1-\alpha}}.
\]
According to the Jensen's inequality and the concavity of $x\mapsto x^{\frac{2\alpha}{1+\alpha}}$, we know
\begin{align*}
     \sum_{t=1}^{T}\eta_t^2\big(\ebb_{S,A}[F_S(\bw_t)]\big)^{\frac{2\alpha}{1+\alpha}} & \leq \sum_{t=1}^{T}\eta_t^2\Big(\frac{\sum_{t=1}^{T}\eta_t^2\ebb_{S,A}[F_S(\bw_t)]}{\sum_{t=1}^{T}\eta_t^2}\Big)^{\frac{2\alpha}{1+\alpha}} \\
      & \leq \Big(\sum_{t=1}^{T}\eta_t^2\Big)^{\frac{1-\alpha}{1+\alpha}}\Big(\eta_1\|\bw^*\|_2^2+2\sum_{t=1}^{T}\eta_t^2F(\bw^*)+c_{\alpha,2}\sum_{t=1}^{T}\eta_t^{\frac{3-\alpha}{1-\alpha}}\Big)^{\frac{2\alpha}{1+\alpha}}\\
      &= O(T^{\frac{1-\alpha-2\theta}{1+\alpha}})+O(T^{1-2\theta}F^{\frac{2\alpha}{1+\alpha}}(\bw^*)),
\end{align*}
where we have used \eqref{big-O-a} in the last step. This shows \eqref{big-O-b}.

Finally, we show \eqref{big-O-c}. Since we consider step sizes $\eta_t=cT^{-\theta}$, it follows from \eqref{big-O-b} that
\begin{align*}
  \sum_{t=1}^{T}\eta_t\big(\ebb_{S,A}[F_S(\bw_{t})]\big)^{\frac{2\alpha}{1+\alpha}} & = \big(cT^{-\theta}\big)^{-1}\sum_{t=1}^{T}\eta_t^2\big(\ebb_{S,A}[F_S(\bw_{t})]\big)^{\frac{2\alpha}{1+\alpha}}\\
  & = O(T^{\frac{1-\alpha-2\theta}{1+\alpha}+\theta})+O(T^{1-\theta}F^{\frac{2\alpha}{1+\alpha}}(\bw^*)).
\end{align*}
This proves \eqref{big-O-c} and finishes the proof.
\end{proof}
\begin{proof}[Proof of Proposition \ref{prop:gen-holder-up}]
Since $\ebb_S[F_S(\bw^*)]=F(\bw^*)$, we can decompose the excess generalization error into an estimation error and an optimization error as follows
\begin{multline}\label{gen-holder-up-0}
  \Big(\sum_{t=1}^{T}\eta_t\Big)^{-1}\sum_{t=1}^{T}\eta_t\big(\ebb_{S,A}[F(\bw_t)]-F(\bw^*)\big)
  =\Big(\sum_{t=1}^{T}\eta_t\Big)^{-1}\sum_{t=1}^{T}\eta_t\ebb_{S,A}[F(\bw_t)-F_S(\bw_t)]\\
  +\Big(\sum_{t=1}^{T}\eta_t\Big)^{-1}\sum_{t=1}^{T}\eta_t\ebb_{S,A}[F_S(\bw_t)-F_S(\bw^*)].
\end{multline}
Our idea is to address separately the above estimation error and optimization error.

We first address \textbf{estimation errors}.
Plugging \eqref{on-average-holder-l2} back into Theorem \ref{thm:gen-model-stab} (Part (c)) with $A(S)=\bw_{t+1}$, we derive
\begin{multline*}
  \ebb_{S,A}\big[F(\bw_{t+1})-F_S(\bw_{t+1})\big]\leq  \frac{c^2_{\alpha,1}}{2\gamma}\ebb_{S,A}\Big[F^{\frac{2\alpha}{1+\alpha}}(\bw_{t+1})\Big]+
  2^{-1}\gamma c^2_{\alpha,3}\sum_{j=1}^{t}(1+p/n)^{t+1-j}\eta_j^{\frac{2}{1-\alpha}}\\
  +2\gamma(1+p^{-1})c^2_{\alpha,1}\sum_{j=1}^{t}\frac{(1+p/n)^{t-j}\eta_j^2}{n}
  \ebb_{S,A}\Big[F_S^{\frac{2\alpha}{1+\alpha}}(\bw_j)\Big].
\end{multline*}
By the concavity and sub-additivity of $x\mapsto x^{\frac{2\alpha}{1+\alpha}}$, we know
\begin{align*}
\ebb_{S,A}\big[F^{\frac{2\alpha}{1+\alpha}}(\bw_{t+1})\big]&\leq \big(\ebb_{S,A}[F(\bw_{t+1})]-\ebb_{S,A}[F_S(\bw_{t+1})]+\ebb_{S,A}[F_S(\bw_{t+1})]\big)^{\frac{2\alpha}{1+\alpha}}\\
&\leq \delta_{t+1}^{\frac{2\alpha}{1+\alpha}}
+\big(\ebb_{S,A}[F_S(\bw_{t+1})]\big)^{\frac{2\alpha}{1+\alpha}},
\end{align*}
where we denote $\delta_j=\max\{\ebb_{S,A}[F(\bw_j)]-\ebb_{S,A}[F_S(\bw_j)],0\}$ for all $j\in\nbb$.
It then follows from $p=n/T$ that
\begin{multline*}
  \delta_{t+1}\leq  \frac{c^2_{\alpha,1}}{2\gamma}\Big(\delta_{t+1}^{\frac{2\alpha}{1+\alpha}}+\big(\ebb_{S,A}[F_S(\bw_{t+1})]\big)^{\frac{2\alpha}{1+\alpha}}\Big)
  +2^{-1}\gamma ec^2_{\alpha,3}\sum_{j=1}^{t}\eta_j^{\frac{2}{1-\alpha}}\\
  +\frac{2e\gamma(1+T/n)c^2_{\alpha,1}}{n}\sum_{j=1}^{t}\eta_j^2\big(\ebb_{S,A}[F_S(\bw_j)]\big)^{\frac{2\alpha}{1+\alpha}}.
\end{multline*}
Solving the above inequality of $\delta_{t+1}$ gives the following inequality for all $t\leq T$
\begin{multline*}
\delta_{t+1}=O\Big(\gamma^{\frac{1+\alpha}{\alpha-1}}\Big)+O\Big(\gamma^{-1}\big(\ebb_{S,A}[F_S(\bw_{t+1})]\big)^{\frac{2\alpha}{1+\alpha}}\Big)\\
+O\Big(\gamma\sum_{j=1}^{t}\eta_j^{\frac{2}{1-\alpha}}\Big)+O\Big(\gamma(n^{-1}+Tn^{-2})\sum_{j=1}^{t}\eta_j^2\big(\ebb_{S,A}[F_S(\bw_j)]\big)^{\frac{2\alpha}{1+\alpha}}\Big).
\end{multline*}
It then follows from the definition of $\delta_{t}$ that (note $n=O(T)$)
\begin{multline*}
  \Big(\sum_{t=1}^{T}\eta_t\Big)^{-1}\sum_{t=1}^{T}\eta_t\big(\ebb_{S,A}[F(\bw_t)]-\ebb_{S,A}[F_S(\bw_t)]\big) =  O\Big(\gamma^{\frac{1+\alpha}{\alpha-1}}\Big)
  + O\Big(\gamma\sum_{t=1}^{T}\eta_t^{\frac{2}{1-\alpha}}\Big)\\
  +O\Big(\gamma^{-1}\Big(\sum_{t=1}^{T}\eta_t\Big)^{-1}\sum_{t=1}^{T}\eta_t\big(\ebb_{S,A}[F_S(\bw_{t})]\big)^{\frac{2\alpha}{1+\alpha}}\Big)
  +O\Big(\gamma Tn^{-2}\sum_{t=1}^{T}\eta_t^2\big(\ebb_{S,A}[F_S(\bw_t)]\big)^{\frac{2\alpha}{1+\alpha}}\Big).
\end{multline*}
By $\eta_t=cT^{-\theta}$, \eqref{big-O-b} and \eqref{big-O-c}, we further get
\begin{multline}\label{gen-holder-up-2}
  \Big(\sum_{t=1}^{T}\eta_t\Big)^{-1}\sum_{t=1}^{T}\eta_t\big(\ebb_{S,A}[F(\bw_t)]-\ebb_{S,A}[F_S(\bw_t)]\big) =  O\Big(\gamma^{\frac{1+\alpha}{\alpha-1}}\Big)
  +O\Big(\gamma T^{1-\frac{2\theta}{1-\alpha}}\Big)\\
  +O\Big(\gamma^{-1}T^{\theta-1}\big(T^{\frac{(1-\alpha)(1-\theta)}{1+\alpha}}+T^{1-\theta}F^{\frac{2\alpha}{1+\alpha}}(\bw^*)\big)\Big)
  +O\Big(\gamma Tn^{-2}\big(T^{\frac{1-\alpha-2\theta}{1+\alpha}}+T^{1-2\theta}F^{\frac{2\alpha}{1+\alpha}}(\bw^*)\big)\Big).
\end{multline}

We now consider \textbf{optimization errors}. By Lemma \ref{lem:computation-ave} (Part (d) with $\bw=\bw^*$) and the concavity of $x\mapsto x^{\frac{2\alpha}{1+\alpha}}$, we know
\begin{align*}
&\Big(\sum_{t=1}^{T}\eta_t\Big)^{-1}\sum_{t=1}^{T}\eta_t\ebb_{S,A}[F_S(\bw_t)-F_S(\bw^*)]\\
&\leq \Big(2\sum_{t=1}^{T}\eta_t\Big)^{-1}\|\bw^*\|_2^2+
        c_{\alpha,1}^2\Big(2\sum_{t=1}^{T}\eta_t\Big)^{-1}\Big(\sum_{t=1}^{T}\eta_t^2\Big)^{\frac{1-\alpha}{1+\alpha}}
   \Big(\eta_1\|\bw^*\|_2^2+2\sum_{t=1}^{T}\eta_t^2F(\bw^*)+c_{\alpha,2}\sum_{t=1}^{T}\eta_t^{\frac{3-\alpha}{1-\alpha}}\Big)^{\frac{2\alpha}{1+\alpha}}\\
& = O(T^{\theta-1})+O(T^{\frac{1-\alpha-2\theta}{1+\alpha}+\theta-1})+O(T^{-\theta}F^{\frac{2\alpha}{1+\alpha}}(\bw^*)),
\end{align*}
where we have used \eqref{big-O-a} in the last step.

Plugging the above optimization error bound and the estimation error bound \eqref{gen-holder-up-2} back into the error decomposition \eqref{gen-holder-up-0}, we finally derive the following generalization error bounds
\begin{multline*}
  \Big(\sum_{t=1}^{T}\eta_t\Big)^{-1}\sum_{t=1}^{T}\eta_t\big(\ebb_{S,A}[F(\bw_t)]-F(\bw^*)\big)
   = O\Big(\gamma^{\frac{1+\alpha}{\alpha-1}}\Big)
   +O(\gamma)\Big(T^{1-\frac{2\theta}{1-\alpha}}+n^{-2}T^{\frac{2-2\theta}{1+\alpha}}+n^{-2}T^{2-2\theta}F^{\frac{2\alpha}{1+\alpha}}(\bw^*)\Big)\\
   +O(1/\gamma)\Big(T^{-\frac{2\alpha(1-\theta)}{1+\alpha}}+F^{\frac{2\alpha}{1+\alpha}}(\bw^*)\Big)
  +O(T^{\theta-1})+O(T^{\frac{\alpha\theta-\theta-2\alpha}{1+\alpha}})+O(T^{-\theta}F^{\frac{2\alpha}{1+\alpha}}(\bw^*)).
\end{multline*}
The stated inequality then follows from the convexity of $F$. The proof is complete.
\end{proof}

\subsection{Empirical Risk Minimization with Strongly Convex Objectives\label{sec:erm}}
In this section, we present an optimistic bound for ERM with strongly convex objectives based on the $\ell_2$ on-average model stability.
We consider nonnegative and convex loss functions with H\"older continuous (sub)gradients.
\begin{proposition}\label{prop:erm}
  Assume for any $z$, the function $\bw\mapsto f(\bw;z)$ is nonnegative, convex and $\bw\mapsto \partial f(\bw;z)$ is $(\alpha,L)$-H\"older continuous with $\alpha\in[0,1]$. Let $A$ be the ERM algorithm, i.e., $A(S)=\arg\min_{\bw\in\rbb^d}F_S(\bw)$. If for all $S$, $F_S$ is $\sigma$-strongly convex, then
  \[
  \ebb_S\big[F(A(S))-F_S(A(S))\big]\leq\frac{2c^2_{\alpha,1}}{n\sigma}\ebb_{S}\Big[F^{\frac{2\alpha}{1+\alpha}}(A(S))\Big].
  \]
\end{proposition}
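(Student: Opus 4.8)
The plan is to first derive an $\ell_2$ on-average model stability bound for the ERM solution and then substitute it into Part (c) of Theorem \ref{thm:gen-model-stab} with an optimally tuned $\gamma$. For the stability bound, I would exploit that $A(S)$ and $A(S^{(i)})$ are unconstrained minimizers of the $\sigma$-strongly convex objectives $F_S$ and $F_{S^{(i)}}$, so their gradients vanish. The $\sigma$-strong convexity of $F_S$ about its minimizer $A(S)$ gives $F_S(A(S^{(i)}))-F_S(A(S))\ge\frac{\sigma}{2}\|A(S^{(i)})-A(S)\|_2^2$, and symmetrically $F_{S^{(i)}}(A(S))-F_{S^{(i)}}(A(S^{(i)}))\ge\frac{\sigma}{2}\|A(S^{(i)})-A(S)\|_2^2$. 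Adding these and using that $S$ and $S^{(i)}$ differ only in the $i$-th example, the left-hand side reduces to differences involving only that example, yielding
\[
\sigma\|A(S^{(i)})-A(S)\|_2^2 \le \frac{1}{n}\big[f(A(S^{(i)});z_i)-f(A(S);z_i)\big]+\frac{1}{n}\big[f(A(S);\tilde z_i)-f(A(S^{(i)});\tilde z_i)\big].
\]

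Next I would bound each difference by convexity and the Schwartz inequality, for instance $f(A(S^{(i)});z_i)-f(A(S);z_i)\le\|A(S^{(i)})-A(S)\|_2\,\|\partial f(A(S^{(i)});z_i)\|_2$, and then apply the self-bounding property (Lemma \ref{lem:self-bounding}) to replace each gradient norm by $c_{\alpha,1}f^{\frac{\alpha}{1+\alpha}}$. Dividing both sides by $\|A(S^{(i)})-A(S)\|_2$ (the inequality being trivial when this vanishes), squaring, and using $(a+b)^2\le 2a^2+2b^2$ gives
\[
\|A(S^{(i)})-A(S)\|_2^2 \le \frac{2c_{\alpha,1}^2}{n^2\sigma^2}\Big[f^{\frac{2\alpha}{1+\alpha}}(A(S^{(i)});z_i)+f^{\frac{2\alpha}{1+\alpha}}(A(S);\tilde z_i)\Big].
\]
Averaging over $i$ and taking $\ebb_{S,\widetilde S}$, I would use that $A(S^{(i)})$ is independent of $z_i$ and $A(S)$ is independent of $\tilde z_i$, so that the concavity of $t\mapsto t^{\frac{2\alpha}{1+\alpha}}$ and Jensen's inequality (just as in the proof of Theorem \ref{thm:gen-model-stab}, Part (c)) convert each term into $\ebb[F^{\frac{2\alpha}{1+\alpha}}(A(S^{(i)}))]$; the distributional identity of $S^{(i)}$ and $S$ then collapses both terms to $\ebb_S[F^{\frac{2\alpha}{1+\alpha}}(A(S))]$. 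This establishes the $\ell_2$ on-average stability bound $\epsilon^2\le\frac{4c_{\alpha,1}^2}{n^2\sigma^2}\ebb_S[F^{\frac{2\alpha}{1+\alpha}}(A(S))]$.

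Finally, inserting this into Part (c) of Theorem \ref{thm:gen-model-stab} gives
\[
\ebb_S\big[F(A(S))-F_S(A(S))\big]\le\Big(\frac{c_{\alpha,1}^2}{2\gamma}+\frac{2\gamma c_{\alpha,1}^2}{n^2\sigma^2}\Big)\ebb_S\big[F^{\frac{2\alpha}{1+\alpha}}(A(S))\big],
\]
and optimizing the free parameter by taking $\gamma=n\sigma/2$ balances the two terms and produces the claimed constant $2c_{\alpha,1}^2/(n\sigma)$. I expect the main obstacle to be the stability step: combining the strong-convexity lower bound with the convexity and self-bounding upper bounds, together with the careful symmetric bookkeeping of the two perturbed terms, so that the gradient norms are eliminated in favour of $f^{\frac{\alpha}{1+\alpha}}$ and the $n^{-2}\sigma^{-2}$ scaling emerges cleanly; once the stability bound is in hand, the optimization over $\gamma$ is routine.
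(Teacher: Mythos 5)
Your proof is correct and arrives at exactly the stated constant, but it reaches the key stability estimate by a genuinely different route from the paper. The paper applies strong convexity only once, to $F_{S^{(i)}}$ at its minimizer $A(S^{(i)})$, then averages over $i$ and computes $\frac{1}{n}\sum_i \ebb[F_{S^{(i)}}(A(S))]$ exactly as $\frac{n-1}{n}\ebb[F_S(A(S))]+\frac{1}{n}\ebb[F(A(S))]$; combined with the symmetry identity $\frac{1}{n}\ebb[\sum_i F_{S^{(i)}}(A(S^{(i)}))]=\ebb[F_S(A(S))]$, this yields the \emph{self-referential} bound $\frac{\sigma}{2n}\sum_i\ebb\|A(S^{(i)})-A(S)\|_2^2\le\frac{1}{n}\ebb[F(A(S))-F_S(A(S))]$, i.e., the $\ell_2$ on-average stability is controlled by the generalization gap itself. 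Substituting into Part (c) of Theorem \ref{thm:gen-model-stab} with $\gamma=n\sigma/2$ then gives a linear inequality in $\ebb[F(A(S))-F_S(A(S))]$ which is solved directly. You instead apply strong convexity twice (once around each minimizer), add, cancel the common terms, and convert the resulting leave-one-out differences into gradient norms via convexity and Cauchy--Schwarz, eliminating them with the self-bounding Lemma \ref{lem:self-bounding}; this produces an \emph{explicit} stability bound $\epsilon^2\le\frac{4c_{\alpha,1}^2}{n^2\sigma^2}\ebb_S[F^{\frac{2\alpha}{1+\alpha}}(A(S))]$, after which the optimization over $\gamma$ is routine. Your explicit bound is reusable wherever an on-average model stability estimate for strongly convex ERM is needed, independent of the generalization statement; the paper's implicit argument is shorter, never touches individual gradient norms at the level of the stability bound, and avoids the losses from Cauchy--Schwarz and $(a+b)^2\le 2a^2+2b^2$ (which, by a pleasant coincidence, end up not mattering here, as both routes give the constant $2c_{\alpha,1}^2/(n\sigma)$).
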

\begin{proof}
  Let $\widetilde{S}$ and $S^{(i)},i=1,\ldots,n$, be constructed as Definition \ref{def:aver-stab}. Due to the $\sigma$-strong convexity of $F_{S^{(i)}}$ and $\partial F_{S^{(i)}}(A(S^{(i)}))=0$ (necessity condition for the optimality of $A(S^{(i)})$), we know
  \[
  F_{S^{(i)}}(A(S))-F_{S^{(i)}}(A(S^{(i)}))\geq 2^{-1}\sigma\big\|A(S)-A(S^{(i)})\big\|_2^2.
  \]
  Taking a summation of the above inequality yields
  \begin{equation}\label{erm-1}
    \frac{1}{n}\sum_{i=1}^n\Big(F_{S^{(i)}}(A(S))-F_{S^{(i)}}(A(S^{(i)}))\Big)\geq \frac{\sigma}{2n}\sum_{i=1}^{n}\big\|A(S)-A(S^{(i)})\big\|_2^2.
  \end{equation}
  According to the definition of $S^{(i)}$, we know
  \begin{align*}
     n\sum_{i=1}^{n}F_{S^{(i)}}(A(S)) & = \sum_{i=1}^{n}\Big(\sum_{j\neq i}f(A(S);z_j)+f(A(S);\tilde{z}_i)\Big) \\
     & = (n-1)\sum_{j=1}^{n}f(A(S);z_j)+\sum_{i=1}^{n}f(A(S);\tilde{z}_i)=(n-1)nF_S(A(S))+nF_{\widetilde{S}}(A(S)).
  \end{align*}
  Taking an expectation and dividing both sides by $n^2$ give ($A(S)$ is independent of $\widetilde{S}$)
  \begin{equation}\label{erm-2}
    \frac{1}{n}\ebb_{S,\widetilde{S}}\Big[\sum_{i=1}^{n}F_{S^{(i)}}(A(S))\Big]=\frac{n-1}{n}\ebb_S\big[F_S(A(S))\big]+\frac{1}{n}\ebb_S\big[F(A(S))\big].
  \end{equation}
  Furthermore, by symmetry we know
  \[
  \frac{1}{n}\ebb_{S,\widetilde{S}}\Big[\sum_{i=1}^{n}F_{S^{(i)}}(A(S^{(i)}))\Big]=\ebb_S\big[F_S(A(S))\big].
  \]
  Plugging the above identity and \eqref{erm-2} back into \eqref{erm-1} gives
  \begin{equation}\label{erm-3}
    \frac{\sigma}{2n}\sum_{i=1}^{n}\ebb_{S,\widetilde{S}}\big[\|A(S^{(i)})-A(S)\|_2^2\big]\leq \frac{1}{n}\ebb_{S,\widetilde{S}}\Big[F(A(S))-F_S(A(S))\Big].
  \end{equation}
  We can now apply Part (c) of Theorem \ref{thm:gen-model-stab} to show the following inequality for all $\gamma>0$ (notice $A$ is a deterministic algorithm)
  \[
  \ebb_S\big[F(A(S))-F_S(A(S))\big]\leq\frac{c^2_{\alpha,1}}{2\gamma}\ebb_{S}\Big[F^{\frac{2\alpha}{1+\alpha}}(A(S))\Big]
    +\frac{\gamma}{n\sigma}\ebb_{S}\Big[F(A(S))-F_S(A(S))\Big].
  \]
  Taking $\gamma=n\sigma/2$, we derive
  \[
  \ebb_S\big[F(A(S))-F_S(A(S))\big]\leq\frac{c^2_{\alpha,1}}{n\sigma}\ebb_{S}\Big[F^{\frac{2\alpha}{1+\alpha}}(A(S))\Big]
    +\frac{1}{2}\ebb_{S}\Big[F(A(S))-F_S(A(S))\Big],
  \]
  from which we can derive the stated inequality. The proof is complete.
\end{proof}

\section{Proofs on Stability with Relaxed Convexity}

\subsection{Stability and generalization errors\label{sec:proof-convex}}
For any convex $g$, we have~\citep{nesterov2013introductory}
\begin{equation}\label{monotonicity}
  \langle\bw-\tilde{\bw},\nabla g(\bw)-\nabla g(\tilde{\bw})\rangle\geq0,\quad\bw,\tilde{\bw}\in\rbb^d.
\end{equation}
\begin{proof}[Proof of Theorem \ref{thm:stab-convex}]
Without loss of generality, we can assume that $S$ and $\widetilde{S}$ differ by the first example, i.e., $z_1\neq\tilde{z}_1$ and $z_i=\tilde{z}_i,i\neq1$.
According to the update rule \eqref{SGD} and \eqref{proj-cont}, we know
\begin{align}
  &\|\bw_{t+1}-\tilde{\bw}_{t+1}\|_2^2 \leq \|\bw_t-\eta_t\partial f(\bw_t;z_{i_t})-\tilde{\bw}_t+\eta_t\partial f(\tilde{\bw}_t;\tilde{z}_{i_t})\|_2^2\notag\\
  & = \|\bw_t-\tilde{\bw}_t\|_2^2 + \eta_t^2\|\partial f(\bw_t;z_{i_t})-\partial f(\tilde{\bw}_t;\tilde{z}_{i_t})\|_2^2
  +2\eta_t\langle\bw_t-\tilde{\bw}_t,\partial f(\tilde{\bw}_t;\tilde{z}_{i_t})-\partial f(\bw_t;z_{i_t})\rangle.\label{stab-1}
\end{align}
We first study the term $\|\partial f(\bw_t;z_{i_t})-\partial f(\tilde{\bw}_t;\tilde{z}_{i_t})\|_2$.
The event $i_t\neq 1$ happens with probability $1-1/n$, and in this case it follows from the smoothness of $f$ that ($z_{i_t}=\tilde{z}_{i_t}$)
\[
\|\partial f(\bw_t;z_{i_t})-\partial f(\tilde{\bw}_t;\tilde{z}_{i_t})\|_2\leq L\|\bw_t-\tilde{\bw}_t\|_2.
\]
The event $i_t=1$ happens with probability $1/n$, and in this case
\[
\|\partial f(\bw_t;z_{i_t})-\partial f(\tilde{\bw}_t;\tilde{z}_{i_t})\|_2\leq \|\partial f(\bw_t;z_{i_t})\|_2+\|\partial f(\tilde{\bw}_t;\tilde{z}_{i_t})\|_2\leq 2G.
\]
Therefore, we get
\begin{equation}\label{stab-2}
\ebb_{i_t}\big[\|\partial f(\bw_t;z_{i_t})-\partial f(\tilde{\bw}_t;\tilde{z}_{i_t})\|_2^2\big]\leq \frac{(n-1)L^2}{n}\|\bw_t-\tilde{\bw}_t\|_2^2+
\frac{4G^2}{n}.
\end{equation}
It is clear
\[
\ebb_{i_t}\big[f(\bw_t;z_{i_t})\big]=F_S(\bw_t)\quad\text{and}\quad\ebb_{i_t}\big[f(\tilde{\bw}_t;\tilde{z}_{i_t})\big]=F_{\widetilde{S}}(\tilde{\bw}_t).
\]
Therefore, by \eqref{monotonicity} we derive
\begin{align}
  \ebb_{i_t}\Big[\langle\bw_t-\tilde{\bw}_t,\partial f(\tilde{\bw}_t;\tilde{z}_{i_t})&-\partial f(\bw_t;z_{i_t})\rangle\Big] = \langle\bw_t-\tilde{\bw}_t,\partial F_{\widetilde{S}}(\tilde{\bw}_t)-\partial F_{S}(\bw_t)\rangle \notag\\
   & = \langle\bw_t-\tilde{\bw}_t,\partial F_{\widetilde{S}}(\tilde{\bw}_t)-\partial F_S(\tilde{\bw}_t)\rangle + \langle\bw_t-\tilde{\bw}_t,\partial F_S(\tilde{\bw}_t)-\partial F_{{S}}({\bw}_t)\rangle \notag\\
   & = \frac{1}{n}\langle\bw_t-\tilde{\bw}_t,\partial f(\tilde{\bw}_t;\tilde{z}_1)-\partial f(\tilde{\bw}_t;z_1)\rangle + \langle\bw_t-\tilde{\bw}_t,\partial F_S(\tilde{\bw}_t)-\partial F_{{S}}({\bw}_t)\rangle\notag\\
   & \leq \frac{2G\|\bw_t-\tilde{\bw}_t\|_2}{n}.\label{stab-3} 
\end{align}
Plugging \eqref{stab-2} and the above inequality back into \eqref{stab-1}, we derive
\[
\ebb_{i_t}\big[\|\bw_{t+1}-\tilde{\bw}_{t+1}\|_2^2\big]\leq \|\bw_t-\tilde{\bw}_t\|_2^2\\
+ \frac{4G\eta_t\|\bw_t-\tilde{\bw}_t\|_2}{n}+
\eta_t^2\Big(\frac{(n-1)L^2}{n}\|\bw_t-\tilde{\bw}_t\|_2^2+\frac{4G^2}{n}\Big)
\]
and therefore
\begin{equation}\label{stab-4}
\ebb_A\big[\|\bw_{t+1}-\tilde{\bw}_{t+1}\|_2^2\big]\leq (1+L^2\eta_t^2)\ebb_A\big[\|\bw_t-\tilde{\bw}_t\|_2^2\big]+4G\Big(\frac{\eta_t\ebb_A[\|\bw_t-\tilde{\bw}_t\|_2]}{n}+\frac{G\eta_t^2}{n}\Big).
\end{equation}
By the above recurrence relationship and $\bw_1=\tilde{\bw}_1$, we derive
\begin{align*}
  \ebb_A\big[\|\bw_{t+1}-\tilde{\bw}_{t+1}\|_2^2\big] & \leq 4G\sum_{j=1}^{t}\prod_{\tilde{j}=j+1}^{t}\Big(1+L^2\eta_{\tilde{j}}^2\Big)\Big(\frac{\eta_j\ebb_A[\|\bw_j-\tilde{\bw}_j\|_2]}{n}+\frac{G\eta_j^2}{n}\Big) \\
   & \leq 4G\prod_{\tilde{j}=1}^{t}\Big(1+L^2\eta_{\tilde{j}}^2\Big)\sum_{j=1}^{t}\Big(\frac{\eta_j\max_{1\leq\tilde{j}\leq t}\ebb_A[\|\bw_{\tilde{j}}-\tilde{\bw}_{\tilde{j}}\|_2]}{n}+\frac{G\eta_j^2}{n}\Big).
\end{align*}
Since the above inequality holds for all $t\in\nbb$ and the right-hand side is an increasing function of $t$, we get
\[
\max_{1\leq\tilde{j}\leq t+1}\ebb_A\big[\|\bw_{\tilde{j}}-\tilde{\bw}_{\tilde{j}}\|_2^2\big]
\leq 4GC_t\sum_{j=1}^{t}\Big(\frac{\eta_j\max_{1\leq\tilde{j}\leq t+1}\ebb_A[\|\bw_{\tilde{j}}-\tilde{\bw}_{\tilde{j}}\|_2]}{n}+\frac{G\eta_j^2}{n}\Big).
\]
It then follows that (note $\ebb_A[\|\bw_{\tilde{j}}-\tilde{\bw}_{\tilde{j}}\|_2]\leq \big(\ebb_A\big[\|\bw_{\tilde{j}}-\tilde{\bw}_{\tilde{j}}\|_2^2\big]\big)^{\frac{1}{2}}$)
\[
\max_{1\leq\tilde{j}\leq t+1}\ebb_A\big[\|\bw_{\tilde{j}}-\tilde{\bw}_{\tilde{j}}\|_2^2\big] \leq
4GC_t\sum_{j=1}^{t}\frac{\eta_j}{n}\max_{1\leq\tilde{j}\leq t+1}\Big(\ebb_A\big[\|\bw_{\tilde{j}}-\tilde{\bw}_{\tilde{j}}\|_2^2\big]\Big)^{\frac{1}{2}}+4G^2C_t\sum_{j=1}^{t}\frac{\eta_j^2}{n}.
\]
Solving the above quadratic function of $\max_{1\leq\tilde{j}\leq t+1}\Big(\ebb_A\big[\|\bw_{\tilde{j}}-\tilde{\bw}_{\tilde{j}}\|_2^2\big]\Big)^{\frac{1}{2}}$ then shows
\[
\max_{1\leq\tilde{j}\leq t+1}\Big(\ebb_A\big[\|\bw_{\tilde{j}+1}-\tilde{\bw}_{\tilde{j}+1}\|_2^2\big]\Big)^{\frac{1}{2}}\leq
4GC_t\sum_{j=1}^{t}\frac{\eta_j}{n}+2G\Big(C_t\sum_{j=1}^{t}\frac{\eta_j^2}{n}\Big)^{\frac{1}{2}}.
\]
The proof is complete.
\end{proof}

To prove Theorem \ref{thm:error-convex}, we require a basic result on series.
\begin{lemma}\label{lem:series}We have the following elementary inequalities.
\begin{enumerate}[(a)]
  \item If $\theta\in(0,1)$, then $(t^{1-\theta}-1)/(1-\theta)\leq \sum_{k=1}^{t}k^{-\theta}\leq t^{1-\theta}/(1-\theta)$;
  \item If $\theta>1$, then $\sum_{k=1}^{t}k^{-\theta}\leq\frac{\theta}{\theta-1}$.
\end{enumerate}
\end{lemma}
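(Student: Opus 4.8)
The plan is to prove both parts by the standard integral comparison, viewing $\sum_{k=1}^t k^{-\theta}$ as a Riemann-type sum for the nonincreasing nonnegative function $g(x)=x^{-\theta}$ on $(0,\infty)$. The only structural fact I would invoke, over and over, is that monotonicity of $g$ gives, for every integer $k\geq1$,
\[
\int_k^{k+1} g(x)\,dx \;\leq\; g(k) \;\leq\; \int_{k-1}^{k} g(x)\,dx,
\]
where the upper comparison at $k=1$ is understood over the interval $[0,1]$. Everything else reduces to the elementary antiderivative $\int x^{-\theta}\,dx = x^{1-\theta}/(1-\theta)$, valid whenever $\theta\neq1$.

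For Part (a) I would handle the two bounds separately. Summing the right-hand comparison $g(k)\leq\int_{k-1}^k g(x)\,dx$ over $k=1,\ldots,t$ telescopes to $\int_0^t x^{-\theta}\,dx = t^{1-\theta}/(1-\theta)$, and the improper integral at the origin converges precisely because $\theta<1$; this is exactly the claimed upper bound. For the lower bound I would sum the left-hand comparison $g(k)\geq\int_k^{k+1} g(x)\,dx$ over the same range to obtain $\int_1^{t+1} x^{-\theta}\,dx = \big((t+1)^{1-\theta}-1\big)/(1-\theta)$, and then drop the shift using $(t+1)^{1-\theta}\geq t^{1-\theta}$ to conclude $\sum_{k=1}^t k^{-\theta}\geq (t^{1-\theta}-1)/(1-\theta)$.

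For Part (b), since $\theta>1$ the tail integral converges, so I would isolate the first term and bound the remaining sum by an integral:
\[
\sum_{k=1}^t k^{-\theta} = 1 + \sum_{k=2}^t k^{-\theta} \leq 1 + \int_1^t x^{-\theta}\,dx \leq 1 + \int_1^{\infty} x^{-\theta}\,dx = 1 + \frac{1}{\theta-1} = \frac{\theta}{\theta-1},
\]
where the middle step again uses $g(k)\leq\int_{k-1}^k g(x)\,dx$ summed over $k=2,\ldots,t$, and convergence of the improper integral uses $\theta>1$.

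There is no genuine obstacle here: the result is a textbook integral-test estimate and the main thing to watch is bookkeeping at the endpoint $k=1$ (choosing $[0,1]$ versus $[1,2]$ comparisons so that the two improper integrals converge in the correct regimes of $\theta$) and the direction of the monotonicity inequality in each bound.
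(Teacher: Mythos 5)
Your integral-comparison argument is correct in both parts: the endpoint bookkeeping at $k=1$ (using $[0,1]$ for the upper bound in Part (a), which converges precisely because $\theta<1$, and peeling off the first term in Part (b)) and the monotonicity directions are all handled properly. The paper states Lemma \ref{lem:series} without proof, treating it as an elementary fact, so there is nothing to compare against; your proof is the standard integral-test argument one would supply.
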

We denote by $\epsilon_{\stab}(A,n)$ the infimum over all $\epsilon$ for which \eqref{unif-stab} holds, and omit the tuple $(A,n)$ when it is clear from the context.
\begin{proof}[Proof of Theorem \ref{thm:error-convex}]
  For the step sizes considered in both Part (a) and Part (b), one can check that $\sum_{t=1}^{T}\eta_t^2$ can be upper bounded by a constant independent of $T$. Therefore, $C_t<C$ for all $t=1,\ldots,T$ and a universal constant $C$.
  We can apply Lemma \ref{lem:computation-ave} (Part (a)) on optimization errors to get
  \begin{equation}\label{computation-holder}
  \ebb_A[F_S(\bw_T^{(1)})]-F_S(\bw^*)= O\Big(\frac{\sum_{t=1}^{T}\eta_t^2+\|\bw^*\|_2^2}{\sum_{t=1}^{T}\eta_t}\Big).
  \end{equation}
  By the convexity of norm, we know
  \[
  \ebb_A\big[\|\bw_T^{(1)}-\tilde{\bw}_T^{(1)}\|_2\big]\leq \big(\sum_{t=1}^{T}\eta_t\big)^{-1}\sum_{t=1}^{T}\eta_t\ebb_A[\|\bw_t-\tilde{\bw}_t\|_2]=
  O\Big(\sum_{t=1}^{T}\frac{\eta_t}{n}+n^{-\frac{1}{2}}\big(\sum_{t=1}^{T}\eta_t^2\big)^{\frac{1}{2}}\Big),
  \]
  where we have applied Theorem \ref{thm:stab-convex} (the upper bound in Theorem \ref{thm:stab-convex} is an increasing function of $t$).
  It then follows from the Lipschitz continuity that $\epsilon_{\stab}=O\Big(\sum_{t=1}^{T}\frac{\eta_t}{n}+n^{-\frac{1}{2}}\big(\sum_{t=1}^{T}\eta_t^2\big)^{\frac{1}{2}}\Big)$.
  This together with the error decomposition \eqref{decomposition}, Lemma \ref{lem:gen-stab} and the optimization error bound \eqref{computation-holder} shows
  \begin{equation}\label{gen-convex-1}
  \ebb_{S,A}[F(\bw_T^{(1)})]-F(\bw^*)=O\Big(\sum_{t=1}^{T}\frac{\eta_t}{n}+n^{-\frac{1}{2}}\big(\sum_{t=1}^{T}\eta_t^2\big)^{\frac{1}{2}}\Big)+
  O\Big(\frac{\sum_{t=1}^{T}\eta_t^2+\|\bw^*\|_2^2}{\sum_{t=1}^{T}\eta_t}\Big).
  \end{equation}
  For the step sizes $\eta_t=\eta_1t^{-\theta}$ with $\theta\in(1/2,1)$, we can apply Lemma \ref{lem:series} to show
  \begin{align*}
  \ebb_{S,A}[F(\bw_T^{(1)})]-F(\bw^*)&=O\Big(\sum_{t=1}^{T}\frac{t^{-\theta}}{n}+n^{-\frac{1}{2}}\big(\sum_{t=1}^{T}t^{-2\theta}\big)^{\frac{1}{2}}\Big)+
  O\Big(\frac{\sum_{t=1}^{T}t^{-2\theta}+\|\bw^*\|_2^2}{\sum_{t=1}^{T}t^{-\theta}}\Big)\\
  & = O\Big(n^{-1}T^{1-\theta}+n^{-\frac{1}{2}}+T^{\theta-1}\Big).
  \end{align*}
  This proves the first part.

  Part (b) follows by plugging \eqref{cor-gen-lipschitz-1} into \eqref{gen-convex-1}.
  The proof is complete.
\end{proof}

\section{Proofs on Stability with Relaxed Strong Convexity\label{sec:proof-strong}}
\begin{proof}[Proof of Theorem \ref{thm:stable-strong}]
Due to the $\sigma_S$-strong convexity of $F_S$, we can analyze analogously to \eqref{stab-3} to derive
\[
\ebb_{i_t}\Big[\langle\bw_t-\tilde{\bw}_t,\partial f(\tilde{\bw}_t;z_{i_t})-\partial f(\bw_t;z_{i_t})\rangle\Big]
\leq \frac{2G\|\bw_t-\tilde{\bw}_t\|_2}{n}-\sigma_S\|\bw_t-\tilde{\bw}_t\|_2^2.
\]
Therefore, analogous to the derivation of \eqref{stab-4} we can derive
\begin{align*}
\ebb_A\big[\|\bw_{t+1}-\tilde{\bw}_{t+1}\|_2^2\big] &\leq (1+L^2\eta_t^2-2\sigma_S\eta_t)\ebb_A\big[\|\bw_t-\tilde{\bw}_t\|_2^2\big]+4G\Big(\frac{G\eta_t^2}{n}+\frac{\eta_t\ebb_A[\|\bw_t-\tilde{\bw}_t\|_2]}{n}\Big)\\
&\leq (1+L^2\eta_t^2-\frac{3}{2}\sigma_S\eta_t)\ebb_A\big[\|\bw_t-\tilde{\bw}_t\|_2^2\big]+\frac{4G^2\eta_t^2}{n}+\frac{8G^2\eta_t}{n^2\sigma_S},
\end{align*}
where we have used
\[
\frac{4G}{n}\ebb_A[\|\bw_t-\tilde{\bw}_t\|_2]\leq \frac{8G^2}{n^2\sigma_S}+\frac{\sigma_S\ebb_A[\|\bw_t-\tilde{\bw}_t\|_2^2]}{2}.
\]
We find $t_0\geq4L^2/\sigma_S^2$. Then $\eta_t\leq\sigma_S/(2L^2)$ and it follows that
\begin{align*}
\ebb_A\big[\|\bw_{t+1}-\tilde{\bw}_{t+1}\|_2^2\big] &\leq (1-\sigma_S\eta_t)\ebb_A\big[\|\bw_t-\tilde{\bw}_t\|_2^2\big]+\frac{4G^2\eta_t^2}{n}+\frac{8G^2\eta_t}{n^2\sigma_S}\\
& = \Big(1-\frac{2}{t+t_0}\Big)\ebb_A\big[\|\bw_t-\tilde{\bw}_t\|_2^2\big] + \frac{4G^2}{n}\Big(\eta_t^2+\frac{2\eta_t}{n\sigma_S}\Big).
\end{align*}
Multiplying both sides by $(t+t_0)(t+t_0-1)$ yields
\[
(t+t_0)(t+t_0-1)\ebb_A\big[\|\bw_{t+1}-\tilde{\bw}_{t+1}\|_2^2\big]\leq (t+t_0-1)(t+t_0-2)\ebb_A\big[\|\bw_t-\tilde{\bw}_t\|_2^2\big]+\frac{8G^2(t+t_0-1)}{n\sigma_S}\Big(\eta_t+\frac{2}{n\sigma_S}\Big).
\]
Taking a summation of the above inequality and using $\bw_1=\tilde{\bw}_1$ then give
\begin{align*}
  (t+t_0)(t+t_0-1)\ebb_A\big[\|\bw_{t+1}-\tilde{\bw}_{t+1}\|_2^2\big] & \leq \frac{8G^2}{n\sigma_S}\sum_{j=1}^{t}(j+t_0-1)\Big(\eta_j+\frac{2}{n\sigma_S}\Big) \\
   & = \frac{8G^2}{n\sigma_S}\Big(\sum_{j=1}^{t}(j+t_0-1)\eta_j+\frac{2}{n\sigma_S}\sum_{j=1}^{t}(j+t_0-1)\Big) \\
   & \leq \frac{8G^2}{n\sigma_S}\Big(\frac{2t}{\sigma_S}+\frac{t(t+2t_0-1)}{n\sigma_S}\Big).
\end{align*}
It then follows
\[
\ebb_A\big[\|\bw_{t+1}-\tilde{\bw}_{t+1}\|_2^2\big]\leq \frac{16G^2}{n\sigma_S^2}\Big(\frac{1}{t+t_0}+\frac{1}{n}\Big).
\]
The stated bound then follows from the elementary inequality $\sqrt{a+b}\leq\sqrt{a}+\sqrt{b}$ for $a,b\geq0$. The proof is complete.
\end{proof}

\begin{proof}[Proof of Theorem \ref{thm:error-strong}]
  By the convexity of norm, we know
  \begin{align*}
  \ebb_A\big[\|\bw_T^{(2)}-\tilde{\bw}_T^{(2)}\|_2\big] &\leq \big(\sum_{t=1}^{T}(t+t_0-1)\big)^{-1}\sum_{t=1}^{T}(t+t_0-1)\ebb_A[\|\bw_t-\tilde{\bw}_t\|_2]\\
  & \leq \frac{4G}{\sigma_S}\big(\sum_{t=1}^{T}(t+t_0-1)\big)^{-1}\sum_{t=1}^{T}(t+t_0-1)\Big(\frac{1}{\sqrt{n(t+t_0)}}+\frac{1}{n}\Big)\\
  & = O(\sigma_{S}^{-1}\big((nT)^{-\frac{1}{2}}+n^{-1}\big)),
  \end{align*}
  where we have used Lemma \ref{lem:series} in the last step.
  Since the above bound holds for all $S,\widetilde{S}$ differing by a single example, it follows that $\ell_1$ on-average model stability is bounded by $O(\ebb_S[\sigma_{S}^{-1}]\big((nT)^{-\frac{1}{2}}+n^{-1}\big))$.
  By Part (b) of Lemma \ref{lem:computation-ave} we know
  \[
    \ebb_A[F_S(\bw_T^{(2)})]-F_S(\bw^*)=O\big(1/(T\sigma_S)+\|\bw^*\|_2^2/T^2\big).
  \]
  It then follows from \eqref{decomposition} and Part (a) of Theorem \ref{thm:gen-model-stab} that
  \[
  \ebb_{S,A}[F(\bw_T^{(2)})]-F(\bw^*)=O(\ebb_S\big[\sigma_{S}^{-1}\big((nT)^{-\frac{1}{2}}+n^{-1}\big)\big])+O\big(\ebb_S\big[1/(T\sigma_S)\big]+1/T^2\big).
  \]
  The stated bound holds since $T\asymp n$.
  The proof is complete.
\end{proof}

\begin{proposition}\label{prop:span}
  Let $S=\{z_1,\ldots,z_n\}$ and $C_S=\frac{1}{n}\sum_{i=1}^{n}x_ix_i^\top$. Then the range of $C_S$ is the linear span of $\{x_1,\ldots,x_n\}$.
\end{proposition}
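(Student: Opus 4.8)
The plan is to establish the two inclusions between $\ran(C_S)$ and $V:=\mathrm{span}\{x_1,\ldots,x_n\}$ by exploiting that $C_S$ is symmetric and positive semidefinite. The key structural fact I will use is the standard orthogonal decomposition for symmetric matrices, namely $\ran(C_S)=\ker(C_S)^\perp$, which follows from the general identity $\ran(A)=\ker(A^\top)^\perp$ together with $C_S^\top=C_S$. Thus it suffices to identify the kernel of $C_S$ and then pass to orthogonal complements.

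First I would dispose of the easy inclusion $\ran(C_S)\subseteq V$ directly: for any $\bw\in\rbb^d$ we have $C_S\bw=\frac1n\sum_{i=1}^n x_i(x_i^\top\bw)=\frac1n\sum_{i=1}^n(x_i^\top\bw)x_i$, which is a linear combination of $x_1,\ldots,x_n$ and hence lies in $V$. (This inclusion also drops out of the kernel computation below, so it is somewhat redundant, but it is instructive.)

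Next I would compute $\ker(C_S)$. For $\bw\in\rbb^d$, positive semidefiniteness gives the chain
\[
C_S\bw=0\;\Longleftrightarrow\;\bw^\top C_S\bw=0\;\Longleftrightarrow\;\frac1n\sum_{i=1}^n(x_i^\top\bw)^2=0\;\Longleftrightarrow\;x_i^\top\bw=0\ \text{ for all }i,
\]
so that $\ker(C_S)=\{\bw:\langle x_i,\bw\rangle=0,\ \forall i\}=V^\perp$. Combining this with the decomposition above yields $\ran(C_S)=\ker(C_S)^\perp=(V^\perp)^\perp=V$, which is exactly the claim.

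The argument is almost entirely routine finite-dimensional linear algebra, so there is no serious obstacle; the only step needing a moment of care is the middle equivalence in the displayed chain, i.e. that $C_S\bw=0$ forces each $x_i^\top\bw=0$. This is precisely where positive semidefiniteness is essential: $\bw^\top C_S\bw$ is a nonnegative sum of squares, so it vanishes exactly when every summand vanishes, while the forward implication $C_S\bw=0\Rightarrow\bw^\top C_S\bw=0$ is immediate. Once the kernel is pinned down, the conclusion rests only on the double-complement identity $(V^\perp)^\perp=V$, valid since $V$ is a subspace of the finite-dimensional space $\rbb^d$.
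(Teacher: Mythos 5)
Your proposal is correct and follows essentially the same route as the paper's proof: both identify $\ker(C_S)=V^\perp$ via the sum-of-squares computation $\bw^\top C_S\bw=\frac1n\sum_{i=1}^n(x_i^\top\bw)^2$ and then pass to orthogonal complements using the symmetry of $C_S$. Your write-up merely makes explicit the identity $\ran(C_S)=\ker(C_S)^\perp$ and the easy inclusion $\ran(C_S)\subseteq V$, which the paper leaves implicit.
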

\begin{proof}
  It suffices to show that the kernel of $C_S$ is the orthogonal complement of $V=\text{span}\{x_1,\ldots,x_n\}$ (we denote $\text{span}\{x_1,\ldots,x_n\}$ the linear span of $x_1,\ldots,x_n$). Indeed, for any $x$ in the kernel of $C_S$, we know $C_Sx=0$ and therefore $x^\top C_Sx=\frac{1}{n}\sum_{i=1}^{n}(x_i^\top x)^2=0$, from which we know that $x$ must be orthogonal to $V$. Furthermore, for any $x$ orthogonal to $V$, it is clear that $C_Sx=0$, i.e., $x$ belongs to the kernel of $C_S$. The proof is complete.
\end{proof}

\section{Extensions\label{sec:extension}}
In this section, we present some extensions of our analyses. We consider three extensions: extension to stochastic proximal gradient descent, extension to high probability analysis and extension to SGD without replacement.

\subsection{Stochastic proximal gradient descent}
Our discussions can be directly extended to study the performance of stochastic proximal gradient descent (SPGD).  Let $r:\rbb^d\to\rbb^+$ be a convex regularizer. SPGD updates the models by
\[
\bw_{t+1}=\prox_{\eta_tr}\big(\bw_t-\eta_t\partial f(\bw_t;z_{i_t})\big),
\]
where $\prox_g(\bw)=\arg\min_{\tilde{\bw}\in\rbb^d}\big[g(\tilde{\bw})+\frac{1}{2}\|\bw-\tilde{\bw}\|_2^2\big]$ is the proximal operator. SPGD has found wide applications in solving optimization problems with a composite structure~\citep{parikh2014proximal}. It recovers the projected SGD as a specific case by taking an appropriate $r$. Our stability bounds for SGD can be trivially extend to SPGD due to the non-expansiveness of proximal operators: $\|\prox_g(\bw)-\prox_g(\tilde{\bw})\|_2\leq\|\bw-\tilde{\bw}\|_2,\forall \bw,\tilde{\bw}$ if $g$ is convex.

\subsection{Stability bounds with high probabilities}

We can also extend our stability bounds stated in expectation to high-probability bounds, which would be helpful to understand the fluctuation of SGD w.r.t. different realization of random indices.
\begin{proposition}\label{prop:high-probability}
  Let Assumption \ref{ass:lipschitz} hold. Assume for all $z\in\zcal$, the function $\bw\mapsto f(\bw;z)$ is convex and $\bw\mapsto \partial f(\bw;z)$ is $(\alpha,L)$-H\"older continuous with $\alpha\in[0,1]$.
  Let $S=\{z_1,\ldots,z_n\}$ and $\widetilde{S}=\{\tilde{z}_1,\ldots,\tilde{z}_n\}$ be two sets of training examples that differ by a single example.
  Let $\{\bw_t\}_t$ and $\{\tilde{\bw}_t\}_t$ be produced by \eqref{SGD} based on $S$ and $\widetilde{S}$, respectively, and $\delta\in(0,1)$. If we take step size $\eta_j=ct^{-\theta}$ for $j=1,\ldots,t$ and $c>0$, then with probability at least $1-\delta$
  \[
  \|\bw_{t+1}\!-\!\tilde{\bw}_{t+1}\|_2\!=\!O\Big(t^{1-\frac{\theta}{1-\alpha}}+n^{-1}t^{1-\theta}\Big(1\!+\!\sqrt{nt^{-1}\log(1/\delta)}\Big)\Big).
  \]
\end{proposition}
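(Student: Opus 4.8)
The plan is to track the distance $\Delta_t := \|\bw_t - \tilde{\bw}_t\|_2$ along a \emph{fixed} realization of the random indices $i_1,\ldots,i_t$, so that all the randomness collapses into a single binomial count to which a concentration inequality can be applied at the very end. Assume without loss of generality that $S$ and $\widetilde{S}$ differ in the first example. At step $t$ I would split according to the value of $i_t$. When $i_t \neq 1$ (a $(1-1/n)$-fraction of the coordinates), both iterates take a gradient step with the \emph{same} loss $f(\cdot;z_{i_t})$, so combining the non-expansiveness of the projection \eqref{proj-cont} with Lemma~\ref{lem:holder-expansive} gives $\Delta_{t+1}^2 \le \Delta_t^2 + c_{\alpha,3}^2\eta_t^{2/(1-\alpha)}$, hence $\Delta_{t+1}\le \Delta_t + c_{\alpha,3}\eta_t^{1/(1-\alpha)}$ via $\sqrt{a+b}\le\sqrt{a}+\sqrt{b}$ (for the boundary case $\alpha=1$ I would instead invoke the non-expansiveness in Lemma~\ref{lem:nonexpansive}, so that this extra term simply vanishes). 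When $i_t = 1$, the two iterates use different losses, and sub-additivity of $\|\cdot\|_2$ together with Assumption~\ref{ass:lipschitz} yields $\Delta_{t+1}\le \Delta_t + 2G\eta_t$.

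Both cases are absorbed into the single pointwise recursion (valid for \emph{every} realization of $\{i_j\}$, since the H\"older term is nonnegative and the indicator switches on exactly the terms needing $G$)
\[
\Delta_{t+1}\le \Delta_t + c_{\alpha,3}\eta_t^{\frac{1}{1-\alpha}} + 2G\eta_t\,\ibb_{[i_t=1]}.
\]
Unrolling from $\Delta_1 = 0$ gives
\[
\Delta_{t+1}\le c_{\alpha,3}\sum_{j=1}^{t}\eta_j^{\frac{1}{1-\alpha}} + 2G\sum_{j=1}^{t}\eta_j\,\ibb_{[i_j=1]}.
\]
Since the step size is the constant $\eta_j = ct^{-\theta}$, the first (deterministic) sum is $O(t\cdot t^{-\theta/(1-\alpha)}) = O(t^{1-\theta/(1-\alpha)})$, which is the first term of the claimed bound, while the second sum equals $2Gct^{-\theta}N_t$ with $N_t := \sum_{j=1}^{t}\ibb_{[i_j=1]}$.

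Here $N_t$ is a sum of $t$ independent $\mathrm{Bernoulli}(1/n)$ variables, i.e. $\mathrm{Binomial}(t,1/n)$ with mean $t/n$ and variance at most $t/n$. Applying a Bernstein (or Chernoff) tail bound, with probability at least $1-\delta$ one has $N_t \le t/n + O\big(\sqrt{(t/n)\log(1/\delta)}\big)$, the additive Bernstein $\log(1/\delta)$ term being lower order in the regime $t\gtrsim n$. Substituting back, the random contribution is
\[
O\Big(t^{-\theta}\big(\tfrac{t}{n} + \sqrt{\tfrac{t}{n}\log(1/\delta)}\big)\Big)
= O\Big(\tfrac{t^{1-\theta}}{n} + n^{-\frac12}t^{\frac12-\theta}\sqrt{\log(1/\delta)}\Big)
= O\Big(n^{-1}t^{1-\theta}\big(1+\sqrt{nt^{-1}\log(1/\delta)}\big)\Big),
\]
which combines with the deterministic term to give the stated bound.

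The main obstacle I anticipate is the step that reconciles the \emph{squared} recursion produced by the H\"older expansiveness estimate (case $i_t\neq1$) with the \emph{linear} recursion from the bounded-gradient estimate (case $i_t=1$): one must verify that the single combined inequality above is a genuine upper bound in both cases before unrolling, as this is precisely what makes the randomness collapse to the clean count $N_t$. Beyond that, the only real care needed is in choosing the concentration inequality so that its deviation term reproduces exactly the $\sqrt{nt^{-1}\log(1/\delta)}$ factor, and in confirming that the lower-order Bernstein term is dominated in the relevant range of $t$ and $n$.
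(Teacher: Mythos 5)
Your proposal is correct and follows essentially the same route as the paper's own proof: the identical case split on $i_t=1$ versus $i_t\neq1$ using Lemma~\ref{lem:holder-expansive} and Assumption~\ref{ass:lipschitz}, the same combined pointwise recursion with the indicator $\ibb_{[i_t=1]}$, the same unrolling from $\bw_1=\tilde{\bw}_1$, and the same multiplicative Chernoff bound on the binomial count $\sum_{j=1}^t\ibb_{[i_j=1]}$. Your explicit handling of the $\alpha=1$ boundary case via Lemma~\ref{lem:nonexpansive} is a small refinement the paper omits, but otherwise the arguments coincide.
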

High-probability generalization bounds can be derived by combining the above stability bounds and the recent result on relating generalization and stability in a high-probability analysis~\citep{feldman2019high,bousquet2019sharper}.

To prove Proposition \ref{prop:high-probability}, we need to introduce a special concentration inequality called Chernoff's fs bound for a summation of independent Bernoulli random variables~\citep{boucheron2013concentration}.
\begin{lemma}[Chernoff's Bound\label{lem:chernoff}]
  Let $X_1,\ldots,X_t$ be independent random variables taking values in $\{0,1\}$. Let $X=\sum_{j=1}^{t}X_j$ and $\mu=\ebb[X]$. Then for any $\tilde{\delta}\in(0,1)$ with probability at least $1-\exp\big(-\mu\tilde{\delta}^2/3\big)$ we have $X\leq (1+\tilde{\delta})\mu$.
\end{lemma}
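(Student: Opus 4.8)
The plan is to establish this multiplicative upper-tail bound by the classical exponential-moment (Chernoff) method, reducing everything to the moment generating function of $X$ and then closing with one elementary calculus estimate. First I would fix an arbitrary $\lambda>0$ and apply Markov's inequality to the nonnegative random variable $e^{\lambda X}$, which gives
\[
\mathrm{Pr}\big[X\geq (1+\tilde{\delta})\mu\big]=\mathrm{Pr}\big[e^{\lambda X}\geq e^{\lambda(1+\tilde{\delta})\mu}\big]\leq e^{-\lambda(1+\tilde{\delta})\mu}\,\ebb\big[e^{\lambda X}\big].
\]
This reduces the statement to controlling $\ebb[e^{\lambda X}]$ uniformly in $\lambda$.

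Second, I would exploit the independence of the $X_j$ to factorize $\ebb[e^{\lambda X}]=\prod_{j=1}^{t}\ebb[e^{\lambda X_j}]$. Writing $p_j=\ebb[X_j]=\mathrm{Pr}[X_j=1]$, each factor is exactly $1+p_j(e^\lambda-1)$, and the elementary bound $1+x\leq e^x$ (applied with $x=p_j(e^\lambda-1)\geq0$) yields $\ebb[e^{\lambda X_j}]\leq e^{p_j(e^\lambda-1)}$. Multiplying over $j$ and using $\sum_{j=1}^{t}p_j=\mu$ produces $\ebb[e^{\lambda X}]\leq e^{(e^\lambda-1)\mu}$, so that
\[
\mathrm{Pr}\big[X\geq(1+\tilde{\delta})\mu\big]\leq \exp\Big(\mu\big(e^\lambda-1-\lambda(1+\tilde{\delta})\big)\Big).
\]

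Third, I would optimize the exponent over $\lambda>0$. Differentiating the bracket in $\lambda$ shows the minimizer is $\lambda=\ln(1+\tilde{\delta})$, which is strictly positive because $\tilde{\delta}>0$; substituting it back gives the classical form
\[
\mathrm{Pr}\big[X\geq(1+\tilde{\delta})\mu\big]\leq \Big(\frac{e^{\tilde{\delta}}}{(1+\tilde{\delta})^{1+\tilde{\delta}}}\Big)^{\mu}.
\]

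Finally, and this is the only genuinely technical point, I would prove the elementary inequality $\frac{e^{\tilde{\delta}}}{(1+\tilde{\delta})^{1+\tilde{\delta}}}\leq e^{-\tilde{\delta}^2/3}$ for $\tilde{\delta}\in(0,1)$, equivalently $g(\tilde{\delta}):=(1+\tilde{\delta})\ln(1+\tilde{\delta})-\tilde{\delta}-\tilde{\delta}^2/3\geq0$. I would check $g(0)=0$, compute $g'(\tilde{\delta})=\ln(1+\tilde{\delta})-2\tilde{\delta}/3$ with $g'(0)=0$, and examine $g''(\tilde{\delta})=(1+\tilde{\delta})^{-1}-2/3$, which is positive on $(0,1/2)$ and negative on $(1/2,1)$. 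Thus $g'$ first rises from $0$ and then falls, yet it remains strictly positive on $(0,1)$ since $g'(1)=\ln 2-2/3>0$; hence $g$ is increasing on $[0,1)$ and so $g\geq0$ there. Combined with the previous display this gives $\mathrm{Pr}[X\geq(1+\tilde{\delta})\mu]\leq e^{-\mu\tilde{\delta}^2/3}$, equivalently that $X\leq(1+\tilde{\delta})\mu$ holds with probability at least $1-e^{-\mu\tilde{\delta}^2/3}$. The main obstacle is precisely this last monotonicity argument: because $g''$ changes sign one cannot invoke convexity of $g$ directly, and the conclusion hinges on the nonobvious fact that the boundary derivative $g'(1)$ stays positive, preventing $g'$ from dipping below zero on $(1/2,1)$.
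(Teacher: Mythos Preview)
Your proof is correct and follows the standard exponential-moment approach to the multiplicative Chernoff bound; every step checks out, including the endpoint computation $g'(1)=\ln 2-2/3>0$ that keeps $g'$ nonnegative on $(0,1)$.

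The paper, however, does not supply its own proof of this lemma: it simply quotes the bound as a known concentration inequality and cites \citet{boucheron2013concentration}. So there is nothing to compare against at the level of argument structure; your write-up is a fully self-contained derivation where the paper merely invokes a reference. One small remark: your calculus argument is already tight enough, but if you want to shorten the last step you can alternatively use the inequality $\ln(1+u)\geq \frac{2u}{2+u}$ for $u\geq0$, which after multiplication by $1+u$ gives $(1+u)\ln(1+u)-u\geq \frac{u^2}{2+u}\geq \frac{u^2}{3}$ directly for $u\in(0,1]$, avoiding the second-derivative sign analysis.
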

\begin{proof}[Proof of Proposition \ref{prop:high-probability}]
Without loss of generality, we can assume that $S$ and $\widetilde{S}$ differ by the first example, i.e., $z_1\neq\tilde{z}_1$ and $z_i=\tilde{z}_i$ for $i\neq1$.
If $i_t\neq 1$, we can apply Lemma \ref{lem:holder-expansive} and \eqref{proj-cont} to derive
\begin{align*}
    \|\bw_{t+1}-\tilde{\bw}_{t+1}\|_2 & \leq \|\bw_t-\eta_t\partial f(\bw_t;z_{i_t})-\tilde{\bw}_t+\eta_t\partial f(\tilde{\bw}_t;z_{i_t})\|_2 \\
     & \leq \|\bw_t-\tilde{\bw}_t\|_2 + c_{\alpha,3}\eta_t^{\frac{1}{1-\alpha}}.
  \end{align*}
If $i_t=1$, we know
  \begin{align*}
    \|\bw_{t+1}-\tilde{\bw}_{t+1}\|_2 & \leq \|\bw_t-\eta_t\partial f(\bw_t;z_1)-\tilde{\bw}_t+\eta_t\partial f(\tilde{\bw}_t;\tilde{z}_1)\|_2 \\
     & \leq \|\bw_t-\tilde{\bw}_t\|_2 + 2\eta_tG.
\end{align*}
Combining the above two cases together, we derive
\[
\|\bw_{t+1}-\tilde{\bw}_{t+1}\|_2
      \leq \|\bw_t-\tilde{\bw}_t\|_2 + c_{\alpha,3}\eta_t^{\frac{1}{1-\alpha}}+2\eta_tG\ibb_{[i_t=1]}.
\]
Taking a summation of the above inequality then yields
\[
\|\bw_{t+1}-\tilde{\bw}_{t+1}\|_2 \leq
c_{\alpha,3}\sum_{j=1}^{t}\eta_j^{\frac{1}{1-\alpha}}+2G\sum_{j=1}^{t}\eta_j\ibb_{[i_j=1]}.
\]
Applying Lemma \ref{lem:chernoff} with $X_j=\ibb_{[i_j=1]}$ and $\mu=t/n$ (note $\ebb_A[X_j]=1/n$), with probability $1-\delta$ there holds
\[
\sum_{j=1}^{t}\ibb_{[i_j=1]}\leq \frac{t}{n}\big(1+\sqrt{3nt^{-1}\log(1/\delta)}\big).
\]
Therefore, for the step size $\eta_j=ct^{-\theta},j=1,\ldots,t$, we know
\[
\|\bw_{t+1}-\tilde{\bw}_{t+1}\|_2\leq c_{\alpha,3}c^{\frac{1}{1-\alpha}}t^{1-\frac{\theta}{1-\alpha}}
+2Gcn^{-1}
\big(1+\sqrt{3nt^{-1}\log(1/\delta)}\big)t^{1-\theta}.
\]
The proof is complete.
\end{proof}

\subsection{SGD without replacement}

Our stability bounds can be further extended to SGD without replacement. In this case, we run SGD in epochs. For the $k$-th epoch, we start with a model $\bw^k_1\in\rbb^d$, and draw an index sequence $(i_1^k,\ldots,i_n^k)$ from the uniform distribution over all permutations of $\{1,\ldots,n\}$. Then we update the model by
\begin{equation}\label{SGD-replacement}
  \bw^k_{t+1}=\bw^k_t-\eta^k_t\partial f(\bw^k_t;z_{i_t^k}),\quad t=1,\ldots,n,
\end{equation}
where $\{\eta_t^k\}$ is the step size sequence.
We set $\bw^{k+1}_1=\bw^k_{n+1}$, i.e., each epoch starts with the last iterate of the previous epoch.
The following proposition establishes stability bounds for SGD without replacement when applied to loss functions with H\"older continuous (sub)gradients.
\begin{proposition}\label{prop:stab-replacement}
  Suppose assumptions of Proposition \ref{prop:high-probability} hold.
  Let $\{\bw_t\}_t$ and $\{\tilde{\bw}_t\}_t$ be produced by \eqref{SGD-replacement} based on $S$ and $\widetilde{S}$, respectively.  Then
  \[
    \ebb_A[\|\bw_{1}^{K+1}-\tilde{\bw}_1^{K+1}\|_2]\leq\frac{2G}{n}\sum_{k=1}^{K}\sum_{t=1}^{n}\eta^{k}_t+c_{\alpha,3}\sum_{k=1}^{K}\sum_{t=1}^{n}(\eta_t^k)^{\frac{1}{1-\alpha}}.
  \]
\end{proposition}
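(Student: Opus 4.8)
The plan is to re-run the single-step perturbation analysis from the proof of Proposition~\ref{prop:high-probability}, now carried out \emph{within} each epoch, and to invoke the permutation structure only at the very end when passing to expectations. Without loss of generality assume $S$ and $\widetilde{S}$ differ in the first coordinate, so $z_1\neq\tilde{z}_1$ and $z_i=\tilde{z}_i$ for $i\neq 1$. Fix an epoch $k$ and a within-epoch step $t\in\{1,\ldots,n\}$. If $i_t^k\neq 1$, both iterates are updated with the same example $z_{i_t^k}$, so Lemma~\ref{lem:holder-expansive} together with $\sqrt{a+b}\leq\sqrt{a}+\sqrt{b}$ gives $\|\bw_{t+1}^k-\tilde{\bw}_{t+1}^k\|_2\leq\|\bw_{t}^k-\tilde{\bw}_{t}^k\|_2+c_{\alpha,3}(\eta_t^k)^{\frac{1}{1-\alpha}}$. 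If instead $i_t^k=1$, the two updates use the differing examples $z_1$ and $\tilde{z}_1$, and subadditivity of $\|\cdot\|_2$ with Assumption~\ref{ass:lipschitz} yields $\|\bw_{t+1}^k-\tilde{\bw}_{t+1}^k\|_2\leq\|\bw_{t}^k-\tilde{\bw}_{t}^k\|_2+2G\eta_t^k$. Since the Hölder term is nonnegative and the indicator vanishes when $i_t^k\neq 1$, both cases are absorbed into the single recursion
\[
\|\bw_{t+1}^k-\tilde{\bw}_{t+1}^k\|_2\leq\|\bw_{t}^k-\tilde{\bw}_{t}^k\|_2+c_{\alpha,3}(\eta_t^k)^{\frac{1}{1-\alpha}}+2G\eta_t^k\ibb_{[i_t^k=1]}.
\]

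Next I would telescope. Summing this recursion over $t=1,\ldots,n$ within epoch $k$ and using $\bw_1^{k+1}=\bw_{n+1}^k$ (and likewise for the tilde sequence) gives
\[
\|\bw_1^{k+1}-\tilde{\bw}_1^{k+1}\|_2\leq\|\bw_1^{k}-\tilde{\bw}_1^{k}\|_2+c_{\alpha,3}\sum_{t=1}^{n}(\eta_t^k)^{\frac{1}{1-\alpha}}+2G\sum_{t=1}^{n}\eta_t^k\ibb_{[i_t^k=1]}.
\]
Summing across epochs $k=1,\ldots,K$ and using the common initialization $\bw_1^1=\tilde{\bw}_1^1$, so that the boundary terms cancel, leaves the deterministic bound
\[
\|\bw_1^{K+1}-\tilde{\bw}_1^{K+1}\|_2\leq c_{\alpha,3}\sum_{k=1}^{K}\sum_{t=1}^{n}(\eta_t^k)^{\frac{1}{1-\alpha}}+2G\sum_{k=1}^{K}\sum_{t=1}^{n}\eta_t^k\ibb_{[i_t^k=1]},
\]
which holds for every realization of the sampled permutations.

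Finally I would take $\ebb_A$ over the random permutations. The only surviving randomness sits in the indicators, and this is the one place the without-replacement structure enters: within epoch $k$ the index $1$ occupies exactly one of the $n$ positions, and since $(i_1^k,\ldots,i_n^k)$ is a uniformly random permutation, each position is equally likely, so $\ebb_A[\ibb_{[i_t^k=1]}]=\Pr[i_t^k=1]=1/n$ for every $t$. By linearity of expectation this replaces $\sum_{t=1}^{n}\eta_t^k\ibb_{[i_t^k=1]}$ by $\frac{1}{n}\sum_{t=1}^{n}\eta_t^k$, yielding exactly the claimed bound. The only genuinely non-routine point — and hence the main obstacle, though it is conceptual rather than computational — is recognizing that the deterministic bound depends on the sampling solely through the \emph{marginal} law of each individual draw, which is uniform for sampling both with and without replacement; consequently the expected stability coincides with the with-replacement case, and no argument about the joint distribution of the constrained indices $\{i_t^k\}$ is required.
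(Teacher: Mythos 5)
Your proposal is correct and follows essentially the same route as the paper's proof: the same per-step recursion from Lemma \ref{lem:holder-expansive} and Assumption \ref{ass:lipschitz}, telescoped within and across epochs, with the expectation handled via the uniform marginal $\Pr[i_t^k=1]=1/n$ of the random permutation. The only cosmetic differences are that you drop the indicator $\ibb_{[i_t^k\neq 1]}$ on the H\"older term from the outset and defer all expectations to the very end, whereas the paper takes expectations epoch by epoch via the uniformly distributed position of the index $1$; these are equivalent.
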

\begin{proof}
  Without loss of generality, we can assume that $S$ and $\widetilde{S}$ differ by the first example, i.e., $z_1\neq\tilde{z}_1$ and $z_i=\tilde{z}_i$ for $i\neq1$.
  Analogous to the proof of Proposition \ref{prop:high-probability}, we derive the following inequality for all $k\in\nbb$ and $t=1,\ldots,n$
  \[
  \|\bw^k_{t+1}-\tilde{\bw}^k_{t+1}\|_2
     \leq \|\bw_t^k-\tilde{\bw}_t^k\|_2 + c_{\alpha,3}(\eta_t^k)^{\frac{1}{1-\alpha}}\ibb_{[i_t^k\neq1]}+2\eta^k_tG\ibb_{[i_t^k=1]}.
  \]
  Taking a summation of the above inequality from $t=1$ to $n$ gives
  \[
  \|\bw^k_{n+1}-\tilde{\bw}^k_{n+1}\|_2
      \leq \|\bw_1^k-\tilde{\bw}_1^k\|_2 + c_{\alpha,3}\sum_{t=1}^{n}(\eta_t^k)^{\frac{1}{1-\alpha}}\ibb_{[i_t^k\neq1]}+2G\sum_{t=1}^{n}\eta^k_t\ibb_{[i_t^k=1]}.
  \]
  Let $i^k$ be the unique $t\in\{1,\ldots,n\}$ such that $i_t^k=1$. Since $\bw^{k+1}_1=\bw^k_{n+1}$, we derive
  \[
  \|\bw^{k+1}_{1}-\tilde{\bw}^{k+1}_{1}\|_2
      \leq \|\bw_1^k-\tilde{\bw}_1^k\|_2 + c_{\alpha,3}\sum_{t=1}^{n}(\eta_t^k)^{\frac{1}{1-\alpha}}+2G\eta^k_{i^k}.
  \]
  Since we draw $(i^k_1,\ldots,i^k_n)$ from the uniform distribution of all permutations, $i^k$ takes an equal probability to each $1,\ldots,n$. Therefore, we can take expectations over $A$ to derive
  \[
  \ebb_A\big[\|\bw^{k+1}_{1}-\tilde{\bw}^{k+1}_{1}\|_2\big]
      \leq \ebb_A\big[\|\bw_1^k-\tilde{\bw}_1^k\|_2\big] + c_{\alpha,3}\sum_{t=1}^{n}(\eta_t^k)^{\frac{1}{1-\alpha}}+\frac{2G\sum_{t=1}^{n}\eta_t^k}{n}.
  \]
  We can take a summation of the above inequality from $k=1$ to $K$ to derive the stated bound. The proof is complete.
\end{proof}

\setlength{\bibsep}{0.06cm}
\bibliographystyle{icml2020}


\end{document}